\documentclass{article}

\PassOptionsToPackage{numbers, sort&compress}{natbib}


\usepackage[final]{neurips_2023}




\usepackage[utf8]{inputenc} 
\usepackage[T1]{fontenc}    
\usepackage{hyperref}       
\usepackage{url}            
\usepackage{booktabs}       
\usepackage{amsfonts}       
\usepackage{nicefrac}       
\usepackage{microtype}      
\usepackage{xcolor}         
\usepackage{wrapfig}

\usepackage{amssymb,amsmath,amsthm,enumitem}
\usepackage{sectsty}
\usepackage{authblk}
\usepackage{amsmath}
\usepackage{amssymb}
\usepackage{mathtools}
\usepackage{graphicx}
\usepackage{bm}
\usepackage{bbm}
\usepackage{subcaption}
\usepackage{sidecap}
\sidecaptionvpos{figure}{t}
\usepackage{floatrow}

\usepackage{enumitem}
\setitemize{itemsep=0pt,topsep=0pt,parsep=0pt,partopsep=0pt,leftmargin=*}

\bibliographystyle{unsrt}

\DeclareMathOperator{\tr}{tr}

\DeclareMathOperator*{\argmin}{arg\,min}

\DeclareMathOperator*{\Dir}{Dir}

\newtheorem{proposition}{Proposition}

\theoremstyle{remark}
\newtheorem{remark}{Remark}

\title{Learning Curves for Noisy Heterogeneous Feature-Subsampled Ridge Ensembles}

%

\author{
  \textbf{Benjamin S. Ruben}$^{1}$,\hspace{5pt} \textbf{Cengiz Pehlevan}$^{2,3,4}$ \\
  \textsuperscript{1}Biophysics Graduate Program\\
  \textsuperscript{2}Center for Brain Science,  \textsuperscript{3}John A. Paulson School of Engineering and Applied Sciences, 
  \textsuperscript{4}Kempner Institute for the Study of Natural and Artificial Intelligence,\\
  Harvard University\\
  Cambridge, MA 02138 \\ 
  \texttt{benruben@g.harvard.edu}, \texttt{cpehlevan@seas.harvard.edu} 
}

\begin{document}
\maketitle
\begin{abstract} 
Feature bagging is a well-established ensembling method which aims to reduce prediction variance by combining predictions of many estimators trained on subsets or projections of features. Here, we develop a theory of feature-bagging in noisy least-squares ridge ensembles and simplify the resulting learning curves in the special case of equicorrelated data.  Using analytical learning curves, we demonstrate that subsampling shifts the double-descent peak of a linear predictor.  This leads us to introduce heterogeneous feature ensembling, with estimators built on varying numbers of feature dimensions, as a computationally efficient method to mitigate double-descent. Then, we compare the performance of a feature-subsampling ensemble to a single linear predictor, describing a trade-off between noise amplification due to subsampling and noise reduction due to ensembling.  Our qualitative insights carry over to linear classifiers applied to image classification tasks with realistic datasets constructed using a state-of-the-art deep learning feature map.\looseness=-1
\end{abstract}

\section{Introduction}
Ensembling methods are ubiquitous in machine learning practice \cite{kunapuli2023ensemble}. A class of ensembling methods (known as attribute bagging \cite{bryll2003attribute} or the random subspace method \cite{ho1998random}) is based on feature subsampling \cite{ho1998random,bryll2003attribute,amit1997shape,louppe2012ensembles, Skurichina2002}, where predictors are independently trained on subsets of the features, and their predictions are combined to achieve a stronger prediction. The random forest method is a popular example  \cite{ho1995random,ho1998random}. 

While commonly used in practice, a theoretical understanding of ensembling via feature subsampling is not well developed. Here, we provide an analysis of this technique in the linear ridge regression setting.  Using methods from statistical physics \cite{mezard1987spin, Mezard1986, seung1992statistical,  engel2001statistical, bahri2020statistical}, we obtain analytical expressions for typical-case generalization error in linear ridge ensembles (proposition \ref{Proposition1}), and simplify these expressions in the special case of equicorrelated data with isotropic feature noise (proposition \ref{EquiCorrProp}).  The result provides a powerful tool to quickly probe the generalization error of ensembled regression under a rich set of conditions.  In section \ref{SubsampSection}, we study the behavior of a single feature-subsampling regression model.  We observe that subsampling shifts the location of a predictor's sample-wise double-descent peak \cite{nakkiran2019more, canatar2021spectral, yilmaz2022regularization}.  This motivates section \ref{HeterogeneousMainText}, where we study ensembles built on predictors which are heterogeneous in the number of features they access, as a method to mitigate double-descent.  We demonstrate this method's efficacy in a realistic image classification task.  In section \ref{ESTradeoffSection} we apply our theory to the trade-off between ensembling and subsampling in resource-constrained settings.  We characterize how a variety of factors influence the optimal ensembling strategy, finding a particular significance to the level of noise in the predictions made by ensemble members.  

In summary, we make the following contributions:
\begin{itemize}
    \item Using the replica trick from statistical physics \cite{mezard1987spin,engel2001statistical}, we derive the generalization error of ensembled least-squares ridge regression in a general setting, and simplify the resulting expressions in the tractable special case where features are equicorrelated.
    \item We demonstrate benefits of heterogeneous ensembling as a robust and computationally efficient regularizer for mitigating double-descent with analytical theory and in a realistic image classification task.
    \item We describe the ensembling-subsampling trade-off in resource-constrained settings, and characterize the effect of label noise, feature noise, readout noise, regularization, sample size and task structure on the optimal ensembling strategy.
\end{itemize}

\noindent\textbf{Related works:} A substantial body of work has elucidated the behavior of linear predictors for a variety of feature maps \cite{hastie2022surprises, hastie2009elements, rocks2022bias, rocks2022memorizing, mei2019generalization,bartlett2020benign, belkin2019reconciling, hong2022universality, DAscoli2020, adlam2020understanding, ContrastRandLearnedFeatures, bordelon2020spectrum, canatar2021spectral, simon2022eigenlearning, bach2023highdimensional, zavatoneveth2023DeepStructuredGauss}.  Several recent works have extended this research to characterize the behavior of ensembled regression using solvable models \cite{LoureiroEnsembles, DAscoli2020, lejeune_subsampling, patil2023asymptotically}.  Additional recent works study the performance of ridge ensembles with example-wise subsampling \cite{du2023subsample, patil2022bagging} and simultaneous subsampling of features and examples \cite{lejeune_subsampling}, finding that subsampling behaves as an implicit regularization.  Methods from statistical physics have long been used for machine learning theory \cite{seung1992statistical,engel2001statistical,bahri2020statistical, zavatoneveth2023DeepStructuredGauss, ContrastRandLearnedFeatures, atanasov2023the, bordelon2020spectrum, canatar2021out}. Relevant work in this domain include \cite{sollich1995learning} which studied ensembling by data-subsampling in linear regression.

\section{Learning Curves for Ensembled Ridge Regression}

We consider noisy ensembled ridge regression in the setting where ensemble members are trained independently on masked versions of the available features.  We derive our main analytical formula for generalization error of ensembled linear regression, as well as analytical expressions for generalization error in the special case of equicorrelated features with isotropic noise.

\subsection{Problem Setup}\label{Setup}

Consider a training set $\mathcal{D} = \{ \bar{\bm{\psi}}^\mu, y^\mu\}_{\mu = 1}^P$ of size $P$.  The training examples $\bar{\bm{\psi}}^\mu \in \mathbb{R}^M$ are drawn from a Gaussian distribution with Gaussian feature noise: $\bar{\bm{\psi}}^\mu = \bm{\psi}^\mu + \bm{\sigma}^\mu$, where $\bm{\psi}^\mu \sim \mathcal{N}(0, \bm{\Sigma}_s)$ and $\bm{\sigma}^\mu \sim \mathcal{N}(0, \bm{\Sigma}_0)$.  Data and noise are drawn i.i.d. so that $\mathbb{E} \left[ \bm{\psi}^\mu \bm{\psi}^{\nu\top} \right] = \delta_{\mu \nu} \bm{\Sigma}_s$ and $\mathbb{E} \left[ \bm{\sigma}^\mu \bm{\sigma}^{\nu \top} \right] = \delta_{\mu \nu} \bm{\Sigma}_0$.  Labels are generated from a noisy teacher function $y^\mu  = \frac{1}{\sqrt{M}}\bm{w}^{*\top} \bm{\psi}^\mu + \epsilon^\mu$ where $\epsilon^\mu \sim \mathcal{N}(0,\zeta^2)$.  Label noises are drawn i.i.d. so that $\mathbb{E}[\epsilon^\mu \epsilon^\nu] = \delta_{\mu \nu}\zeta^2$.

We seek to analyze the quality of predictions which are averaged over an ensemble of ridge regression models, each with access to a subset of the features.  We consider $k$ linear predictors with weights $\hat{\bm{w}}_r \in \mathbb{R}^{N_r}$, $r = 1,\dots, k$.  Critically, we allow $N_r \neq N_{r'}$ for $r \neq r'$, which allows us to introduce \textit{structural} heterogeneity into the ensemble of predictors.  A forward pass of the model is given as:
\begin{align}
    f(\bm{\psi}) =  \frac{1}{k} \sum_{r=1}^k  f_r(\bm{\psi}), \qquad f_r(\bm{\psi}) = \frac{1}{\sqrt{N_r}}\hat{\bm{w}}_r^\top \bm{A}_r (\bm{\psi} + \bm{\sigma})  + \xi_r.
\end{align}
The model's prediction $f(\bm{\psi})$ is an average over $k$ linear predictors.  The ``measurement matrices'' $\bm{A}_r \in \mathbb{R}^{N_r \times M}$ act as linear masks restricting the information about the features available to each member of the ensemble.  Subsampling may be implemented by choosing the rows of each $A_r$ to coincide with the rows of the identity matrix -- the row indices corresponding to indices of the sampled features. The feature noise $\bm{\sigma} \sim \mathcal{N}(0, \bm{\Sigma}_0)$ and the readout noises $\xi_r \sim \mathcal{N}(0,\eta_r^2)$, are drawn independently at the execution of each forward pass of the model.  Note that while the feature noise is shared across the ensemble, readout noise is drawn independently for each readout: $\mathbb{E}[\xi_r\xi_{r'}] = \delta_{rr'}\eta_r^2$. 

The weight vectors are trained separately in order to minimize an ordinary least-squares loss function with ridge regularization:
\begin{align}
     \hat{\bm{w}}_r &= \argmin_{\bm{w}_r \in \mathbb{R}^{N_r}}\left[ \sum_{\mu = 1}^P \left( \frac{1}{\sqrt{N_r}} {\bm{w}}_r^\top \bm{A}_r\bar{\bm{\psi}}^\mu + \xi_r^\mu - y^\mu \right)^2 + \lambda_r |\bm{w}_r^2|\right]
\end{align}

Here $\{\xi_r^\mu\}$ represents the readout noise which is present during training, and independently drawn: $\xi_r^\mu \sim \mathcal{N}(0, \eta_r^2)$, $\mathbb{E}[\xi_r^\mu \xi_r^\nu] = \eta_r^2 \delta_{\mu \nu}$. As a measure of model performance, we consider the generalization error, given by the mean-squared-error (MSE) on ensemble-averaged prediction:
\begin{equation}
    E_g (\mathcal{D})=   \mathbb{E}_{\psi,\sigma,\{\xi_r\}} \left[ \left(f(\bm{\psi}) -  \frac{1}{\sqrt{M}} \bm{w}^{*\top} \bm{\psi} \right)^2  \right]
\end{equation}
Here, the expectation is over the data distribution and noise: $\bm{\psi} \sim \mathcal{N}(0, \bm{\Sigma}_s)$, $\bm{\sigma} \sim \mathcal{N}(0, \bm{\Sigma}_0)$, $\xi_r \sim \mathcal{N}(0, \eta_r^2)$.  The generalization error depends on the particular realization of the dataset $\mathcal{D}$ through the learned weights $\{\hat{\bm{w}}^*\}$.  We may decompose the generalization error as follows:
\begin{align}
       E_g(\mathcal{D}) &= \frac{1}{k^2} \sum_{r, r' = 1}^k E_{r r'}(\mathcal{D})\\
       \begin{split} \label{SingleErrorTerm}
       E_{r r'}(\mathcal{D}) &\equiv \frac{1}{M} \left[ \left(\frac{1}{\sqrt{\nu_{rr}}}\bm{A}_{r}^\top \hat{\bm{w}}_{r} - \bm{w}^{*}\right)^\top \bm{\Sigma}_s \left(\frac{1}{\sqrt{\nu_{r'r'}}}\bm{A}_{r'}^\top \hat{\bm{w}}_{r'}- \bm{w}^*\right) \right.  \\
        & \qquad \qquad \left. + \frac{1}{\sqrt{\nu_{rr} \nu_{r'r'}}} \hat{\bm{w}}_r^\top \bm{A}_r \bm{\Sigma}_0 \bm{A}_{r'}^\top \hat{\bm{w}}_{r'}  + M \delta_{rr'} \eta_r^2 \right]
    \end{split}
\end{align}

Computing the generalization error of the model is then a matter of calculating $E_{rr'}$ in the cases where $r = r'$ and $r \neq r'$.  In the asymptotic limit we consider, we expect that the generalization error concentrates over randomly drawn datasets $\mathcal{D}$.

\subsection{Main Result}

\begin{figure}
    \centering
    \includegraphics[width = \textwidth]{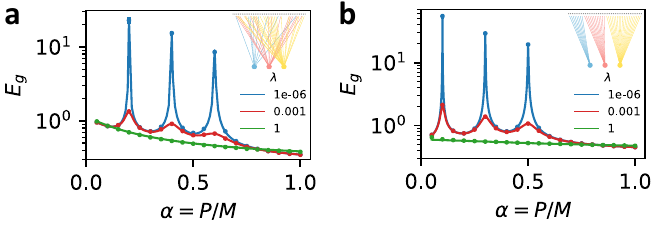}
    \caption{Comparison of numerical and theoretical learning curves for ensembled linear regression. Circles represent numerical results averaged over 100 trials; lines indicate theoretical predictions.  Error bars represent the standard error of the mean but are often smaller than the markers. (a) Testing of proposition \ref{Proposition1} with $M = 2000$, $\left[\bm{\Sigma}_s \right]_{ij} = .8^{|i-j|}$, $\left[\bm{\Sigma}_0 \right]_{ij} = \frac{1}{10}(0.3)^{|i-j|}$, $\zeta = 0.1$, and all $\eta_r = 0.2$ and $\lambda_r = \lambda$ (see legend). $k=3$ linear predictors access fixed, randomly selected (with replacement) subsets of the features with fractional sizes $\nu_{rr}  =0.2, 0.4, 0.6$.  Fixed ground-truth weights $\bm{w}^*$ are drawn from an isotropic Gaussian distribution. (b) Testing of proposition \ref{EquiCorrProp} with $M = 5000$, $s = 1$, $c = 0.6$, $\omega^2 = 0.1$, $\zeta = 0.1$, all $\eta_r = 0.1$, and all $\lambda_r = \lambda$ (see legend). Ground truth weights sampled as in eq. \ref{alignedGT} with $\rho = 0.3$. Feature subsets accessed by each readout are mutually exclusive (inset) with fractional sizes $\nu_{rr} = 0.1,0.3,0.5$.}  
    \label{fig1}
\end{figure}

We calculate the generalization error using the replica trick from statistical physics, and present the calculation in Appendix \ref{ReplicaCalcSection}.  The result of our calculation is stated in proposition 1. 

\begin{proposition} \label{Proposition1}
Consider the ensembled ridge regression problem described in Section \ref{Setup}.  Consider the asymptotic limit where $M, P, \{N_r\} \to \infty$ while the ratios $\alpha = \frac{P}{M}$ and $\nu_{rr} = \frac{N_r}{M}$, $r = 1,\dots,k$ remain fixed.  Define the following quantities:
\begin{align}
    \tilde{\bm{\Sigma}}_{rr'} &\equiv \frac{1}{\sqrt{\nu_{rr} \nu_{r'r'}}}  \bm{A}_r [\bm{\Sigma}_s + \bm{\Sigma}_0]\bm{A}_{r'}^\top\\
    \bm{G}_{r} &\equiv \bm{I}_{N_{r}} + \hat{q}_r \tilde{\bm{\Sigma}}_{rr}\\
    \gamma_{rr'} & \equiv \frac{\alpha}{M (\lambda_r+q_r)(\lambda_{r'}+q_{r'}))} \tr \left[ \bm{G}_r^{-1} \tilde{\bm{\Sigma}}_{rr'} \bm{G}_{r'}^{-1} \tilde{\bm{\Sigma}}_{r'r}\right]
\end{align}
Then the terms of the average generalization error (eq. \ref{SingleErrorTerm}) may be written as:
\begin{align}
    \begin{split} \label{GeneralErr}
        \langle E_{rr'}(\mathcal{D}) \rangle_{\mathcal{D}} = & \frac{\gamma_{rr'}\zeta^2 + \delta_{rr'} \eta_r^2}{1-\gamma_{rr'}}  +   \frac{1}{1-\gamma_{rr'}} \left( \frac{1}{M} \bm{w}^{* \top} \bm{\Sigma}_s \bm{w}^*  \right) \\
        & - \frac{1}{M(1-\gamma_{rr'})} \bm{w}^{*\top} \bm{\Sigma}_s   \left[\frac{1}{\nu_{rr}}\hat{q}_r \bm{A}_r^\top \bm{G}_{r}^{-1}\bm{A}_r + \frac{1}{\nu_{r'r'}} \hat{q}_{r'} \bm{A}_{r'}^\top\bm{G}_{r'}^{-1}\bm{A}_{r'}\right] \bm{\Sigma}_s \bm{w}^{*}\\
        & + \frac{\hat{q}_r \hat{q}_{r'}}{M(1-\gamma_{rr'})}  \frac{1}{\sqrt{\nu_{rr} \nu_{r'r'}}}  \bm{w}^{*\top} \bm{\Sigma}_s \bm{A}_r^\top \bm{G}_{r}^{-1} \tilde{\bm{\Sigma}}_{rr'}\bm{G}_{r'}^{-1} \bm{A}_{r'} \bm{\Sigma}_s \bm{w}^{*}
    \end{split}
\end{align}
where the pairs of order parameters $\{q_r, \hat{q}_r\}$ for $r = 1, \dots, K$, satisfy the following self-consistent saddle-point equations
\begin{align}
    \hat{q}_r = \frac{\alpha}{\lambda_r + q_r},   \qquad q_r = \frac{1}{M} \tr \left[ \bm{G}_{r}^{-1} \tilde{\bm{\Sigma}}_{rr}  \right]. \label{saddle_point}
\end{align}
\end{proposition}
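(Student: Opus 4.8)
The plan is to compute the disorder-averaged generalization error terms $\langle E_{rr'}(\mathcal{D})\rangle_{\mathcal{D}}$ using the replica method. I would start by writing the ridge-regression estimator $\hat{\bm{w}}_r$ as the minimizer of a quadratic loss and introducing, for each ensemble member $r$, a Gibbs measure $\propto \exp(-\beta L_r(\bm{w}_r))$ whose $\beta\to\infty$ limit concentrates on $\hat{\bm{w}}_r$. The key observation is that, although the $k$ predictors are trained independently, they share the same training inputs $\bar{\bm\psi}^\mu$ and labels $y^\mu$; hence the cross terms $E_{rr'}$ with $r\neq r'$ require a \emph{joint} partition function $Z = \prod_r Z_r$ over all members, and the correlations between $\hat{\bm{w}}_r$ and $\hat{\bm{w}}_{r'}$ are mediated entirely through the shared data. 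I would then introduce $n$ replicas of each $\bm{w}_r$, average over the Gaussian data $\bm\psi^\mu$, feature noise $\bm\sigma^\mu$, label noise $\epsilon^\mu$, and training readout noise $\xi_r^\mu$, and write the replicated partition function in terms of overlap order parameters.

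The natural order parameters are the per-replica, per-member overlaps $Q_r^{ab} = \frac{1}{N_r}\bm{w}_r^{a\top}\bm{w}_r^b$ together with teacher-overlaps $\frac{1}{\sqrt{M N_r}} \bm{w}_r^{a\top}\bm{A}_r\bm\Sigma_s\bm{w}^*$; crucially, because each datum contributes a term coupling $\bm{A}_r\bar{\bm\psi}^\mu$ across members, after the Gaussian integral over $\bm\psi^\mu,\bm\sigma^\mu$ one gets a quadratic form in the combined vector $(\bm{A}_1^\top\bm{w}_1^a,\dots,\bm{A}_k^\top\bm{w}_k^a)$ involving $\bm\Sigma_s+\bm\Sigma_0$, which is where the matrices $\tilde{\bm\Sigma}_{rr'} = \frac{1}{\sqrt{\nu_{rr}\nu_{r'r'}}}\bm{A}_r[\bm\Sigma_s+\bm\Sigma_0]\bm{A}_{r'}^\top$ enter. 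Next I would make a replica-symmetric ansatz, $Q_r^{ab} = q_r + (\text{diagonal piece})$ and conjugate $\hat q_r$, take $n\to 0$ and then $\beta\to\infty$ with the standard scaling so that only $q_r$ and $\hat q_r$ survive, obtaining the saddle-point equations $\hat q_r = \alpha/(\lambda_r+q_r)$ and $q_r = \frac{1}{M}\tr[\bm{G}_r^{-1}\tilde{\bm\Sigma}_{rr}]$ with $\bm{G}_r = \bm I_{N_r} + \hat q_r\tilde{\bm\Sigma}_{rr}$. The matrix $\bm{G}_r^{-1}$ is precisely the resolvent that appears from inverting the Gaussian-integral kernel at the saddle point.

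To get the generalization error I would express $\langle E_{rr'}\rangle_{\mathcal{D}}$ from eq.~\eqref{SingleErrorTerm} in terms of the saddle-point data: the term $\frac{1}{M}\bm{w}^{*\top}\bm\Sigma_s\bm{w}^*$ is the ``pure teacher'' piece, the terms linear in $\hat q_r$ with $\bm{A}_r^\top\bm{G}_r^{-1}\bm{A}_r$ come from the teacher-student overlap order parameters, and the bilinear term with $\hat q_r\hat q_{r'}\bm{w}^{*\top}\bm\Sigma_s\bm{A}_r^\top\bm{G}_r^{-1}\tilde{\bm\Sigma}_{rr'}\bm{G}_{r'}^{-1}\bm{A}_{r'}\bm\Sigma_s\bm{w}^*$ is the genuine cross-member correlation mediated by the shared data. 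The noise contributions $\gamma_{rr'}\zeta^2$ (label noise propagated through both predictors) and $\delta_{rr'}\eta_r^2$ (readout noise, uncorrelated across members) are collected from the fluctuation (off-diagonal replica) part, and the prefactor $\frac{1}{1-\gamma_{rr'}}$ arises from resumming the geometric series generated by the self-consistent coupling between the two members' fluctuations through the data — this is the standard ``one-loop'' renormalization familiar from single-predictor ridge regression, here in a $2\times 2$ block form with $\gamma_{rr'}$ defined via $\tr[\bm{G}_r^{-1}\tilde{\bm\Sigma}_{rr'}\bm{G}_{r'}^{-1}\tilde{\bm\Sigma}_{r'r}]$.

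The main obstacle I anticipate is handling the cross-member structure correctly: for $r\neq r'$ the two predictors have no shared order parameters of their own, so the correlation must be extracted by keeping track of how a single replicated data term couples $\bm{A}_r^\top\bm{w}_r^a$ to $\bm{A}_{r'}^\top\bm{w}_{r'}^b$, and then carefully performing the $n\to 0$ limit so that the ``diagonal in members but replica-coupled'' terms produce exactly the factor $1/(1-\gamma_{rr'})$ and the bilinear teacher term — rather than spurious contributions. A secondary technical point is justifying concentration of $E_g(\mathcal{D})$ over $\mathcal{D}$ in the stated asymptotic limit and the validity of the replica-symmetric ansatz (no replica-symmetry breaking, expected here because the loss is convex); I would either invoke convexity directly or cite that for ridge regression the RS solution is exact. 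Routine Gaussian integrals and the $\beta\to\infty$ bookkeeping I would relegate to the appendix.
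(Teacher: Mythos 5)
Your plan follows essentially the same route as the paper's own proof in Appendix \ref{ReplicaCalcSection}: a replica calculation in which ensemble members sharing the same data are treated jointly (pairwise suffices, since the error decomposes over pairs), the Gaussian disorder average produces kernels built from $\tilde{\bm{\Sigma}}_{rr'}$, a replica-symmetric ansatz (justified by convexity) with the standard $\beta\to\infty$ scaling yields exactly the stated saddle-point equations, and $E_{rr'}$ is read off from the cross-member order parameter (the paper uses a source term $J$ and differentiates the free energy), with the $1/(1-\gamma_{rr'})$ factor emerging from solving the coupled equations for that overlap and its conjugate. The only cosmetic difference is that the paper constrains the covariances of the Gaussian fields $h^{ra}_\mu$ (i.e.\ $\bm{\Sigma}$-weighted overlaps that already include the teacher and noise contributions) rather than plain weight overlaps $\frac{1}{N_r}\bm{w}_r^{a\top}\bm{w}_r^b$, which is the appropriate bookkeeping for anisotropic covariances but the same method in substance.
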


\begin{proof}
We calculate the terms in the generalization error using the replica trick, a standard but non-rigorous method from the statistical physics of disordered systems.  The full derivation may be found in the Appendix \ref{ReplicaCalcSection}. When the matrices $\bm{A}_r\left(\bm{\Sigma}_s+\bm{\Sigma}_0\right)\bm{A}_{r}^\top$, $r = 1, \dots, k$ have bounded spectra, this result may be obtained by extending the results of \cite{LoureiroEnsembles} to include readout noise, as shown in Appendix \ref{LoureiroDerivationSection}.
\end{proof}

We make the following remarks:
\begin{remark}
    Implicit in this theorem is the assumption that the relevant matrix contractions and traces which appear in the generalization error (eq. \ref{GeneralErr}) and the surrounding definitions tend to a well-defined limit which remains $\mathcal{O}(1)$ as $M \to \infty$.
\end{remark}
\begin{remark}
This result applies for any (well-behaved) linear masks $\{\bm{A}_r\}$.  We will focus on the case where each $\bm{A}_r$ implements subsampling of an extensive fraction $\nu_{rr}$ of the features.
\end{remark}
\begin{remark} When $k=1$, our result reduces to the generalization error of a single ridge regression model, as studied in refs. \cite{atanasov2023the, loureiro2021learning}.
\end{remark}
\begin{remark}
    We include ``readout noise'' which independently corrupts the predictions of each ensemble member.  This models sources of variation between ensemble members not otherwise accounted for. For example, ensembles of deep networks will vary due to random initialization of parameters \cite{atanasov2023the, LoureiroEnsembles, DAscoli2020}. Readout noise is more directly present in physical neural networks, such as an analog neural networks \cite{Janke2020} or biological neural circuits\cite{Faisal2008} due to their inherent stochasticity.
\end{remark}
In Figure \ref{fig1}a, we confirm the result of the general calculation by comparing with numerical experiments using a synthetic dataset with $M=2000$ highly structured features (see caption for details).  $k=3$ readouts see random, fixed subsets of features.  Theory curves are calculated by solving the fixed-point equations \ref{saddle_point} numerically for the chosen $\bm{\Sigma}_s$, $\bm{\Sigma}_0$ and $\{\bm{A}_r\}_{r=1}^k$ then evaluating eq. \ref{GeneralErr}.

\subsection{Equicorrelated Data} \label{EquiCorrSection}
Our general result allows the freedom to tune many important parameters of the learning problem: the correlation structure of the dataset, the number of ensemble members, the scales of noise, etc. However, the derived expressions are rather opaque.  In order to better understand the phenomena captured by these expressions, we examine the following special case:

\begin{proposition} \label{EquiCorrProp} In the setting of section \ref{Setup} and proposition \ref{Proposition1}, consider the following special case:
\begin{align}
    \bm{w}^* &= \sqrt{1-\rho^2} \mathbb{P}_{\perp} \bm{w}^*_0 + \rho\bm{1}_M \label{alignedGT}\\
    \bm{w}^*_0 &\sim \mathcal{N}(0, \bm{I}_M) \label{randPartGT}\\
        \bm{\Sigma}_s &= s \left[(1-c) \bm{I}_M + c \bm{1}_M \bm{1}_M^\top \right] \label{EquiCorrDataCov}\\
     \bm{\Sigma}_0 &= \omega^2 \bm{I}_M \label{IsotropicNoise}
\end{align}
with $c \in [0,1], \rho \in [-1, 1]$.  Label and readout noises  $\zeta, \eta_r \geq 0$ are permitted.  Here $ \mathbb{P}_\perp = \bm{I}_M - \frac{1}{M} \bm{1}_M \bm{1}_M^\top$ is a projection matrix which removes the component of $\bm{w}^*_0$ which is parallel to $\bm{1}_M$.  The matrices $\{\bm{A}_r\}_{r=1}^k$ have rows consisting of distinct one-hot vectors so that each of the $k$ readouts has access to a subset of $N_r = \nu_{rr} M$ features.  For $r \neq r'$, denote by $n_{rr'}$ the number of neurons sampled by both $\bm{A}_r$ and $\bm{A}_{r'}$  and let $\nu_{rr'} \equiv n_{rr'}/M$  remain fixed as $M \to \infty$.

Define the following quantities:
\begin{align}
    a \equiv s(1-c) + \omega^2 \qquad  S_r \equiv \frac{\hat{q}_r}{\nu_{rr} + a \hat{q}_r}, \qquad \gamma_{rr'} \equiv \frac{a^2 \nu_{rr'} S_r S_{r'} }{\alpha} 
\end{align}
The terms of the decomposed generalization error may then be written:
\begin{equation} \label{EquiCorrGeneralizationError}
    \langle E_{rr'} \rangle_{\mathcal{D}, \bm{w}^*_0} = \frac{1}{1-\gamma_{rr'}}\left((1-\rho^2) I^0_{rr'} + \rho^2 I^1_{rr'} \right) + \frac{\gamma_{rr'} \zeta^2 + \delta_{rr'} \eta_r^2}{1-\gamma_{rr'}}
\end{equation}
where we have defined
\begin{align}
    I^0_{rr'} &\equiv s(1-c) \left( 1-s(1-c) \nu_{rr}S_r - s(1-c)\nu_{r'r'}S_{r'} + a s (1-c) \nu_{rr'}S_r S_{r'}  \right)\\
    I^1_{rr'} &\equiv \begin{cases}
        \frac{s(1-c)(\nu_{rr'}-\nu_{rr}\nu_{r'r'})+\omega^2 \nu_{rr'}}{\nu_{rr} \nu_{r'r'}} & \text{if } 0 < c\leq 1\\
        I^0_{rr'} & \text{if } c=0
    \end{cases}
\end{align}

and where $\{q_r, \hat{q}_r\}$ may be obtained analytically as the solution (with $q_r>0$) to:
\begin{align}
    q_r = \frac{a\nu_{rr}}{\nu_{rr} + a \hat{q}_r} \qquad, \qquad \hat{q}_r = \frac{\alpha}{\lambda_r+q_r}
\end{align}
In the ``ridgeless'' limit where all $\lambda_r \to 0$, we may make the following simplifications:
\begin{align}
    S_r &\to \frac{2 \alpha}{a\left( \alpha + \nu_{rr} + |\alpha - \nu_{rr}| \right)},  \quad \gamma_{rr'} &\to \frac{4 \alpha \nu_{rr'}}{\left( \alpha + \nu_{rr} + |\alpha - \nu_{rr}| \right)\left( \alpha + \nu_{r'r'} + |\alpha - \nu_{r'r'}| \right)}
\end{align}
\end{proposition}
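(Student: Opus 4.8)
The plan is to obtain Proposition~\ref{EquiCorrProp} directly from Proposition~\ref{Proposition1} by substituting the equicorrelated covariances and the one-hot subsampling masks, and then exploiting that every matrix appearing in eq.~\ref{GeneralErr} and in eq.~\ref{saddle_point} has the special form ``scalar multiple of identity plus a rank-one term along the all-ones vector''. Concretely, since $\bm{\Sigma}_s+\bm{\Sigma}_0 = a\bm{I}_M + sc\,\bm{1}_M\bm{1}_M^\top$ and the rows of each $\bm{A}_r$ are distinct one-hot vectors, one has $\bm{A}_r\bm{A}_r^\top=\bm{I}_{N_r}$, $\bm{A}_r\bm{1}_M=\bm{1}_{N_r}$, $\bm{1}_{N_r}^\top\bm{A}_r\bm{A}_{r'}^\top\bm{1}_{N_{r'}}=n_{rr'}$ and $\tr[\bm{A}_r\bm{A}_{r'}^\top\bm{A}_{r'}\bm{A}_r^\top]=n_{rr'}$, so that $\tilde{\bm{\Sigma}}_{rr'}=(\nu_{rr}\nu_{r'r'})^{-1/2}\!\left(a\,\bm{A}_r\bm{A}_{r'}^\top + sc\,\bm{1}_{N_r}\bm{1}_{N_{r'}}^\top\right)$ and $\bm{G}_r = b_r\bm{I}_{N_r}+d_r\bm{1}_{N_r}\bm{1}_{N_r}^\top$ with $b_r=(\nu_{rr}+a\hat q_r)/\nu_{rr}$ and $d_r=sc\hat q_r/\nu_{rr}$. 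Sherman--Morrison then gives $\bm{G}_r^{-1}$ in closed form. The key structural fact is that $\bm{1}_{N_r}$ is an eigenvector of both $\tilde{\bm{\Sigma}}_{rr}$ and $\bm{G}_r$ with an eigenvalue of order $M$ whenever $c>0$ (it equals $b_r+d_rN_r\sim sc\hat q_r M$ for $\bm{G}_r$), so this ``spike'' direction must be tracked separately from the bulk $\bm{1}_{N_r}^{\perp}$ subspace, on which $\bm{G}_r$ acts as $b_r\bm{I}$.

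For the order parameters I would use the identity $\bm{G}_r^{-1}\tilde{\bm{\Sigma}}_{rr}=\hat q_r^{-1}(\bm{I}_{N_r}-\bm{G}_r^{-1})$, so that $q_r=(M\hat q_r)^{-1}(N_r-\tr\bm{G}_r^{-1})$; the spike contributes only $O(1/M)$ to $\tr\bm{G}_r^{-1}$ and hence drops out, leaving $q_r=a\nu_{rr}/(\nu_{rr}+a\hat q_r)$, which together with $\hat q_r=\alpha/(\lambda_r+q_r)$ from eq.~\ref{saddle_point} is the claimed pair of equations; the combination $\hat q_r/(b_r\nu_{rr})=\hat q_r/(\nu_{rr}+a\hat q_r)$, which is the $S_r$ of the statement, then recurs everywhere. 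For $\gamma_{rr'}$ I would split $\tilde{\bm{\Sigma}}_{rr'}$ into its $a\bm{A}_r\bm{A}_{r'}^\top$ and $sc\,\bm{1}_{N_r}\bm{1}_{N_{r'}}^\top$ parts: once contracted with $\bm{G}^{-1}$ the rank-one ``$\bm{1}$'' parts are suppressed by the small eigenvalue $\sim 1/M$, while the $\bm{A}_r\bm{A}_{r'}^\top$ parts are (up to $O(1/M)$) supported on the bulk subspaces, where $\bm{G}_r^{-1}=b_r^{-1}\bm{I}$; hence $\tr[\bm{G}_r^{-1}\tilde{\bm{\Sigma}}_{rr'}\bm{G}_{r'}^{-1}\tilde{\bm{\Sigma}}_{r'r}]\to a^2 n_{rr'}/(b_rb_{r'}\nu_{rr}\nu_{r'r'})$, and using $\alpha/[(\lambda_r+q_r)(\lambda_{r'}+q_{r'})]=\hat q_r\hat q_{r'}/\alpha$, $n_{rr'}=\nu_{rr'}M$ and $b_r\nu_{rr}=\nu_{rr}+a\hat q_r$ this collapses to $\gamma_{rr'}=a^2\nu_{rr'}S_rS_{r'}/\alpha$.

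For the bias part of eq.~\ref{GeneralErr} I would write $\bm{w}^*=\sqrt{1-\rho^2}\,\mathbb{P}_\perp\bm{w}^*_0+\rho\bm{1}_M$; averaging over $\bm{w}^*_0\sim\mathcal{N}(0,\bm{I}_M)$ kills the cross term and replaces the quadratic forms in the random component by traces against $\mathbb{P}_\perp$. Since $\bm{\Sigma}_s\mathbb{P}_\perp=s(1-c)\mathbb{P}_\perp$, every $\bm{1}\bm{1}^\top$ piece drops out of the random part, which then reduces to traces of products of $\bm{A}_r\bm{A}_{r'}^\top$-type and bulk-resolvent factors; collecting these (using $\hat q_r/b_r=\nu_{rr}S_r$) reproduces $I^0_{rr'}$. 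The ``$\rho^2$'' part is the delicate one: here $\bm{\Sigma}_s\bm{1}_M=(s(1-c)+scM)\bm{1}_M$ is $O(M)$, so each matrix chain terminating in $\bm{\Sigma}_s\bm{w}^*$ is individually $O(M)$, but the factor it picks up from $\bm{G}_r^{-1}\bm{1}_{N_r}=(b_r+d_rN_r)^{-1}\bm{1}_{N_r}$ is $O(1/M)$; carrying out a $1/M$ expansion of the four $\bm{w}^*$-dependent terms of eq.~\ref{GeneralErr}, the leading $O(M)$ contributions cancel and the $O(1)$ remainder is exactly $I^1_{rr'}$. When $c=0$ there is no spike, the $\bm{1}$ direction behaves like an ordinary bulk direction, and the $\rho^2$ computation coincides with the random one, giving $I^1_{rr'}=I^0_{rr'}$. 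Dividing by $1-\gamma_{rr'}$ and adding the noise contribution $\gamma_{rr'}\zeta^2+\delta_{rr'}\eta_r^2$ then yields eq.~\ref{EquiCorrGeneralizationError}, and the ridgeless formulas follow by solving the quadratic for $q_r$ obtained from $q_r=a\nu_{rr}/(\nu_{rr}+a\hat q_r)$ and $\hat q_r=\alpha/(\lambda_r+q_r)$ as $\lambda_r\to 0$, keeping the root with $q_r>0$ (the choice of branch according to the sign of $\alpha-\nu_{rr}$ is what produces the $|\alpha-\nu_{rr}|$ in the stated limits).

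The main obstacle is the ``$\rho^2$'' computation: one must track the cancellation of the $O(M)$ terms across the four contributions in eq.~\ref{GeneralErr} carefully enough to extract the finite $I^1_{rr'}$. A secondary point is that the spiked eigenvalue $\sim scM$ of $\bm{\Sigma}_s$ violates the bounded-spectrum hypothesis under which Proposition~\ref{Proposition1} was most cleanly justified, so for $c>0$ the cleanest route is to treat this rank-one spike separately (as in spiked-covariance analyses) or to argue continuity in $c$; everything else is linear algebra with the identity-plus-rank-one structure, together with the observation that only the bulk subspaces contribute at leading order in $M$.
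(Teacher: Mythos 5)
Your proposal is correct and follows essentially the same route as the paper: specialize Proposition~\ref{Proposition1} to the equicorrelated case, exploit the identity-plus-rank-one structure of $\tilde{\bm{\Sigma}}_{rr'}$ and $\bm{G}_r$ via Sherman--Morrison, expand traces and contractions to leading order in $M$, split $\bm{w}^*$ into the $\mathbb{P}_\perp$ and $\bm{1}_M$ components (averaging over $\bm{w}^*_0$), and obtain the ridgeless limit from the explicit quadratic for $q_r$. The only difference is bookkeeping: the paper mechanizes the block-matrix algebra (including the delicate $\rho^2$ sector whose $\mathcal{O}(M)$ pieces cancel) with a custom Mathematica package and WLOG explicit forms of the $\bm{A}_r$, whereas you organize the same cancellations by hand through the spike/bulk eigendecomposition.
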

\begin{proof}
Simplifying the fixed-point equations and generalization error formulas in this special case is an exercise in linear algebra.  The main tools used are the Sherman-Morrison formula \cite{ShermanMorrison} and the fact that the data distribution is isotropic in the features so that the form of $\tilde{\bm{\Sigma}}_{rr}$ and $\tilde{\bm{\Sigma}}_{rr'}$ depend only on the subsampling and overlap fractions $\nu_{rr}, \nu_{r'r'}, \nu_{rr'}$.  To aid in computing the necessary matrix contractions we developed a custom Mathematica package which handles block matrices of symbolic dimension, with blocks containing matrices of the form $\bm{M} = c_1\bm{I} + c_2 \bm{1}\bm{1}^\top $. This package and the Mathematica notebook used to derive these results are available online (see Appendix \ref{CodeAvailabilitySection}) \end{proof}

In this tractable special case, $c\in [0,1]$ is a parameter which tunes the strength of correlations between features of the data.  When $c = 0$, the features are independent, and when $c = 1$ the features are always equivalent. $s$ sets the overall scale of the features and  the ``Data-Task alignment'' $\rho$ tunes the alignment of the ground truth weights with the special direction in the covariance matrix  (analogous to ``task-model'' alignment \cite{canatar2021spectral, bordelon2020spectrum}).  A table of parameters is provided in Appendix \ref{ParamTableSection}.  In Figure \ref{fig1}b, we test these results by comparing the theoretical expressions for generalization error with the results of numerical experiments, finding perfect agreement.

With an analytical formula for the generalization error, we can compute the optimal regularization parameters $\lambda_r$ which minimize the generalization error.  These may, in general, depend on both $r$ and the sample size $\alpha$.  Rather than minimizing the error of the ensemble, we may minimize the generalization error of predictions made by the ensemble members independently.  We find that this ``locally optimal'' regularization, denoted $\lambda^*$, is independent of $\alpha$, generalizing results from \cite{nakkiran2020optimal, canatar2021spectral} to correlated data distributions (see Appendix \ref{OptRegSection}).

\section{Subsampling shifts the double-descent peak of a linear predictor} \label{SubsampSection}

Consider a single linear regressor ($k=1$)  which connects to a subset of $N = \nu M$ features in the equicorrelated data setting of proposition \ref{EquiCorrProp}.  Also setting $c = 0$, $s = 1$, and $\eta_r = \omega = 0$ and taking the limit $\lambda \to 0$ the generalization error reads:
\begin{equation}
\langle E_g \rangle_{\mathcal{D}, \bm{w}^*} =\left\{ \begin{array}{lr}
        \frac{\nu}{\nu-\alpha} \left[  (1-\nu) + \frac{1}{\nu}(\alpha-\nu)^2 \right] + \frac{\alpha}{\nu-\alpha} \zeta^2, & \text{if } \alpha < \nu\\
        \frac{\alpha}{\alpha-\nu} \left[ 1-\nu \right] + \frac{\nu}{\alpha-\nu} \zeta^2, & \text{if } \alpha > \nu
    \end{array}\right\}
\end{equation}
We thus see that double descent can arise from two possible sources of variance: explicit label noise (if $\zeta>0$) or implicit label noise induced by feature subsampling ($\nu<1$).  As $E_g \sim (\alpha-\nu)^{-1}$, generalization error diverges when sample size is equal to the number of sampled features. Intuitively, this occurs because subsampling changes the number of parameters of the regression model, and thus its interpolation threshold.  To demonstrate this, we plot the learning curves for subsampled linear regression on equicorrelated data in Figure \ref{fig2}.  At small finite ridge the test error no longer diverges when $\alpha = \nu$, but still displays a distinctive peak.

\begin{figure}[t]
    \centering
    \includegraphics[width =  \textwidth]{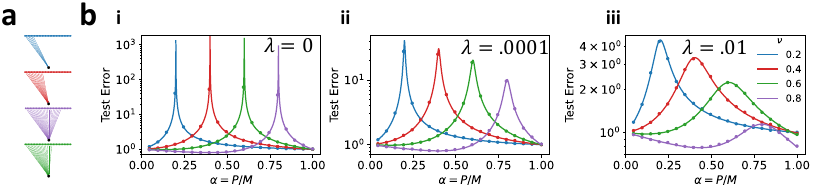}
    \caption{Subsampling alters the location of the double-descent peak of a linear predictor. (a) Illustrations of subsampled linear predictors with varying subsampling fraction $\nu$.  (b) Comparison between experiment and theory for subsampling linear regression on equicorrelated datasets.  We choose task parameters as in proposition \ref{EquiCorrProp} with $c=\omega = \zeta = \eta = 0$, $s=1$, and (i) $\lambda = 0$,  (ii) $\lambda = 10^{-4}$, (iii) $\lambda = 10^{-2}$. All learning curves are for a single linear predictor $k=1$ with subsampling fraction $\nu$ shown in legend.  Circles show results of numerical experiment.  Lines are analytical prediction.}
    \label{fig2}
\end{figure}

\section{Heterogeneous connectivity mitigates double-descent} \label{HeterogeneousMainText}
\begin{figure}[t]
    \centering
    \includegraphics[width = \textwidth]{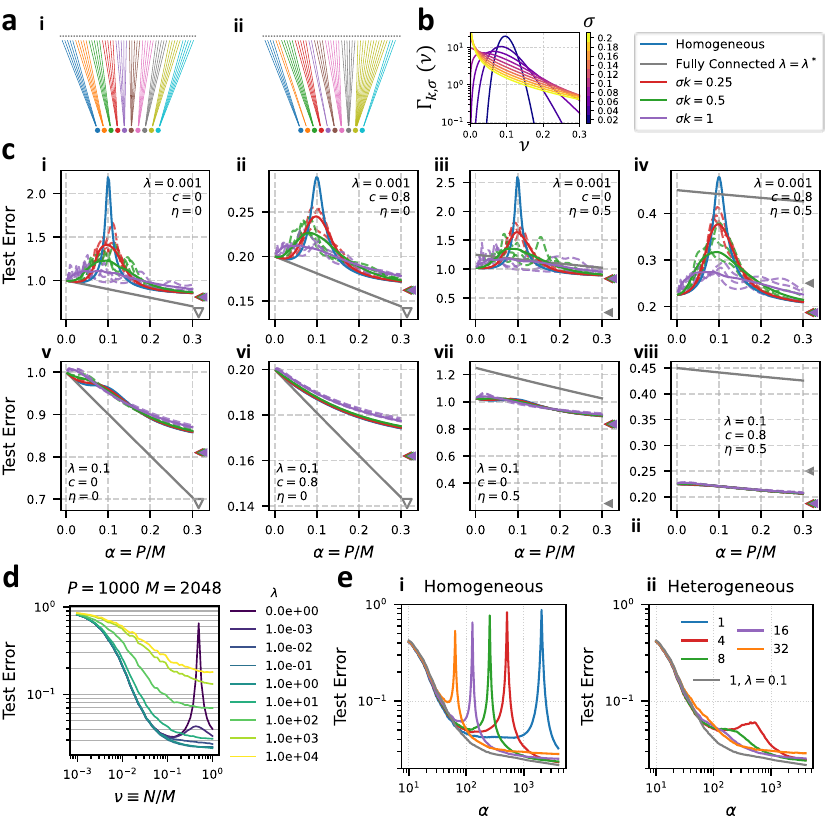}
    \caption{Heterogeneous ensembling mitigates double-descent. (a) We compare (i) homogeneous ensembling, in which $k$ readouts connect to the same fraction $\nu = 1/k$ of features, and (ii) heterogeneous ensembling  (b) In heterogeneous ensembling subsampling fractions are drawn i.i.d. from $\Gamma_{k,\sigma}(\nu)$, shown here for $k=10$, then re-scaled to sum to 1. (c) Generalization Error Curves for Homogeneous and Heterogeneous ensembling with $k = 10$,  $\zeta = 0$, $\rho = 0.3$ and indicated values of $\lambda$, $c$, and $\eta$.  Blue: homogeneous subsampling.  Red, green, and purple show heterogeneous subsampling with $\sigma = 0.25/k, 0.5/k, 1/k$ respectively.  Dashed lines show learning curves for 3 particular realizations of $\{\nu_{11}, \dots, \nu_{kk}\}$.  Solid curves show the average over 100 realizations.  Gray shows the learning curve for a single linear readout with $\nu = 1$ and optimal regularization (eq. \ref{optreg_local}).  Triangular marks show the asymptotic generalization error ($\alpha \to \infty$), with downward-pointing gray triangles indicating an asymptotic error of zero.  (d,e) Generalization error of linear classifiers applied to the imagewoof dataset with ResNext features averaged over 100 trials. (d)  $P=100$, $k = 1$ varying subsampling fraction $\nu$ and regularization $\lambda$ (legend).   (e) Generalization error of (i) homogeneous and (ii) heterogeneous (with $\sigma = 0.75/k$) ensembles of classifiers.  Legend indicates $k$ values.  $\lambda = 0$ except for gray curves, where $\lambda = 0.1$}
    \label{fig3}
\end{figure}

Double-descent -- over-fitting to noise in the training set near a model's interpolation threshold -- poses a serious risk in practical machine-learning applications \cite{belkin2019reconciling}.  Cross-validating the regularization strength against the training set is the canonical approach to avoiding double-descent \cite{hastie2009elements, nakkiran2020optimal}, but in practice requires a computationally expensive parameter sweep and prior knowledge of the task.  In situations where computational resources are limited  or hyperparameters are fixed prior to specification of the task, it is natural to seek an alternative solution.  Considering again the plots in Figure \ref{fig2}(b), we observe that at any value of $\alpha$, the double-descent peak can be avoided with an acceptable choice of the subsampling fraction $\nu$.  This suggests another strategy to mitigate double descent: heterogeneous ensembling.  Ensembling over predictors with a heterogeneous distribution of interpolation thresholds, we may expect that when one predictor fails due to over-fitting, the other members of the ensemble compensate with accurate predictions. 

In Figure \ref{fig3}, we show that heterogeneous ensembling can guard against double-descent.  We define two ensembling strategies: in homogeneous ensembling, each of $k$ readouts connects a fraction $\nu_{rr} = 1/k$ features.  In heterogeneous ensembling, the number of features connected by each of the $k$ readouts are drawn from a Gamma distribution $\Gamma_{k, \sigma}(\nu)$ with mean $1/k$ and standard deviation $\sigma$ (see Fig. \ref{fig3}b) then re-scaled to sum to 1 (see Appendix \ref{HomHetEnsemblingSection} for details).  All feature subsets are mutually exclusive ($\nu_{rr'}=0$ for $r \neq r'$). Homogeneous and heterogeneous ensembling are illustrated for $k = 10$ in Figs. \ref{fig3} a.i and \ref{fig3} a.ii respectively. We test this hypothesis using eq. \ref{EquiCorrGeneralizationError} in \ref{fig3}c.  At small regularization ($\lambda = .001$), we find that heterogeneity of the distribution of subsampling fractions ( $\sigma>0$) lowers the double-descent peak of an ensemble of linear predictors, while at larger regularization ($\lambda = 0.1$), there is little difference between homogeneous and heterogeneous learning curves.  The asymptotic ($\alpha \to \infty$) error is unaffected by the presence of heterogeneity in the degrees of connectivity, which can be seen as the coincidence of the triangular markers in Fig. \ref{fig3}c, as well as from the $\alpha \to \infty$ limit of eq. \ref{EquiCorrGeneralizationError} (see Appendix \ref{InfiniteDataSection}).  Fig. \ref{fig3}c also shows the learning curve of a single linear predictor with no feature subsampling and optimal regularization.  We see that the feature-subsampling ensemble appreciably outperforms the fully-connected model when $c=0.8$ and $\eta = 0.5$, suggesting the important roles of data correlations and readout noise in determining the optimal readout strategy.  These roles are further explored in section \ref{ESTradeoffSection} and fig \ref{fig4}.

We also test the effect of heterogeneous ensembling in the a realistic classification task.  Specifically, we train ensembles of linear classifiers to predict the labels of imagenet \cite{deng2009imagenet} images corresponding to 10 different dog breeds (the ``Imagewoof'' task \cite{imagenette}) from their top-hidden-layer representations in a pre-trained ResNext deep network \cite{ResNextxie2017} (see Appendix \ref{ResNextAppendix} for details).  We characterize the statistics of the resulting $M = 2048$-dimensional feature set in Fig. \ref{ResNextStatsFig}.  This ``ResNext-Imagewoof'' classification task has multiple features which make it amenable to learning with a feature-subsampling ensemble.  First, the ResNext features have a high degree of redundancy \cite{doimo2022redundant}, allowing classification to be performed accurately using only a fraction of the available features (see Fig. \ref{fig3}d and \ref{ResNextStatsFig}c).  Second, when classifications of multiple predictors are combined by a majority vote, there is a natural upper bound on the influence of a single erring ensemble member (unlike in regression where predictions can diverge).  Calculating learning curves for the imagewoof classification task using homogeneous ensembles, we see sharp double-descent peaks in an ensemble of size $k$ when $P = M/k$ (Fig. \ref{fig3}e.i).  Using a heterogeneous ensemble mitigates this catastrophic over-fitting, leading to monotonically decreasing error without regularization (Fig. \ref{fig3}e.ii).  A single linear predictor with a tuned regularization of $\lambda = 0.1$ performs only marginally better than the heterogeneous feature-subsampling ensemble with $k = 16$ or $k = 32$.  This suggests heterogeneous ensembling can be an effective alternative to regularization in real-world classification tasks using pre-trained deep learning feature maps.

Note that training a feature-subsampling ensemble also benefits from improved computational complexity.  Training an estimator of dimension $N_r$ involves, in the worst case, inverting an $N_r \times N_r$ matrix, which requires $\mathcal{O}(N_r^3)$ operations.  Setting $N_r = M/k$, we see that the number of operations required to train an ensemble of $k$ predictors scales as $\mathcal{O}(k^{-2})$.

\section{Correlations, Noise, and Task Structure Dictate the Ensembling-Subsampling Trade-off} \label{ESTradeoffSection}

\begin{figure}[t]
    \centering
    \includegraphics[width = \textwidth]{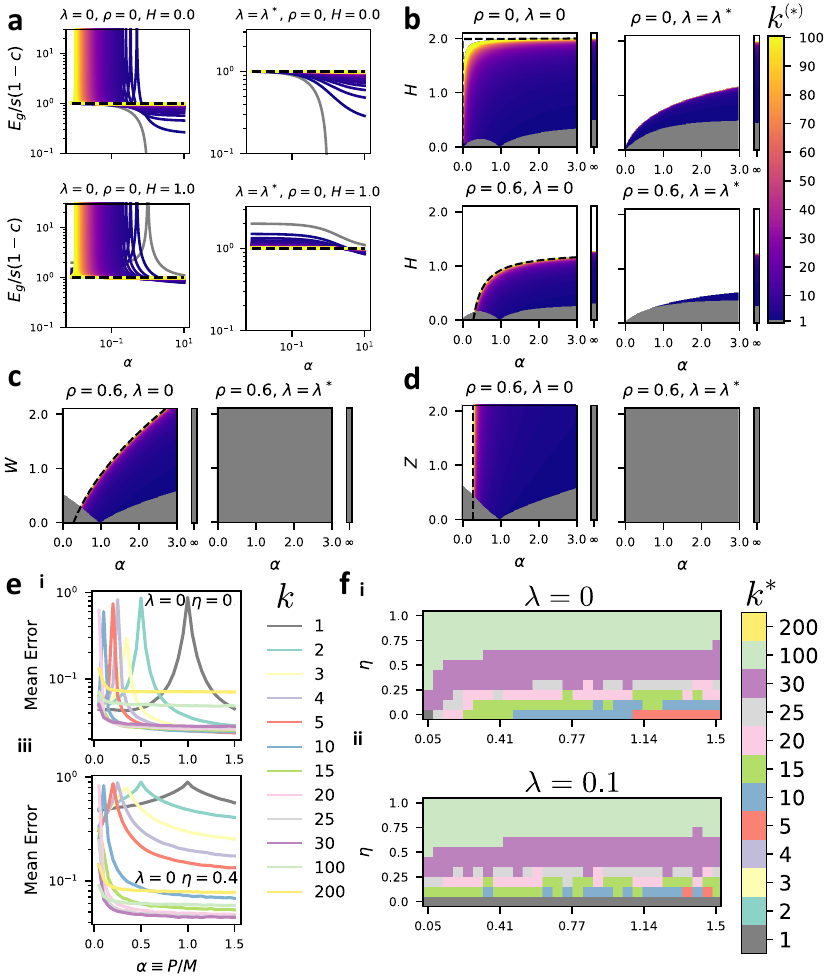}
    \caption{Task parameters dictate the ensembling-subsampling trade-off: (a-d) In the setting of proposition \ref{EquiCorrProp} in the special case where all $\nu_{rr'} = \frac{1}{k} \delta_{rr'}$ so that feature subsets are mutually exclusive and the total number of weights is conserved. (a) We plot the reduced generalization errors $\mathcal{E}$ (for $\lambda = 0$, using eq. \ref{HomEnsErr_Ridgeless}) and $\mathcal{E^*}$ (for $\lambda = \lambda^*$ using eq. \ref{HomEnsErr_LocOpt})  of linear ridge ensembles of varying size $k$ with $\rho = 0$ and $H = 0,1$ (values indicated above plots).  Grey lines indicate $k=1$, dashed black lines $k \to \infty$, and intermediate $k$ values by the colorbar. (b) We plot optimal ensemble size $k^*$ (eqs. \ref{kstar_ridgeless}, \ref{kstar_opt}) in the parameter space of sample size $\alpha$ and reduced readout noise scale $H$ setting $W=Z=0$.  Grey indicates $k^* = 1$ and white indicates $k^* = \infty$, with intermediate values given by the colorbar.  Appended vertical bars show $\alpha \to \infty$.  Dotted black lines show the analytical boundary between the intermediate and noise-dominated phases given by eq. \ref{AnalyticalBoundaryZeroReg}. (c) optimal readout $k^*$ phase diagrams as in (b) but showing $W$-dependence with $H = Z = 0$. (d) optimal readout $k^*$ phase diagrams as in (b) but showing $Z$-dependence with $H = W = 0$. (e) Learning curves for feature-subsampling ensembles of linear classifiers combined using a majority vote rule on the imagewoof classification task (see Appendix \ref{ResNextAppendix}).  As in (a-d) we set $\nu_{rr'} = \frac{1}{k} \delta_{rr'}$.  Error is calculated as the probability of incorrectly classifying a test example.  $\lambda$ and $\eta$ values are indicated in each panel. (f) Numerical phase diagrams showing the value of $k$ which minimizes test error in the parameter space of sample size $P$ and readout noise scale $\eta$, with regularization (i) $\lambda = 0$ (pseudoinverse rule) (ii) $\lambda = 0.1$. }
    \label{fig4}
\end{figure}

In resource-constrained settings, one must decide between training a single large predictor or an ensemble of smaller predictors.  When the number of weights is constrained, ensembling may benefit generalization by averaging over multiple predictions, but at the expense of each prediction incorporating fewer features.  Intuitively, the presence of correlations between features limits the penalty incurred by subsampling, as measurements from a subset of features will also confer information about the unsampled features.  The equicorrelated data model of proposition \ref{EquiCorrProp} permits a solvable toy model for these competing effects.  We consider the special case of ensembling over $k$ readouts, each connecting the same fraction $\nu_{rr} = \nu = 1/k$ of all features.  For simplicity, we set $\nu_{rr'} = 0$  for $r \neq r'$.  We asses the learning curves of this toy model in both the ridgeless limit $\lambda \to 0$ where double-descent has a large effect on test error, and at 'locally optimal' regularization $\lambda = \lambda^*$ for which double-descent is eliminated.  In these special cases, one can write the generalization error in the following forms (see Appendix \ref{HomEnsMathSection} for derivation):
 \begin{align} 
     E_g(k, s, c, \eta, \omega, \zeta, \rho, \alpha, \lambda = 0) = s(1-c)\mathcal{E}(k, \rho, \alpha, H, W, Z) \label{HomEnsErr_Ridgeless}\\
     E_g(k, s, c, \eta, \omega, \zeta, \rho, \alpha, \lambda = \lambda^*) = s(1-c) \mathcal{E}^*(k, \rho, \alpha, H, W, Z) \label{HomEnsErr_LocOpt}
 \end{align} 
where we have defined the effective noise-to-signal ratios:
 \begin{equation}  \label{ReducesNoises}
    H \equiv \frac{\eta^2}{s(1-c)}, \quad W = \frac{\omega^2}{s(1-c)}, \quad Z = \frac{\zeta^2}{s(1-c)}
 \end{equation}
  Therefore, given fixed parameters $s, c, \rho, \alpha$, the value $k^*$ which minimizes error depends on the noise scales, $s$, and $c$ only through the ratios $H$, $W$ and $Z$:
\begin{align}
    k^*_{\lambda = 0} (H, W, Z, \rho, \alpha) \equiv \argmin_{k \in \mathbb{N}} E_g (k) = \argmin_{k \in \mathbb{N}} \mathcal{E}(k, \rho, \alpha, H, W, Z) \label{kstar_ridgeless}\\
    k^*_{\lambda = \lambda^*} (H, W, Z, \rho, \alpha) \equiv \argmin_{k \in \mathbb{N}} E_g (k) = \argmin_{k \in \mathbb{N}} \mathcal{E}^*(k, \rho, \alpha, H, W, Z) \label{kstar_opt}
\end{align}

In Fig. \ref{fig4}a, we plot these reduced errors curves $\mathcal{E}$, $\mathcal{E}^*$ as a function of $\alpha$ for varying ensemble sizes $k$ and reduced readout noise scales $H$.  At zero regularization learning curves diverge at their interpolation threshold.  At locally optimal regularization $\lambda = \lambda^*$, learning curves decrease monotonically with sample size.  Increasing readout noise $H$ raises generalization error more sharply for smaller $k$.  In Fig. \ref{fig4}b we plot the optimal $k^*$ in various two-dimensional slices of parameter space in which $\rho$ is fixed and $W = Z = 0$ while $\alpha$ and $H$ vary.  The resulting phase diagrams may be divided into three regions.  In the signal-dominated phase a single fully-connected readout is optimal ($k^* = 1$).  In an intermediate phase, $1<k^*<\infty$ minimizes error.  And in a noise-dominated phase $k^* = \infty$.  At zero regularization, we have determined an analytical expression for the boundary between the intermediate and noise-dominated phases (see Appendix \ref{PhaseTransitionAnalytics} and dotted lines in Figs \ref{fig4}.b,c,d).  The signal-dominated, intermediate, and noise-dominated phases persist when $\lambda = \lambda^*$, removing the effects of double descent.  In all panels, an increase in H causes an increase in $k^*$.  This can occur because of a decrease in the signal-to-readout noise ratio $s/\eta^2$, or through an increase in the correlation strength $c$.  An increase in $\rho$ also leads to an increase in $k^*$, indicating that ensembling is more effective for easier tasks.  Figs \ref{fig4}c,d show analogous phase diagrams where $W$ or $Z$ are varied.  Signal-dominated, intermediate, and noise-dominated regimes are visible in the resulting phase diagrams at zero regularization.  However, when optimal regularization is used, $k^*=1$ is always optimal.  The presence of regions where $k^*>1$ can thus be attributed to double-descent at sub-optimal regularization or to the presence of readout noise which is independent across predictors.  We chart the parameter-space of the reduced errors and optimal ensemble size $k^*$ extensively in Appendix \ref{ExtraPlots}.  We plot learning curves for the ``ResNext-Imagewoof'' ensembled linear classification task with varying strength of readout noise in Fig. \ref{fig4}e, and phase diagrams of optimal ensemble size $k$ in Fig. \ref{fig4}f, finding similar behavior to the toy model.  See Figs.  \ref{ResNextNoisyLearningCurvesFig}, \ref{ResNextPhaseDiagramsImagenette}, \ref{ResNextPhaseDiagramsImagewoof} and Appendix \ref{ResNextNoisyLearningCurvesSection} for further discussion.

\section{Conclusion}
In this paper, we provided a theory of feature-subsampled linear ridge regression. We identified the special case in which features of the data are ``equicorrelated'' as a minimal toy model to explore the combined effects of subsampling, ensembling, and different types of noise on generalization error.  The resulting learning curves displayed two potentially useful phenomena.  

First, we demonstrated that heterogeneous ensembling can mitigate over-fitting, reducing or eliminating the double-descent peak of an under-regularized model.  In most machine learning applications, the size of the dataset is known at the outset and suitable regularization may be determined to mitigate double descent, either by selecting a highly over-parameterized model \cite{belkin2019reconciling} or by cross-validation techniques (see for example \cite{hastie2009elements}).  However, in contexts where a single network architecture is designed for an unknown task or a variety of tasks with varying structure and noise levels, heterogeneous ensembling may be used to smooth out the perils of double-descent.

Next, we described a trade-off between noise reduction due to ensembling and noise amplification due to subsampling in a resource-constrained setting where the total number of weights is fixed.  Our analysis suggests that ensembling is particularly useful in neural networks with an inherent noise.  Physical neural networks, such as analog neural networks\cite{Janke2020} and biological neural circuits \cite{Faisal2008} present such a resource-constrained environments where intrinsic noise is a significant issue.

Much work remains to achieve a full understanding of the interactions between data correlations, readout noise, and ensembling.  In this work, we have given a thorough treatment of the convenient special case where features are equicorrelated.  Future work should analyze subsampling and ensembling for codes with realistic correlation structure, such as the power-law spectra (see Fig. \ref{ResNextStatsFig}) \cite{maloney2022solvable, zavatoneveth2023DeepStructuredGauss, bordelon2020spectrum, Bordelon2022_elife} and sparse activation patterns \cite{OLSHAUSEN2004}.

\clearpage

\section{Acknowledgements}

CP is supported by NSF Award DMS-2134157, NSF CAREER Award IIS-2239780, and a Sloan Research Fellowship.  This work has been made possible in part by a gift from the Chan Zuckerberg Initiative Foundation to establish the Kempner Institute for the Study of Natural and Artificial Intelligence.  BSR was also supported by the National Institutes of Health Molecular Biophysics Training Grant NIH/ NIGMS T32 GM008313.  We thank Jacob Zavatone-Veth and Blake Bordelon for thoughtful discussion and comments on this manuscript.

\bibliography{references}

\clearpage

\appendix

\setcounter{figure}{0}
\renewcommand{\thefigure}{S\arabic{figure}}

 \section{Table of Parameters from Proposition 2 and Figures 2,3,4} \label{ParamTableSection}

\begin{table}[h]
    \centering
    \begin{tabular}{c|l}
         Data Scale &   \(s\)\\ 
         Correlation Strength &   \(c\)\\ 
         Data-Task Alignment &   \(\rho\)\\
         Readout Noise Scale &   $\eta$\\
         Feature noise scale &  $\omega$ \\
         Label Noise Scale &   \( \zeta \)\\ 
         Ensemble Size &   \(k\)\\ 
         Subsampling Fractions &   \(\nu_{rr}\) \quad \(r = 1, \dots, K\)\\
         Subsampling Overlap Fractions &   \(\nu_{rr'} \quad r \neq r' \)\\ 
         Sample Size &   \(P \to \infty\)\\ 
         Data Dimensionality &   \(M \to \infty\)\\ 
         Sample Complexity &   \(\alpha \equiv \frac{P}{M}\) \quad \text{(finite)}\\ 
         Effective Noise-To-Signal Ratios &   \(H \equiv \frac{\eta^2}{s(1-c)}\), \(W \equiv \frac{\omega^2}{s(1-c)}\), \(Z \equiv \frac{\zeta^2}{s(1-c)}\)
    \end{tabular}
    \label{tab:equicorr_tab}
\end{table}

\section{Code Availability and Compute} \label{CodeAvailabilitySection}
All Code used in this paper has been made available online (see https://github.com/benruben87/Learning-Curves-for-Heterogeneous-Feature-Subsampled-Ridge-Ensembles.git). This includes code used to perform numerical experiments, calculate theoretical learning curves, and produce plots as well as the custom Mathematica libraries used to simplify the generalization error in the special case of equicorrelated data.  The compute time required to do all of the calculations in this paper is approximately 3 GPU days.

\section{Homogeneous and Heterogeneous Subsampling} \label{HomHetEnsemblingSection}

In this section, we describe the homoegeneous and heterogeneous subsampling strategies, as used in this work, in detail.  This sampling method was used in calculating the theoretical loss curves for heterogeneous subsampling experiments seen in main text Fig. \ref{fig3}, and in the heterogeneous subsampling experiments applied to the ResNext-features-based image classification task in Figs. \ref{fig3}e, \ref{ResNextHomVHetFig}.
 
 In homogeneous ensembling, the subsampling fractions $\nu_{rr} = N_r/M$ are chosen as $\nu_{rr} = 1/k$ for all $r = 1, \dots, k$. In heterogeneous ensembling, the subsampling fractions $\{\nu_{11}, \dots, \nu_{kk}\}$ are generated according to the following statistical process:
\begin{enumerate}
    \item Each fraction $\nu_{rr}$ is drawn independently as $\nu_{rr} \sim \Gamma_{k, \sigma}$, where  a $\Gamma_{k, \sigma}$ represents a Gamma distribution with mean $\frac{1}{k}$ and variance $\sigma^2$.  
    \item The fractions are re-scaled in order to sum to 1: $\nu_{rr} \rightarrow \nu_{rr}/(\nu_1 + \cdots + \nu_k)$
\end{enumerate}
Equivalently, the subsampling fractions are drawn from a Dirichlet distribution parameterized by the ensemble size $k$ and a chosen variance $\sigma$ as $(\nu_{11}, \dots, \nu_{kk}) \sim \Dir( (\sigma k)^{-2}, \dots, (\sigma k)^{-2} )$ \cite{Devroye1986}.  

In main text Fig. \ref{fig3}c, we combine this sampling strategy with theoretical learning curves for the equicorrelated data model in a quasi-numerical experiment.  At each trial of the experiment, we draw a particular realization of the subsampling fractions $\{\nu_{11}, \dots, \nu_{kk}\}$, then use the analytical expression  (eq. \ref{EquiCorrGeneralizationError}) to calculate the resulting learning curve. Dotted lines show the loss curves for 3 single trials, corresponding to three particular realizations of the subsampling fractions.  The solid lines show the average over 100 trials. 

Note that we have defined our own convention for the parameterization of the $\Gamma$ distribution in which the inverse of the mean and the standard deviation are specified.  In terms of the standard ``shape'' and ``scale'' parameters, we have:
\begin{equation}
    \Gamma_{k, \sigma} \equiv \Gamma\left( \text{shape } = (k \sigma)^{-2}, \text{scale } = k \sigma^2 \right)
\end{equation}

\section{Numerical Linear Regression with Synthetic Datasets} \label{NumLinReg}

Numerical experiments were performed using the PyTorch library \cite{PyTorch}.  The code used to perform numerical experiments and generate plots has been made publicly available (see section \ref{CodeAvailabilitySection}). 

In numerical regression experiments, synthetic datasets with label noise are constructed as described in section \ref{Setup}, drawing data randomly from multivariate Gaussian distributions and adding label noise (see ``DatasetMaker.py'' in available code).  Representing the training set in terms of a data matrix $\bm{\Psi} \in \mathbb{R}^{M \times P}$ in which column $\mu$ consist of the training point $\bm{\psi}_\mu$, and the labels with a column vector $\bm{y}$ such that $\bm{y}_\mu = y_\mu$, the learned weights are calculated as:

\begin{equation}
    \hat{\bm{w}} = \bm{\Psi} \left( \bm{\Psi}^\top \bm{\Psi} + \lambda \bm{I}_p \right)^{-1} \bm{y}
\end{equation}
In the ridgeless case, a pseudoinverse is used:
\begin{equation}
    \hat{\bm{w}} = \bm{\Psi}^\dag \bm{y}
\end{equation}

\section{Ensembled Linear Classification of Imagenet Images} \label{ResNextAppendix}

In this section, we provide the details of numerical experiments which demonstrate that qualitative insights gained from our analysis of the linear regression task with Gaussian data carries over to a practical machine learning task.  In particular, we apply ensembles of linear classifiers to datasets constructed using a pre-trained ResNext \cite{ResNextxie2017}  a specific type of Convolutional Neural Network (CNN).  

\subsection{Dataset Construction}

To construct the dataset, we start with a set of $n$ images $\{\bm{x}^\mu\}_{\mu = 1}^n$ from a subset of $C=10$ classes of the imagenet dataset.  For each image $\bm{x}^\mu$, we obtain a corresponding feature vector $\bm{\psi}^\mu \in \mathbb{R}^M$ as the last-hidden-layer  activation of the ResNext \cite{ResNextxie2017}, which has been pre-trained on the imagenet classification task \cite{deng2009imagenet}. The architecture we use produces $M = 2048$ features per image.  These features will serve as the data input to the downstream linear classifier.  The corresponding labels $\bm{y}^\mu \in \mathbb{R}^C$ are one-hot vectors.

We construct two datasets using this method, using images from the ``Imagenette'' and ``Imagewoof'' datasets \cite{imagenette}.  For the ``Imagenette'' task, we a construct a training set of size $n_{tr} = 9469$ and a test set of size $n_{test} = 3925$ containing features corresponding to images from 10 unrelated classes (tench, English springer, cassette player, chain saw, church, French horn, garbage truck, gas pump, golf ball, parachute).  For the ``Imagewoof'' task, we a construct a training set of size $n_{tr} = 9025$ and a test set of size $n_{test} = 3929$ containing features corresponding to images of 10 different dog breeds (Australian terrier, Border terrier, Samoyed, Beagle, Shih-Tzu, English foxhound, Rhodesian ridgeback, Dingo, Golden retriever, Old English sheepdog).  The imagewoof classification task is naturally more difficult.  The statistics of the resulting datasets are described in Fig. \ref{ResNextStatsFig}, where we plot the data-data covariance matrix, feature-feature covariance matrix, and the eigenvalue spectrum for both the ``Imagenette'' and ``Imagewoof'' tasks.

\begin{figure}
    \centering
    \includegraphics[width = .95\textwidth]{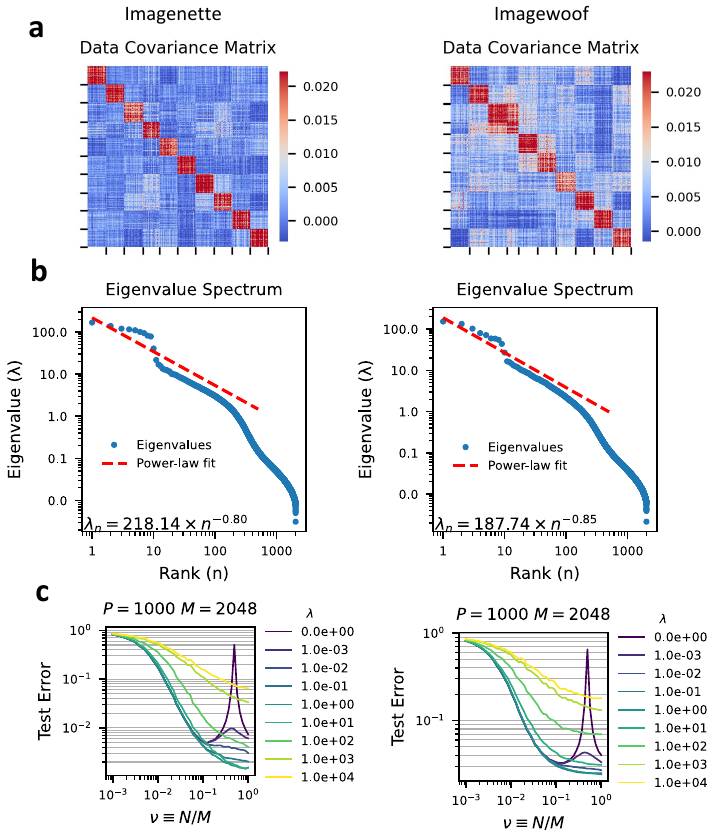}
    \caption{In numerical experiments, we train linear classifiers to predict labels of imagenet images based on their last-hidden-layer representations in a pre-trained RexNext deep learning architecture \cite{ResNextxie2017}.  Here, we show the structure of the datasets constructed using the ResNext feature map for the Imagenette task (left), which consists of categorizing images from 10 unrelated categories, and the Imagewoof task (right), which consists of categorizing images from 10 different dog breeds.  (a) Gram matrix of the centered ResNext features defined as $\frac{1}{P} \left( \bm{\Phi}-\bar{\bm{\Phi}} \right)^\top \left( \bm{\Phi}-\bar{\bm{\Phi}} \right)$ for data matrix $\Phi \in \mathbb{R}^{P \times M}$ where $P$ is the total size of the dataset.  Dataset is sorted by label and tick marks show the boundaries between classes. (b)  The covariance eigenspectrum of the ResNext features is well described by a power law decay. (c) Generalization error of Linear classification with a single linear predictor with access to a fraction $\nu = N/M$ of the ResNext features averaged over 100 trials (see discussion in section \ref{ResNextSubsampSection})}
    \label{ResNextStatsFig}
\end{figure}

\subsection{Model Training} \label{LinClassTrainingSection}

 At training time a dataset of $P$ examples is constructed: $\mathcal{D} = \{\bm{\psi}^\mu, \bm{y}^\mu\}_{\mu = 1}^P$.  We represent the training set with a ``design matrix'' $\bm{\Phi}\in \mathbb{R}^{P \times M}$ and a label matrix $\bm{Y} \in \{0,1\}^{P \times C}$.
The loss function for each ensemble member is generalized to multi-class regression. With ridge regularization, the objective for each ensemble member $r$ becomes:

\begin{align*}
     \hat{\bm{W}}_r &= \argmin_{\bm{W}_r \in \mathbb{R}^{N_r \times C}} \left[ \sum_{\mu = 1}^P \| \frac{1}{\sqrt{N_r}} \bm{A}_r \bar{\bm{\psi}}^\mu \bm{W}_r + \bm{\xi}_r^\mu - \bm{y}^\mu \|_2^2 + \lambda_r \|\bm{W}_r\|^2_F \right],
\end{align*}

where $\|\cdot\|_F$ denotes the Frobenius norm and where the readout noise vector $\bm{\xi}_r^\mu \sim \mathcal{N}(0, \eta^2 \bm{I}_C)$ is a $C$ - dimensional readout noise which corrupts the model's prediction.  The weights $\hat{\bm{W}}_r$ can be determined in closed form as follows:
\begin{align*}
\hat{\bm{W}}_r &= \frac{1}{\sqrt{N_r}} \left( \frac{1}{N_r} \bm{A}_r \bm{\Phi}^\top \bm{\Phi} \bm{A}_r^\top + \lambda_r \bm{I} \right)^{-1} \left( \bm{A}_r \bm{\Phi}^\top (\bm{Y} - \bm{\Xi}_r) \right).
\end{align*}
where we have defined $ [\bm{\Xi}_r]_{\mu c} = [\bm{\xi}_r^\mu]_c$.  When $\lambda = 0$ we instead use a pseudoinverse rule. In all experiments presented here, we use measurement matrices $\bm{A}_r$ which implement an ordinary subsampling of the features, so that the rows and columns of $\bm{A}_r$ consist of one-hot vectors.

\subsection{Model Prediction} \label{LinClassPredictionSection}

Once trained, the learned weights may be used to predict the label of a new example $\bm{\psi}$ as follows.  For $r = 1, \dots, k$ we calculate the prediction of each ensemble member by first assigning each class a ``score''.  The scores of predictor $r$ are stored in a vector $\bm{f}_r\in \mathbb{R}^C$:

\begin{align*}
    \bm{f}_r(\bm{\psi}) &= \frac{1}{\sqrt{N_r}} \bm{A}_r \bm{\psi}  \hat{\bm{W}}_r + \bm{\xi}_r
\end{align*}

Where $\bm{\xi}_r \sim \mathcal{N}(0, \eta^2 \bm{I}_C)$ is drawn randomly at model evaluation.  Each ensemble member's ``vote'' corresponds to the class with the largest score.  The prediction of the ensemble is then calculated as a majority vote of the ensemble members.  Generalization error is then calculated as the probability of misclassifying an example from the test set.

\subsection{Linear Classification Experiments}

We apply the described majority-vote linear classifier ensembles to the ResNext-Imagenette and ResNext-Imagewoof tasks in three different experiments.

\subsubsection{Reduncancy of ResNext Features} \label{ResNextSubsampSection}

In the first experiment, we investigate the performance of a single linear classifier ($k = 1$) as the fraction of features $\nu$ which it has access to varies.  We set $P = 1000$ and vary $\nu$ over 50 values on a logarithmic scale from $10^{-3}$ to $1$.  We also vary the regularization strength over $0$ and a logarithmic scale from $10^-3$ to $10^4$.  We average over 100 trials.  At each trial the particular subset of $P = 1000$ training examples and the particular subset of $\nu*M$ features is randomly re-sampled.  We find that ResNext features are highly redundant -- classification accuracy is very robust to subsampling of the features.  For example, in the Imagewoof classification task with the best regularization tested ($\lambda = 0.1$), test error increases from about 1\%  to about 2\% as the subsampling fraction decreases from $\nu = 1$  to $\nu = 0.1$ (meaning 90\% of the features are ignored) (see Fig. \ref{fig3}d).  Similarly, for the imagenette task, test error increases from about 0.1\%  to about 0.2\% as  the subsampling fraction decreases from $\nu = 1$  to $\nu = 0.1$ (see Fig. \ref{ResNextStatsFig}c).  Code used to run these experiments may be found in the folder ``DeepNet\_Subsamp'' in the GitHub repository.

\subsubsection{Heterogeneous Ensembling Mitigates Double-Descent} \label{ResNextHomvHetSection}

\begin{figure}
    \centering
    \includegraphics[width = \textwidth]{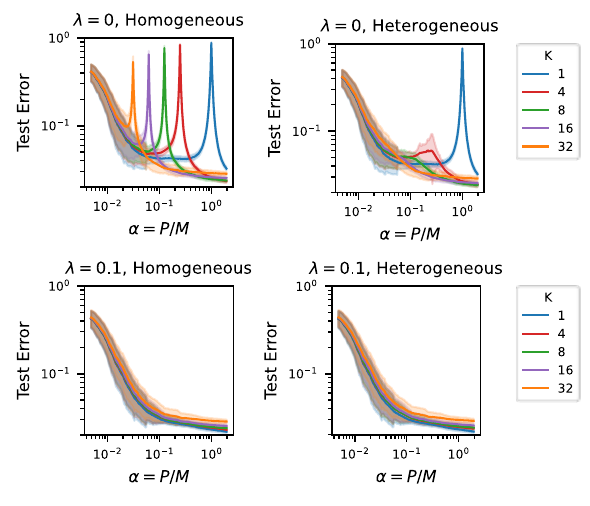}
    \caption{ Results of the experiment described in section \ref{ResNextHomvHetSection}.  Learning curves for ResNext-Imagenewoof classification task in linear classifier ensembles.  In Homogeneous ensembling (left) all $\nu_{rr} = 1/k$ with $k$ indicated by the legend.  In heterogeneous ensembling $\nu_{rr}$ are drawn from a Dirichlet distribution as described in section \ref{HomHetEnsemblingSection}, with $\sigma =3/(4k)$. We use regularization $\lambda=0$ (top) and $\lambda = 0.1$ (bottom). $\lambda = 0, 0.1$.  Lines represent an average over 100 trials, shaded regions show standard deviation.  We set $\nu_{rr'} = 0$ for $r \neq r'$.}
    \label{ResNextHomVHetFig}
\end{figure}

In the second experiment, we compare learning curves for homogeneous ensembling and heterogeneous ensembling applied to the ResNext-Imagewoof classification task.  In each trial, we train ensembles of $k = \{1, 4, 8, 16, 32\}$ linear predictors whose subsampling fractions $\{\nu_{rr}, \dots, \nu_{kk}\}$ are assigned either with Homogeneous ensembling or with Heterogeneous Ensembling with $\sigma = 0.75/k$ \ref{HomHetEnsemblingSection}.  After subsampling fraction are assigned, the training set is randomly shuffled.  We iterate over 50 sample sizes $P$ logarithmically distributed from $400$ to $4000$, and then add the values $M/k$ for each $k$ to the list of $P$ values.  We repeat for 100 trials for both $\lambda_r = 0$ (pseudoinverse rule) and $\lambda = 0.1$, which was found to mitigate double-descent by the parameter sweep in Fig. \ref{ResNextStatsFig}c.  In main text Fig. \ref{fig3}e, we plot the mean learning curves over 100 trials.  In Fig. \ref{ResNextHomVHetFig} we show standard deviation over the 100 trials as shaded error bars.  When $\lambda = 0$, heterogeneous ensembling mitigates double-descent, leading to a monotonically decreasing learning curve for sufficiently high $k$.  When $\lambda = 0.1$, homogeneous and heterogeneous ensembles of size $k$ perform similarly.  Code used to run these experiments may be found in the folder ``DeepNet\_HomVHet'' in the GitHub repository.

\subsubsection{Readout Noise Encourages Ensembling} \label{ResNextNoisyLearningCurvesSection}

In the third experiment, we test the effect of a readout noise which is independent across the members of the ensemble on generalization error.  We do this by sweeping over the readout noise scale $\eta$ as defined in sections \ref{LinClassTrainingSection}, \ref{LinClassPredictionSection}.  For $\lambda = 0, 0.1$ and $\nu = 0, .1, \dots, 1$ we compute the learning curves of linear predictors with $k = 1,2,3,4,5,10, 15, 20, 25, 30, 100, 200$ and all $\nu_{rr} = 1/k$, averaged over 50 trials.  These learning curves are shown in Fig. \ref{ResNextNoisyLearningCurvesFig} for both the ResNext-Imagenette and ResNext-Imagewoof task.  In Figs. \ref{ResNextPhaseDiagramsImagenette}a and \ref{ResNextPhaseDiagramsImagewoof}a, we plot the value $k^*$ which minimizes error as pase diagrams in the parameter space of $\alpha$ and $\eta$, analogous to the phase diagrams in Fig. \ref{fig4}b. We see that the qualitative shape of these phase diagrams is similar to the equicorrelated model.  The differences may be attributed to the nonlinear nature of the classification task.  Furthermore, we find that, in general the optimal $k^*$ tends to be higher with the Imagenette dataset than with the Imagewoof dataset, in agreement with our finding in the equicorrelated regression model that as $\rho$ increases (making the classification task easier), optimal ensemble size tends to increase (Fig. \ref{fig4}b and \ref{PhaseDiagramsHFig}, \ref{PhaseDiagramsWFig}, \ref{PhaseDiagramsZFig}).  In Fig \ref{ResNextNoisyLearningCurvesFig} we see that there are often a number of $k$ values for which test error is at or near to its lowest.  To quantify this, we also plot diagrams of the minimum and maximum values of  $k$ that are within an small margin $\epsilon$ of the minimum measured error.  For the ResNext-Imagenette task, we use $\epsilon = 0.001$ and for ResNext-Imagewoof $\epsilon = 0.01$.  We see that there is a wide array of $k$ values which bring error near-to-minimum in practice (Figs \ref{ResNextPhaseDiagramsImagenette}b,c, \ref{ResNextPhaseDiagramsImagewoof}b,c).  We also plot the minimum achieved error, and the difference between minimum errors at $\lambda=0.1, 0$ (Figs. \ref{ResNextPhaseDiagramsImagenette}d, \ref{ResNextPhaseDiagramsImagewoof}d).  Code used to run these experiments may be found in the folder ``DeepNet\_PD'' in the GitHub repository.

\begin{figure}
    \centering
    \includegraphics[width = \textwidth]{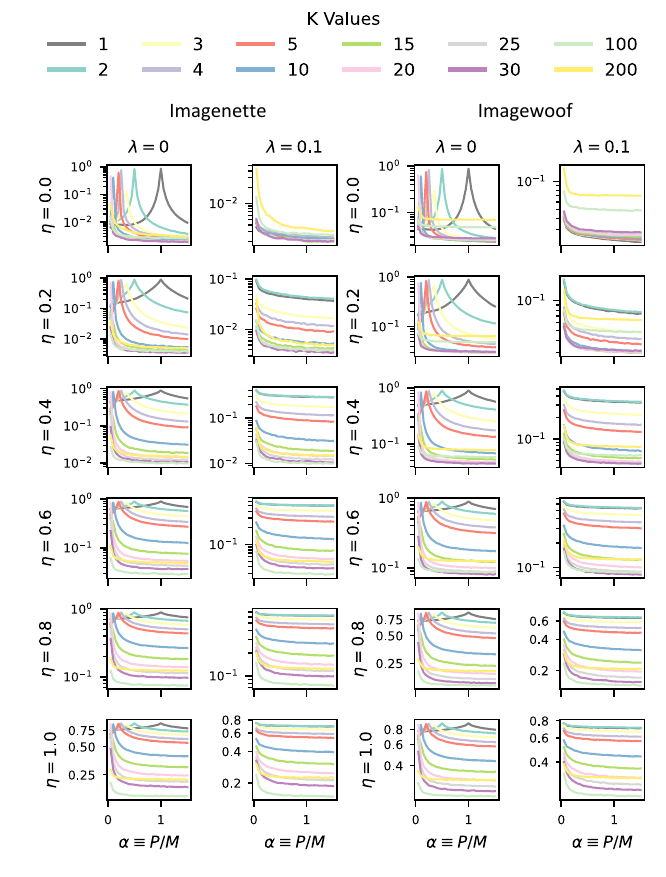}
    \caption{ Learning curves for ensembles of linear classifiers with homogeneous subsampling for $\lambda = 0, 0.1$ and readout noise $\eta$ values indicated in the figure.  Results are averaged over 50 trials.  Experiments are described in section \ref{ResNextNoisyLearningCurvesSection}}
    \label{ResNextNoisyLearningCurvesFig}
\end{figure}

\begin{figure}
    \centering
    \includegraphics[width = \textwidth]{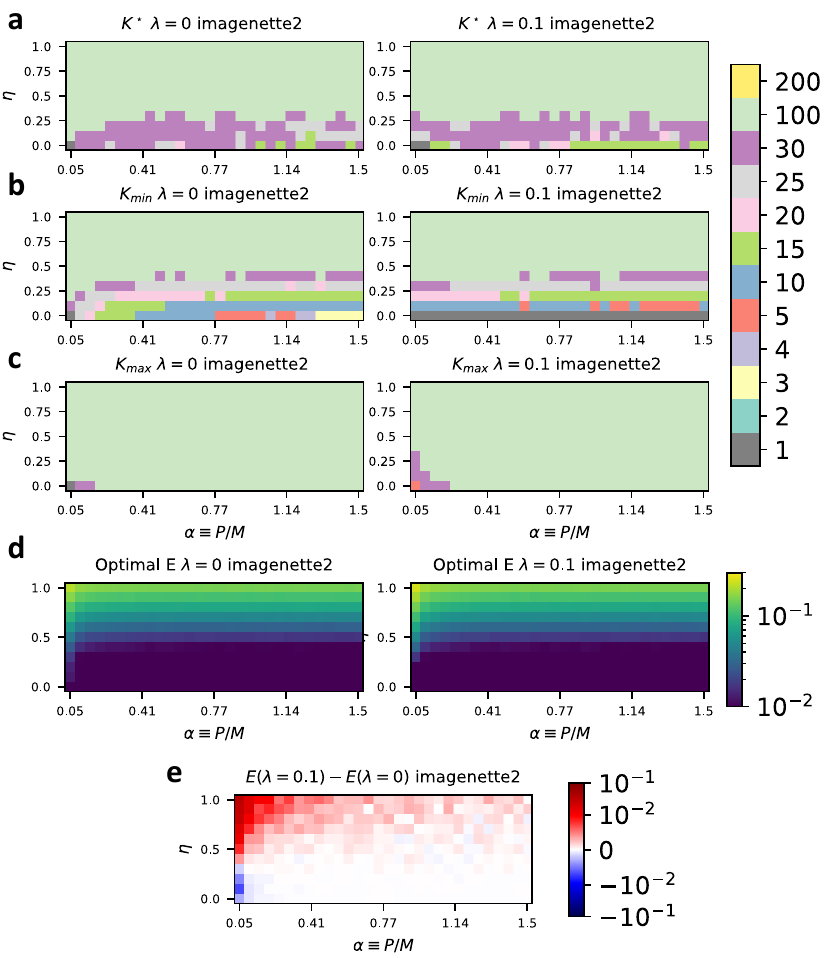}
    \caption{ Diagrams described in section \ref{ResNextNoisyLearningCurvesSection} for the ResNext-Imagenette experiment.  Using learning curves in Fig. 
 \ref{ResNextNoisyLearningCurvesFig} (and for additional values of $\eta$ not shown there) , we plot (a)  Optimal $k^*$ in the parameter space of $\alpha$ and $\eta$, (b) Minimum value of $k$ for which error is within a tolerance $\epsilon = .001$ of its value for $k^*$, (b) Maximum value of $k$ for which error is within a tolerance $\epsilon = .001$ of its value for $k^*$,  (d) the value of the minimum error $E(k^*)$, and (e) the difference between this optimal error for $\lambda = 0.1$ and $\lambda = 0$.}
 \label{ResNextPhaseDiagramsImagenette}
\end{figure}

\begin{figure}
    \centering
    \includegraphics[width = \textwidth]{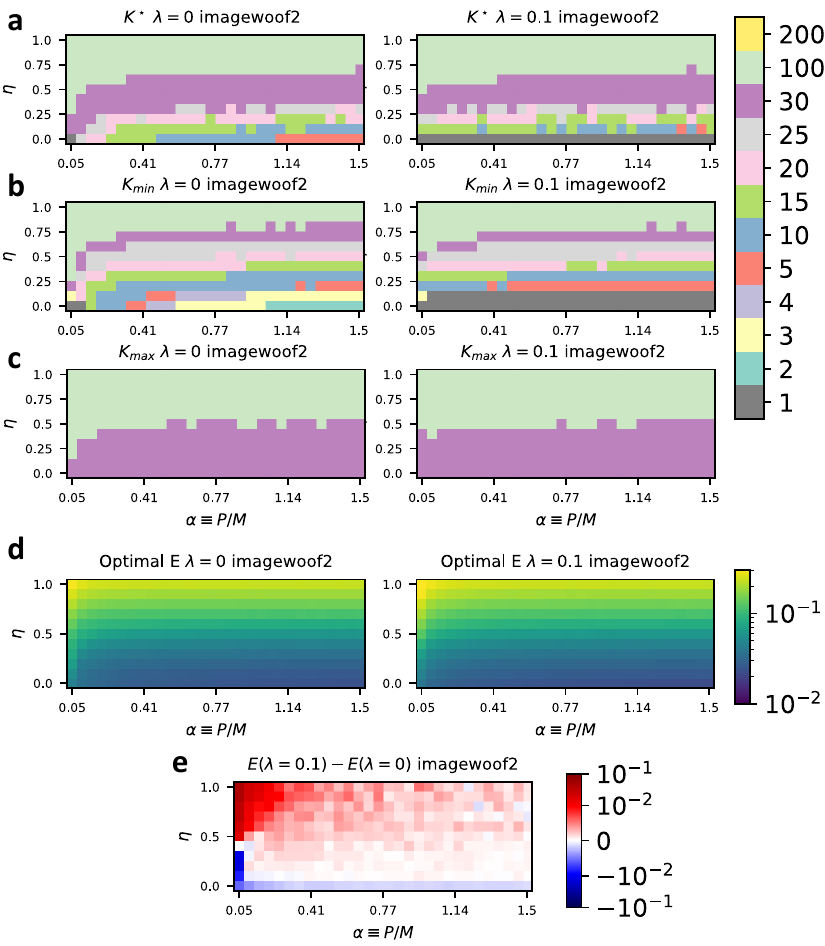}
    \caption{  Diagrams described in section \ref{ResNextNoisyLearningCurvesSection} for the ResNext-Imagewoof experiment.  Using learning curves in Fig. 
 \ref{ResNextNoisyLearningCurvesFig} (and for additional values of $\eta$ not shown there) , we plot (a)  Optimal $k^*$ in the parameter space of $\alpha$ and $\eta$, (b) Minimum value of $k$ for which error is within a tolerance $\epsilon = .001$ of its value for $k^*$, (b) Maximum value of $k$ for which error is within a tolerance $\epsilon = .001$ of its value for $k^*$,  (d) the value of the minimum error $E(k^*)$, and (e) the difference between this optimal error for $\lambda = 0.1$ and $\lambda = 0$.}
    \label{ResNextPhaseDiagramsImagewoof}
\end{figure}

\clearpage

\section{Generalization error of ensembled linear regression from the replica trick} \label{ReplicaCalcSection}

Here we use the Replica Trick from statistical physics to derive analytical expressions for $E_{rr'}$.  We treat the cases where $r=r'$ and $r \neq r'$ separately.  Following a statistical mechanics approach, we calculate the average generalization error over a Gibbs measure with inverse temperature $\beta$;

\begin{align}
    Z &= \int \prod_r d\bm{w} \exp \left( - \frac{\beta}{2} \sum_{r} E_t^r - \frac{M \beta}{2} \sum_{r, r'} J_{rr'} E_{rr'}(\bm{w_r}, \bm{w}_{r'}) \right)\\
    E_t^r &=  \sum_{\mu = 1}^P \left( \frac{1}{\sqrt{N_r}} {\bm{w}}_r^\top \bm{A}_r\bar{\bm{\psi}}_\mu  + \xi_r - y_\mu \right)^2 + \lambda |\bm{w}_r^2|
\end{align}

In the limit where $\beta \rightarrow \infty$ the gibbs measure will concentrate around the weight vector $\hat{\bm{w}}_r$ which minimizes the regularized loss function.  The replica trick relies on the following identity:

\begin{equation}
    \langle \log( {Z}[\mathcal{D}] )\rangle_{\mathcal{D}} =  \lim_{n \rightarrow 0} \frac{1}{n} \log \left(\langle {Z}^n \rangle_{\mathcal{D}} \right)
\end{equation}

where $\langle \cdot \rangle_\mathcal{D}$ represents an average over all quenched disorder in the system.  In this case, quenched disorder -- the disorder which is fixed prior to and throughout training of the weights -- consists of the selected training examples along with their feature noise and label noise:  $\mathcal{D} = \{ \bm{\psi}_\mu, \bm{\sigma}^\mu, \epsilon^\mu \}_{\mu = 1}^P$.  The calculation proceeds by first computing the average of the replicated partition function assuming $n$ is a positive integer. Then, in a non-rigorous but standard step, we analytically extend the result to $n \to 0$.

\subsection{Diagonal Terms}

We start by calculating $E_{rr}$ for some fixed choice of $r$.  This derivation partially follows section D.3 from \cite{atanasov2023the}, with the addition of readout noise and label noise. Noting that the diagonal terms of the generalization error $E_{rr}$ only depend on the learned weights $\bm{w}_r$, and the loss function separates over the readouts, we may consider the Gibbs measure over only these weights:

\begin{align}
    Z &= \int d\bm{w}_r \exp \left( -\frac{\beta}{2 \lambda} E^t_r  - \frac{JM\beta}{2} E_{rr}(\bm{w}_r) \right) 
\end{align}

\begin{equation}
    \begin{split}
    \langle {Z}^n \rangle_{\mathcal{D}} &= \int \prod_{a} d\bm{w}^a_r \mathbb{E}_{\{\bm{\psi}_\mu, \bm{\sigma}^\mu, \epsilon^\mu\}} \\
    \exp & \left( -\frac{\beta M}{2 \lambda} \sum_{\mu, a} \frac{1}{M} \left[ \frac{1}{\sqrt{\nu_{rr}}}\bm{w}_{r}^{a\top} \bm{A}_{r} \left( \bm{\psi}_\mu + \bm{\sigma}^\mu \right) - \bm{w}^{* \top} \bm{\psi}_\mu - \sqrt{M} (\epsilon^\mu - \xi_r^\mu)  \right]^2 \right.\\
    &\left. - \frac{\beta}{2} \sum_{a} |\bm{w}_{r}^a|^2 - \frac{JM\beta}{2} \sum_{a} E_{rr}(\bm{w}^a) \right)
    \end{split}
\end{equation}

Next we must perform the averages over quenched disorder. We first integrate over $\{\bm{\psi}_\mu, \bm{\sigma}^\mu, \xi_r^\mu, \epsilon^\mu\}_{\mu = 1}^P$.  Noting that the scalars

$$h_\mu^{ra} \equiv \frac{1}{\sqrt{M}} \left[ \frac{1}{\sqrt{\nu_{rr}}}\bm{w}_{r}^{a\top} \bm{A}_{r} \left( \bm{\psi}_\mu + \bm{\sigma}^\mu \right)  - \bm{w}^{* \top} \bm{\psi}_\mu - \sqrt{M} (\epsilon^\mu - \xi_r^\mu)   \right] $$
are Gaussian random variables (when conditioned on $A_{r}$) with mean zero and covariance:

\begin{align}
    \langle h_\mu^{ra} h_{\nu}^{rb} \rangle &= \delta_{\mu \nu} Q^{rr}_{ab}\\
    \begin{split}
    	 Q^{rr}_{ab} &= \frac{1}{M} \left[ \left(\frac{1}{\sqrt{\nu_{rr}}}\bm{w}_{r}^{a \top} \bm{A}_{r} - \bm{w}^{*\top}\right) \bm{\Sigma}_{s} \left(\frac{1}{\sqrt{\nu_{rr}}} \bm{A}_{r}^\top \bm{w}_{r}^b- \bm{w}^* \right) \right. \\ 
      + & \frac{1}{\nu_{rr}} \bm{w}_{r}^{a\top}\bm{A}_{r} \bm{\Sigma}_0 \bm{A}_{r}^\top \bm{w}_{r}^b \left.+ M (\zeta^2 + \eta_r^2) \right]
        \end{split}
\end{align}

To perform this integral we re-write in terms of $\{ \bm{H}_\mu^r \}_{\mu = 1}^P$, where 

\begin{equation}
    \bm{H}_\mu^r = 
    \begin{bmatrix}
        h_\mu^{r1}\\
        h_\mu^{r2}\\
        \vdots\\
        h_\mu^{rn}
    \end{bmatrix}
    \in \mathbb{R}^{n}
\end{equation}

\begin{equation}
    \langle {Z}^n \rangle_{\mathcal{D}} = \int \prod_{a} d\bm{w}^a_r \mathbb{E}_{\{\psi_\mu, \bm{\sigma}^\mu, \bm{\epsilon}^\mu\}}
    \exp \left( -\frac{\beta}{2\lambda} \sum_{\mu} \bm{H}_{\mu}^{r\top} \bm{H}_{\mu}^{r} - \frac{\beta}{2} \sum_{a} |\bm{w}_{r}^a|^2 - \frac{JM\beta}{2} \sum_{a} E_{rr}(\bm{w}^a) \right)
\end{equation}

Integrating over the $\bm{H}_\mu^r$ we get:

\begin{equation}
    \begin{split}
    \langle Z^n \rangle_{\mathcal{D}} =\int \prod_{a} d\bm{w}^a_r \exp \left( -\frac{P}{2} \log \det \left(\bm{I}_{n} + \frac{\beta}{\lambda} \bm{Q}^{rr} \right) - \frac{\beta}{2} \sum_{a} |\bm{w}_r^a|^2 - \frac{JM\beta}{2} \sum_{a}E_{rr}(\bm{w}_r) \right)
    \end{split}
\end{equation}

Next we integrate over $\bm{Q}^r$ and add constraints.  We use the following identity:

\begin{equation}
\begin{split}
    1 = \prod_{a b'} \int dQ^{rr}_{ab} & \delta \left( Q_{ab}^{rr}- \frac{1}{M} \left[ \left(\frac{1}{\sqrt{\nu_{rr}}}\bm{w}_{r}^{a \top} \bm{A}_{r} - \bm{w}^{*\top}\right) \bm{\Sigma}_{s} \left(\frac{1}{\sqrt{\nu_{rr}}} \bm{A}_{r}^\top \bm{w}_{r}^b- \bm{w}^* \right) \right. \right. \\ 
       &\left. +  \frac{1}{\nu_{rr}} \bm{w}_{r}^{a\top}\bm{A}_{r} \bm{\Sigma}_0 \bm{A}_{r}^\top \bm{w}_{r}^b \left.+ M (\zeta^2+ \eta_r^2) \right] \right)
\end{split}
\end{equation}

Using the Fourier representation of the delta function, we get:

\begin{equation}
    \begin{split}
        1 = \prod_{a b} \int \frac{1}{4\pi i / M}dQ_{ab}^{rr}  d \hat{Q}_{ab}^{rr} &\exp \left(\frac{M}{2} \hat{Q}_{ab}^{rr} \left( Q_{ab}^{rr} - \frac{1}{M} \left[ \left(\frac{1}{\sqrt{\nu_{rr}}}\bm{w}_{r}^{a \top} \bm{A}_{r} - \bm{w}^{*\top}\right) \bm{\Sigma}_{s} \left(\frac{1}{\sqrt{\nu_{rr}}} \bm{A}_{r}^\top \bm{w}_{r}^b- \bm{w}^* \right) \right. \right. \right.\\ 
       & \left. \left. +  \frac{1}{\nu_{rr}} \bm{w}_{r}^{a\top}\bm{A}_{r} \bm{\Sigma}_0 \bm{A}_{r}^\top \bm{w}_{r}^b \left.+ M  (\zeta^2+ \eta_r^2) \right] \right) \right)
    \end{split}
\end{equation}

Inserting this identity into the replicated partition function and substituting $E_{rr}(\bm{w}^a_r) = Q^{rr}_{aa} - \zeta^2$ we find:

\begin{equation}
    \begin{split}
        &\langle Z^n \rangle_{\mathcal{D}} \propto  \\
        &\int  \prod_{a b} dQ_{ab}^{rr} d\hat{Q}_{ab}^{rr} \exp \left( -\frac{P}{2} \log \det \left(\bm{I}_{n} + \frac{\beta}{\lambda} \bm{Q}^{rr} \right) + \frac{1}{2} \sum_{ab} M  \hat{Q}_{ab}^{rr} Q_{ab}^{rr} - \frac{JM\beta}{2} \sum_{a}(Q^{rr}_{aa} - \zeta^2)\right)\\
              &\int \prod_{a} d\bm{w}^a_r \exp \left( - \frac{\beta}{2} \sum_{a} |\bm{w}_r^a|^2 - \frac{1}{2} \sum_{ab} \hat{Q}_{ab}^{rr} \left[ \left( \frac{1}{\sqrt{\nu_{rr}}} \bm{w}_{r}^{a \top} \bm{A}_{r} - \bm{w}^{*\top}\right) \bm{\Sigma}_{s} \left(\frac{1}{\sqrt{\nu_{rr}}}\bm{A}_{r}^\top \bm{w}_{r}^b- \bm{w}^*\right) \right. \right. \\
    &\left.  \left. + \frac{1}{{\nu_{rr}}} \bm{w}_{r}^{a\top}\bm{A}_r\bm{\Sigma}_0\bm{A}_{r}^\top \bm{w}_{r}^b +   M  (\zeta^2+ \eta_r^2) \right] \right)
    \end{split}
\end{equation}

In order to perform the Gaussian integral over the $\{ \bm{w}_r^a \}$, we unfold over the replica index $a$.  We first define the following:

\begin{align}
    \bm{w}^{\cdot}_r &\equiv \begin{bmatrix} \bm{w}_r^1 \\ \vdots \\ \bm{w}_r^n \end{bmatrix}\\
    T^{r} &\equiv \beta \bm{I}_n \otimes \bm{I}_{N_r} + \hat{\bm{Q}}^{rr} \otimes  \left(\frac{1}{\nu_{rr}}\bm{A}_r(\bm{\Sigma}_s + \bm{\Sigma}_0)\bm{A}_{r}^\top \right) \\
    V^r &\equiv (\hat{\bm{Q}}^{rr}\otimes \bm{I}_{N_r}) (\bm{1}_{n} \otimes \frac{1}{\sqrt{\nu_{rr}}} \bm{A}_r \bm{\Sigma}_s \bm{w}^*)
\end{align}

We then have for the integral over $w$

\begin{align}
         &\int  d\bm{w}^\cdot_r \exp \left(- \frac{1}{2} \bm{w}_r^{\cdot \top} T^{r} \bm{w}_{r'}^\cdot + V^{r\top} \bm{w}_r^{\cdot} \right)\\
          = & \exp \left( \frac{1}{2} V^{r \top} ({T^{r}})^{-1} V^r - \frac{1}{2} \log \det (T^r) \right)
\end{align}

We can finally write the replicated partition function as:

\begin{equation} 
    \begin{split}
        &\langle Z^n \rangle_{\mathcal{D}} \propto  \\
        &\int \prod_{a b} dQ_{ab}^{rr} d\hat{Q}_{ab}^{r} \exp \left( -\frac{P}{2} \log \det \left(\bm{I}_{n} + \frac{\beta}{\lambda} \bm{Q}^{rr} \right) + \frac{1}{2} \sum_{ab} M  \hat{Q}_{ab}^{rr} Q_{ab}^{rr} - \frac{JM\beta}{2} \sum_{a}(Q^{rr}_{aa} - \zeta^2)\right)\\
             & \exp \left(  \frac{1}{2} V^{r \top} ({T^{r}})^{-1} V^r - \frac{1}{2} \log \det (T^r)  - \frac{1}{2} \sum_{ab} \hat{Q}_{ab}^{rr} (M  (\zeta^2+ \eta_r^2) + \bm{w}^{*\top} \Sigma_s \bm{w}^*) \right)
    \end{split}
\end{equation}

We now make the following replica-symmetric ansatz:

\begin{align}
    Q_{ab}^{rr} = \beta^{-1} q \delta_{ab} + q_0\\
    \hat{Q}_{ab}^{rr} = \beta \hat{q} \delta_{ab} + \beta^2 \hat{q}_0
\end{align}

which is well-motivated because the loss function is convex.  We will verify that the chosen scalings are self-consistent in the zero-temperature limit where $\beta \to \infty$.  We may then rewrite the partition function as follows:

\begin{align}
    \langle Z^n \rangle_{\mathcal{D}} = \exp \left( -\frac{nM}{2} \mathfrak{g}\left[q, q_0, \hat{q}, \hat{q}_0 \right] \right)
\end{align}

Where the effective action is written:


\begin{equation}
\begin{split}
    & \mathfrak{g}\left[q, q_0, \hat{q}, \hat{q}_0 \right] = \alpha \left[ \log (1 + \frac{q}{\lambda}) + \frac{\beta q_0}{\lambda + q} \right] - (q \hat{q} + \beta q \hat{q}_0 + \beta q_0 \hat{q}) + \beta J \left[ (\beta^{-1} q + q_0) - \zeta^2 \right] \\
    & -\frac{\beta}{\nu_{rr} M} \hat{q}^2 \bm{w}^{*\top} \bm{\Sigma}_s \bm{A}_r^\top \bm{G}^{-1} \bm{A}_r \bm{\Sigma}_s \bm{w}^* + \frac{1}{M} \left[ \log \det (\bm{G}) + \beta \hat{q}_0 \tr [\bm{G}^{-1} \tilde{\bm{\Sigma}}]\right] + \beta \hat{q} \left(  \zeta^2+ \eta_r^2 + \frac{1}{M} \bm{w}^{*\top} \bm{\Sigma}_s \bm{w}^* \right)
\end{split}
\end{equation}

Where $\bm{G} \equiv \bm{I}_{N_r} + \hat{q} \tilde{\bm{\Sigma}}$ and $\tilde{\bm{\Sigma}} \equiv  \frac{1}{\nu_{rr}} \bm{A}_r(\bm{\Sigma}_s + \bm{\Sigma}_0)\bm{A}_{r}^\top$

To determine the values of the order parameters we set the derivatives of $\mathfrak{g}\left[q, q_0, \hat{q}, \hat{q}_0 \right]$, evaluated at zero source ($J=0$), to zero:

\begin{align}
    \frac{\partial \mathfrak{g}}{\partial q_0} &= 0 = \frac{\alpha \beta}{\lambda +  q} - \beta \hat{q} &\Rightarrow \hat{q} = \frac{\alpha}{\lambda + q} \label{qhatsp}\\
    \frac{\partial \mathfrak{g}}{\partial \hat{q}_0} &= 0 = - \beta q + \frac{\beta}{M} \tr \left[ \bm{G}^{-1} \tilde{\bm{\Sigma}}  \right] &\Rightarrow q = \frac{1}{M} \tr \left[ \bm{G}^{-1} \tilde{\bm{\Sigma}}  \right] \label{qsp}\\
    \frac{\partial \mathfrak{g}}{\partial q} &= 0 = \frac{\alpha}{\lambda + q} - \frac{\alpha q_0 \beta}{(\lambda + \beta q)^2} - \hat{q} - \beta \hat{q}_0 &\Rightarrow \hat{q}_0 = -\frac{\alpha q_0}{(\lambda + \beta q)^2} \label{qhat0sp}
\end{align}

\begin{align}
    \begin{split}
        \frac{\partial \mathfrak{g}}{\partial \hat{q}} &= 0 = -q - \beta q_0 + \beta \zeta^2 + \beta \eta_r^2 +\frac{1}{M} \tr \left[ \bm{G}^{-1} \tilde{\bm{\Sigma}} \right]  - \frac{\beta \hat{q}_0}{M} \tr \left[ \left(\bm{G}^{-1} \tilde{\bm{\Sigma}}\right)^2 \right] \\
        +& \frac{\beta}{M} \bm{w}^{*\top} \left[\bm{\Sigma}_s - \frac{2}{\nu_{rr}} \hat{q} \bm{\Sigma}_s \bm{A}_r^\top \bm{G}^{-1} \bm{A}_r \bm{\Sigma}_s + \frac{1}{\nu_{rr}} \hat{q}^2 \bm{\Sigma}_s \bm{A}_r^\top \bm{G}^{-1} \tilde{\bm{\Sigma}} \bm{G}^{-1}  \bm{A}_r \bm{\Sigma}_s \right]\bm{w}^*
    \end{split}\\
    &\Rightarrow q_0 =  \frac{1}{1-\gamma}  \left(\zeta^2 + \eta_r^2 +  \frac{1}{M} \bm{w}^{*\top} \left[\bm{\Sigma}_s - \frac{2}{\nu_{rr}} \hat{q} \bm{\Sigma}_s \bm{A}_r^\top \bm{G}^{-1} \bm{A}_r \bm{\Sigma}_s + \frac{1}{\nu_{rr}} \hat{q}^2 \bm{\Sigma}_s \bm{A}_r^\top \bm{G}^{-1} \tilde{\bm{\Sigma}} \bm{G}^{-1}  \bm{A}_r \bm{\Sigma}_s \right] \bm{w}^{*}\right) \label{q0sp}
\end{align}

where $\gamma \equiv \frac{\alpha}{M \kappa^2} \tr \left[ (\bm{G}^{-1} \tilde{\bm{\Sigma}})^2\right] $ and $\kappa \equiv \lambda + q$.  We retroactively confirm that the chosen $\beta$-scalings are correct by noticing that the saddle-point equations permit a solution where all order parameters remain $\mathcal{O}(1)$ as $\beta \to \infty$.  The error is given as:
\begin{equation}
    E_{rr} = \lim_{\beta \rightarrow \infty} \frac{1}{\beta} \frac{\partial}{\partial J}   \mathfrak{g} [ q, q_0, \hat{q}, \hat{q}_0] = \lim_{\beta \to \infty }\beta^{-1} q + q_0 - \zeta^2 = q_0 - \zeta^2
\end{equation}

Where $q, q_0, \hat{q}, \hat{q}_0$ are the solutions to the saddle-point equations \ref{qhatsp}, \ref{qsp}, \ref{qhat0sp}, \ref{q0sp}.  Substituting eq. \ref{q0sp} for $q_0$, we obtain:

\begin{align}\label{SingleReadoutEg}
    E_{rr}  &= \frac{1}{1-\gamma} \frac{1}{M} \bm{w}^{\star \top} \left[ \bm{\Sigma}_s - \frac{2}{\nu_{rr}} \hat{q} \bm{\Sigma}_s \bm{A}_r^\top \bm{G}^{-1} \bm{A}_r \bm{\Sigma}_s + \frac{1}{\nu_{rr}} \hat{q}^2 \bm{\Sigma}_s \bm{A}_r^\top \bm{G}^{-1} \tilde{\bm{\Sigma}} \bm{G}^{-1}  \bm{A}_r \bm{\Sigma}_s    \right] \bm{w}^* +  \frac{\gamma \zeta^2 + \eta_r^2}{1-\gamma}   \\
    & = \frac{1}{1-\gamma} \frac{1}{M} \bm{w}^{\star \top} \left[ \bm{\Sigma}_s - \frac{1}{\nu_{rr}} \hat{q} \bm{\Sigma}_s \bm{A}_r^\top \bm{G}^{-1} \bm{A}_r \bm{\Sigma}_s - \frac{1}{\nu_{rr}} \hat{q} \bm{\Sigma}_s \bm{A}_r^\top \bm{G}^{-2} \bm{A}_r \bm{\Sigma}_s   \right] \bm{w}^* +  \frac{\gamma \zeta^2 + \eta_r^2}{1-\gamma}
\end{align}

\subsection{Off-Diagonal Terms}

We now calculate $E_{rr'}$ for $r \neq r'$.  We now must consider the joint Gibbs Measure over $\bm{w}_r$ and $\bm{w}_{r'}$:

\begin{align}
    Z &= \int d\bm{w}_r d\bm{w}_{r'} \exp \left( -\frac{\beta}{2 \lambda} (E^t_r + E^t_{r'})  - \frac{JM\beta}{2} E_{rr'}(\bm{w}_r, \bm{w}_{r'}) \right) \\
\end{align}
\begin{equation}
    \begin{split}
    \langle {Z}^n \rangle_{\mathcal{D}} &= \int \prod_{a} d\bm{w}^a_r d\bm{w}^a_{r'} \mathbb{E}_{\{\psi_\mu, \bm{\sigma}^\mu, \epsilon^\mu\}} \\
    \exp & \left( -\frac{\beta M}{2\lambda} \sum_{\mu, a} \frac{1}{M} \left[\left(h_\mu^{r a}\right)^2 + \left(h_\mu^{r' a}\right)^2 \right]  - \frac{\beta}{2} \sum_{a} \left[|\bm{w}_{r}^a|^2 + |\bm{w}_{r'}^a|^2 \right] - \frac{JM\beta}{2} \sum_{a} E_{rr'}(\bm{w}_r^a, \bm{w}_{r'}^a) \right)
    \end{split}
\end{equation}

Where the $h_\mu^{ra}$ are defined as before.  Next we must perform the averages over quenched disorder.  To do so, we note that the $h_{\mu}^{ra}$ are Gaussian random variables with covariance structure:

\begin{align}
    \langle h_\mu^{ra} h_{\nu}^{r'b} \rangle &= \delta_{\mu \nu} Q^{rr'}_{ab}\\
    \begin{split}
    	 Q^{rr'}_{ab} &= \frac{1}{M} \left[ \left(\frac{1}{\sqrt{\nu_{rr}}}\bm{w}_{r}^{a \top} \bm{A}_{r} - \bm{w}^{*\top}\right) \bm{\Sigma}_{s} \left(\frac{1}{\sqrt{\nu_{r'r'}}} \bm{A}_{r'}^\top \bm{w}_{r'}^b- \bm{w}^* \right) \right. \\ 
      + & \frac{1}{\sqrt{\nu_{rr}\nu_{r'r'}}} \bm{w}_{r}^{a\top}\bm{A}_{r} \bm{\Sigma}_0 \bm{A}_{r'}^\top \bm{w}_{r'}^b \left.+ M \zeta^2 \right]
        \end{split}
\end{align}

To perform this integral we re-write in terms of $\{ \bm{H}_\mu \}_{\mu = 1}^P$, where 

\begin{equation}
    \bm{H}_\mu = 
    \begin{bmatrix}
        \bm{H}_\mu^{r}\\
        \bm{H}_\mu^{r'}\\
    \end{bmatrix}
    \in \mathbb{R}^{2n}
\end{equation}

\begin{equation}
\begin{split}
    \langle {Z}^n \rangle_{\mathcal{D}} &= \int \prod_{a} d\bm{w}_r^a d\bm{w}_{r'}^a \mathbb{E}_{\{\psi_\mu, \bm{\sigma}^\mu, \epsilon^\mu\}}\\
    &\exp \left( -\frac{\beta}{2\lambda} \sum_{\mu} \bm{H}_{\mu}^{\top} \bm{H}_{\mu} - \frac{\beta}{2} \sum_{a}  \left[|\bm{w}_r^a|^2 + |\bm{w}_{r'}^a|^2 \right] - \frac{JM\beta}{2} \sum_{a} E_{rr'}(\bm{w}_r^a, \bm{w}_{r'}^a) \right)
\end{split}
\end{equation}

Integrating over $\bm{H}_\mu$ we get:

\begin{equation}
    \begin{split}
    &\langle Z^n \rangle_{\mathcal{D}} =\int \prod_{a} d\bm{w}_r^a d\bm{w}_{r'}^a \\
    &\exp \left( -\frac{P}{2} \log \det \left(\bm{I}_{2n} + \frac{\beta}{\lambda} \bm{Q} \right) - \frac{\beta}{2} \sum_{a} \left[|\bm{w}_r^a|^2 + |\bm{w}_{r'}^a|^2 \right] - \frac{JM\beta}{2} \sum_{a}E_{rr}(\bm{w}_r^a, \bm{w}_{r'}^a) \right)
    \end{split}
\end{equation}

Where we have defined the matrix $\bm{Q}$ so that:

\begin{equation}
    \bm{Q} = \begin{bmatrix} \bm{Q}^{rr} && \bm{Q}^{rr'} \\ \bm{Q}^{rr'} && \bm{Q}^{r' r'} \end{bmatrix}
\end{equation}

Next we integrate over $\bm{Q}$ and add constraints.  We use the following identity:

\begin{equation}
\begin{split}
    1 = \prod_{a b} \int dQ^{rr'}_{ab} & \delta \left( Q_{ab}^{rr'}- \frac{1}{M} \left[ \left(\frac{1}{\sqrt{\nu_{rr}}}\bm{w}_{r}^{a \top} \bm{A}_{r} - \bm{w}^{*\top}\right) \bm{\Sigma}_{s} \left(\frac{1}{\sqrt{\nu_{r'r'}}} \bm{A}_{r'}^\top \bm{w}_{r'}^b- \bm{w}^* \right) \right. \right. \\
       &\left. +  \frac{1}{\sqrt{\nu_{rr}\nu_{r'r'}}} \bm{w}_{r}^{a\top}\bm{A}_{r} \bm{\Sigma}_0 \bm{A}_{r'}^\top \bm{w}_{r'}^b \left.+ M \zeta^2 \right] \right)
\end{split}
\end{equation}

Using the Fourier representation of the delta function, we get:

\begin{equation}
    \begin{split}
        1 = \prod_{a b} \int \frac{1}{4\pi i / M}dQ_{ab}^{rr'}  d \hat{Q}_{ab}^{rr'} &\exp \left(\frac{M}{2} \hat{Q}_{ab}^{rr'} \left( Q_{ab}^{rr'}- \frac{1}{M} \left[ \left(\frac{1}{\sqrt{\nu_{rr}}}\bm{w}_{r}^{a \top} \bm{A}_{r} - \bm{w}^{*\top}\right) \bm{\Sigma}_{s} \left(\frac{1}{\sqrt{\nu_{r'r'}}} \bm{A}_{r'}^\top \bm{w}_{r'}^b- \bm{w}^* \right) \right. \right. \right.\\ 
       & \left. \left. +  \frac{1}{\nu_{rr}} \bm{w}_{r}^{a\top}\bm{A}_{r} \bm{\Sigma}_0 \bm{A}_{r'}^\top \bm{w}_{r'}^b \left.+ M \zeta^2 \right] \right) \right)
    \end{split}
\end{equation}

Inserting this identity  and the corresponding statements for $Q_{ab}^{rr}$ and $Q_{ab}^{r'r'}$ into the replicated partition function and substituting $E_{rr'}(\bm{w}^a) = Q^{rr'}_{aa} - \zeta^2$ we find:

\begin{equation}
    \begin{split}
        &\langle Z^n \rangle_{\mathcal{D}} \propto \int \prod_{a b r_1 r_2} dQ_{ab}^{r_1 r_2} d\hat{Q}_{ab}^{r_1 r_2} \\
        & \exp \left( -\frac{P}{2} \log \det \left(\bm{I}_{2n} + \frac{\beta}{\lambda} \bm{Q} \right) + \frac{1}{2} \sum_{ab r_1 r_2} M  \hat{Q}_{ab}^{r_1 r_2} Q_{ab}^{r_1 r_2} - \frac{JM\beta}{2} \sum_{a}(Q^{r r'}_{aa} - \zeta^2)\right) \\
              & \int \prod_{a} d\bm{w}_r^a d\bm{w}_{r'}^a \exp \left( - \frac{\beta}{2} \sum_{a} \left[ |\bm{w}_r^a|^2 + |\bm{w}_{r'}^a|^2 \right] - \frac{1}{2} \sum_{ab r_1 r_2} \hat{Q}_{ab}^{r_1 r_2} \left[ \left( \frac{1}{\sqrt{\nu_{r_1}}} \bm{w}_{r_1}^{a \top} \bm{A}_{r_1} - \bm{w}^{*\top}\right) \bm{\Sigma}_{s} \left(\frac{1}{\sqrt{\nu_{r_2}}}\bm{A}_{r_2}^\top \bm{w}_{r_2}^b- \bm{w}^*\right) \right. \right. \\
    &\left.  \left. + \frac{1}{\sqrt{\nu_{r_1} \nu_{r_2}}} \bm{w}_{r_1}^{a\top}\bm{A}_{r_1}\bm{\Sigma}_0\bm{A}_{r_2}^\top \bm{w}_{r_2}^b +   M \zeta^2 \right] \right)
    \end{split}
\end{equation}

Where sums over $r_1$ and $r_2$ run over $\{r, r'\}$.

In order to perform the Gaussian integral over the $\{ \bm{w}_r^a \}$, we unfold in two steps.  We first define the following:

\begin{align}
    \bm{w}^{\cdot}_r &\equiv \begin{bmatrix} \bm{w}_r^1 \\ \vdots \\ \bm{w}_r^n \end{bmatrix}\\
    [\hat{\bm{Q}}^{rr'}]_{ab} &\equiv \hat{Q}_{ab}^{rr'} \\
    \tilde{\bm{\Sigma}}_{rr'} &\equiv \frac{1}{\sqrt{\nu_{rr} \nu_{r'r'}}}  \bm{A}_r [\bm{\Sigma}_s + \bm{\Sigma}_0]\bm{A}_{r'}^\top\\
    T^{rr'} &\equiv \beta \delta_{rr'} \bm{I}_n \otimes \bm{I}_{N_r} + \hat{\bm{Q}}^{rr'} \otimes \tilde{\bm{\Sigma}}_{rr'}
\end{align}

Unfolding over the replica indices, we then get:

\begin{equation}
    \begin{split}
        &\langle Z^n \rangle_{\mathcal{D}} \propto \int \prod_{a b r_1 r_2} dQ_{ab}^{r_1 r_2} d\hat{Q}_{ab}^{r_1 r_2} \\
        & \exp \left( -\frac{P}{2} \log \det \left(\bm{I}_{2n} + \frac{\beta}{\lambda} \bm{Q} \right) + \frac{1}{2} \sum_{ab r_1 r_2} M  \hat{Q}_{ab}^{r_1 r_2} Q_{ab}^{r_1 r_2} - \frac{JM\beta}{2} \sum_{a}(Q^{r r'}_{aa}  - \zeta^2)\right)\\
        &\exp \left(-\frac{1}{2} \sum_{ab r_1 r_2} \hat{Q}_{ab}^{r_1 r_2} \left(\bm{w}^{*\top} \bm{\Sigma}_s \bm{w}^* +  M \zeta^2 \right) \right)\\
              & \int d\bm{w}_r^\cdot d\bm{w}_{r'}^\cdot \exp \left( - \frac{1}{2} \sum_{r_1 r_2} \bm{w}_{r_1}^{\cdot \top} T^{r_1 r_2} \bm{w}_{r_2} + \sum_{r_1 r_2} \left[ (\hat{\bm{Q}}^{r_1 r_2}\otimes \bm{I}_{N_{r_1}}) (\bm{1}_{n} \otimes \frac{1}{\sqrt{\nu_{r_1}}} \bm{A}_{r_1} \bm{\Sigma}_s \bm{w}^*) \right]^\top  \bm{w}_{r_1}  \right)
    \end{split}
\end{equation}

Note that the dimensionality of $T^{r_1 r_2}$ varies for different choices of $r_1$ and $r_2$. Next, we unfold over the two readouts:

\begin{align}
    \bm{w} &\equiv \begin{bmatrix} \bm{w}_r^\cdot \\ \bm{w}_{r'}^\cdot \end{bmatrix}\\
    T & \equiv \begin{bmatrix} T^{rr} & T^{rr'} \\ T^{r'r} & T^{r'r'} \end{bmatrix}\\
    V &\equiv \begin{bmatrix} \left((\hat{\bm{Q}}^{rr}+ \hat{\bm{Q}}^{rr'}) \otimes \bm{I}_{N_r} \right) \left( \bm{1}_n \otimes \frac{1}{\sqrt{\nu_{rr}}} \bm{A}_r \bm{\Sigma}_s \bm{w}^* \right)\\  
    \left((\hat{\bm{Q}}^{r'r'}+ \hat{\bm{Q}}^{r'r}) \otimes \bm{I}_{N_{r'}} \right) \left( \bm{1}_n \otimes \frac{1}{\sqrt{\nu_{r'r'}}} \bm{A}_{r'} \bm{\Sigma}_s \bm{w}^* \right)\end{bmatrix}
\end{align}

The integral over w then becomes:

\begin{align}
    \int d\bm{w} \exp \left(- \frac{1}{2}  \bm{w}^{\top} T \bm{w} + V^\top \bm{w} \right) \propto \exp \left( \frac{1}{2} V^\top T^{-1} V - \frac{1}{2} \log \det T \right)
\end{align}

We are now ready to make a replica-symmetric ansatz.  The order parameter that we wish to constrain is $Q_{ab}^{rr'}$.  Overlaps go between the weights from different replicas of the system as well as different readouts.  The scale of the overlap between two measurements depends on their overlap with each other and with the principal components of the data distribution.  An ansatz which is replica-symmetric but makes no assumptions about the overlaps between different measurements is as follows:

\begin{align}
    Q_{ab}^{r_1 r_2} = \beta^{-1} q^{r_1 r_2} \delta_{ab} + Q^{r_1 r_2}\\
    \hat{Q}_{ab}^{r_1 r_2} = \beta \hat{q}^{r_1 r_2} \delta_{ab} + \beta^2 \hat{Q}^{r_1 r_2}
\end{align}

Next step is to plug the RS ansatz into the free energy and simplify.  To make calculations more transparent, we re-label the paramters in the RS ansatz as follows:


\begin{align}
    \bm{Q}^{rr} &= \beta^{-1} q \bm{I} + Q \bm{1}\bm{1}^\top \\
    \bm{Q}^{r'r'} &= \beta^{-1} r \bm{I} + R \bm{1}\bm{1}^\top \\
    \bm{Q}^{rr'} &= \beta^{-1} c \bm{I} + C \bm{1}\bm{1}^\top \\
    \hat{\bm{Q}}^{rr} &= \beta \hat{q} \bm{I} + \beta^2 \hat{Q} \bm{1}\bm{1}^\top \\
    \hat{\bm{Q}}^{r'r'} &= \beta \hat{r} \bm{I} + \beta^2 \hat{R} \bm{1}\bm{1}^\top \\
    \hat{\bm{Q}}^{rr'} &= \beta \hat{c} \bm{I} + \beta^2 \hat{C} \bm{1}\bm{1}^\top 
\end{align}

In order to simplify $\log \det \left(\lambda \bm{I}_{2n} + \beta \bm{Q} \right)$, we note that this is a symmetric 2-by-2-block matrix, where each block commutes with all other blocks.  We may then use \cite{Silvester2000}'s result to simplify.


\begin{align}
    \log \det \left(\lambda \bm{I}_{2n} + \beta \bm{Q} \right) = n \left[ \log \left( (\lambda + q)(\lambda + r) - c^2 \right) + \beta \frac{(\lambda + q)R + (\lambda + r)Q - 2 c C}{(\lambda + q)(\lambda + r) - c^2} \right] + \mathcal{O}(n^2)
\end{align}

\begin{align}
    \sum_{a b r_1 r_2} \hat{\bm{Q}}_{a b}^{r_1 r_2} {\bm{Q}}_{a b}^{r_1 r_2} = n \left[ \left( q \hat{q} + \beta \hat{q}Q + \beta q \hat{Q} \right) + \left( r \hat{r} + \beta \hat{r} R + \beta r \hat{R} \right) + 2 \left( c \hat{c} + \beta \hat{c} C + \beta c \hat{C} \right) \right] + \mathcal{O}(n^2)
\end{align}

\begin{align}
    \sum_a \left(\bm{Q}_{aa}^{r r'} - \zeta^2 \right) = n \left[ \frac{1}{\beta} c + C - \zeta^2 \right] + \mathcal{O}(n^2)
\end{align}

\begin{align}
    \sum_{abr_1 r_2} \hat{\bm{Q}}_{ab}^{r_1 r_2} = \beta n \left[ \hat{q}+\hat{r}+2\hat{c} \right] + \mathcal{O}(n^2)
\end{align}

\begin{align}
    \log \det (T) &= n \left[ \log (\beta) + \log \det \begin{bmatrix}
        \bm{G}_{rr} & \bm{G}_{rr'} \\ \bm{G}_{r' r} & \bm{G}_{r'r'}
    \end{bmatrix}  + \beta \tr \left( \begin{bmatrix}
        \bm{G}_{rr} & \bm{G}_{rr'} \\ \bm{G}_{r' r} & \bm{G}_{r'r'}
    \end{bmatrix}^{-1} \begin{bmatrix}
        \hat{Q} \tilde{\bm{\Sigma}}_{rr}  &  \hat{C} \tilde{\bm{\Sigma}}_{r r'}  \\ \hat{C} \tilde{\bm{\Sigma}}_{r' r}  & \hat{R} \tilde{\bm{\Sigma}}_{r'r'} 
    \end{bmatrix}  \right) \right] + \mathcal{O}(n^2)
    \\
    \text{where } \bm{G}_{rr} &= \bm{I}_{N_r} + \hat{q} \tilde{\bm{\Sigma}}_{rr} \quad
    \bm{G}_{r'r'} = \bm{I}_{N_{r'}} + \hat{r} \tilde{\bm{\Sigma}}_{r'r'} \quad
    \bm{G}_{rr'} = \hat{c} \tilde{\bm{\Sigma}}_{rr'} \quad
    \bm{G}_{r'r} = \hat{c} \tilde{\bm{\Sigma}}_{r'r}
\end{align}

\begin{align}
 V^\top T^{-1} V= n \beta \bm{w}^{*\top}
 \begin{bmatrix}
    \frac{1}{\sqrt{\nu_{rr}}}(\hat{q} + \hat{c}) \bm{A}_r \bm{\Sigma}_s \\  \frac{1}{\sqrt{\nu_{r'r'}}}(\hat{r} + \hat{c}) \bm{A}_{r'} \bm{\Sigma}_s
 \end{bmatrix}^\top 
 \begin{bmatrix}
     \bm{G}_{rr} && \bm{G}_{rr'} \\ 
     \bm{G}_{r' r} && \bm{G}_{r'r'}
 \end{bmatrix}^{-1}
 \begin{bmatrix}
    \frac{1}{\sqrt{\nu_{rr}}}(\hat{q} + \hat{c}) \bm{A}_r \bm{\Sigma}_s \\  \frac{1}{\sqrt{\nu_{r'r'}}}(\hat{r} + \hat{c}) \bm{A}_{r'} \bm{\Sigma}_s
 \end{bmatrix} \bm{w}^* + \mathcal{O}(n^2)
\end{align}

Collecting these terms, we may write the replicated partition function as follows:

\begin{align}
    \langle Z^n \rangle_{\mathcal{D}} = \exp \left( -\frac{nM}{2} \mathfrak{g} \left[q, Q, r, R, c, C, \hat{q}, \hat{Q}, \hat{r}, \hat{R}, \hat{c}, \hat{C} \right] \right)
\end{align}

Where the free energy is written:

\begin{align}
    & \mathfrak{g} \left[q, Q, r, R, c, C, \hat{q}, \hat{Q}, \hat{r}, \hat{R}, \hat{c}, \hat{C} \right] = \\
    &\alpha \left[\log \left( (\lambda + q)(\lambda + r) - c^2 \right) + \beta \frac{(\lambda + q)R + (\lambda + r)Q - 2 c C}{(\lambda + q)(\lambda + r) - c^2} \right]\\
    &-  \left[ \left( q \hat{q} + \beta \hat{q}Q + \beta q \hat{Q} \right) + \left( r \hat{r} + \beta \hat{r} R + \beta r \hat{R} \right) + 2 \left( c \hat{c} + \beta \hat{c} C + \beta c \hat{C} \right) \right]\\
    &+J(c+ \beta C- \beta \zeta^2)\\
    & + \beta \left[ \hat{q} + \hat{r} + 2 \hat{c} \right] \left( \frac{1}{M} \bm{w}^{* \top} \Sigma \bm{w}^*  + \zeta^2 \right)\\
    & - \frac{1}{M} \beta \bm{w}^{*\top}
 \begin{bmatrix}
    \frac{1}{\sqrt{\nu_{rr}}}(\hat{q} + \hat{c}) \bm{A}_r \bm{\Sigma}_s \\  \frac{1}{\sqrt{\nu_{r'r'}}}(\hat{r} + \hat{c}) \bm{A}_{r'} \bm{\Sigma}_s
 \end{bmatrix}^\top 
 \bm{G}^{-1}
 \begin{bmatrix}
    \frac{1}{\sqrt{\nu_{rr}}}(\hat{q} + \hat{c}) \bm{A}_r \bm{\Sigma}_s \\  \frac{1}{\sqrt{\nu_{r'r'}}}(\hat{r} + \hat{c}) \bm{A}_{r'} \bm{\Sigma}_s
 \end{bmatrix} \bm{w}^*\\
    & + \frac{1}{M} \left[ \log (\beta) + \log \det \bm{G}  + \beta \tr \left( \bm{G}^{-1} \begin{bmatrix}
        \hat{Q} \tilde{\bm{\Sigma}}_{rr}  &  \hat{C} \tilde{\bm{\Sigma}}_{r r'}  \\ \hat{C} \tilde{\bm{\Sigma}}_{r' r}  & \hat{R} \tilde{\bm{\Sigma}}_{r'r'} 
    \end{bmatrix}  \right) \right]
\end{align}

where we have defined $\bm{G} \equiv \begin{bmatrix}
     \bm{G}_{rr} && \bm{G}_{rr'} \\ 
     \bm{G}_{r' r} && \bm{G}_{r'r'}
 \end{bmatrix}$

The saddle-point equations for the replica-diagonal order parameters are:

\begin{align}
    \frac{\partial \mathfrak{g}}{\partial Q} &= 0 = \beta \frac{\alpha (\lambda + r)}{(\lambda + q)(\lambda + r) - c^2} - \beta \hat{q} \label{qsptemp}\\ 
    \frac{\partial \mathfrak{g}}{\partial \hat{Q}} &= 0 = -\beta q + \beta \frac{1}{M} \tr \left( \bm{G}^{-1} \begin{bmatrix}
    \tilde{\bm{\Sigma}}_{rr}  &  \bm{0}  \\ \bm{0}  & \bm{0} 
    \end{bmatrix}  \right)  \label{qhatsptemp}\\ 
    \frac{\partial \mathfrak{g}}{\partial R} &= 0 = \beta \frac{\alpha (\lambda + q)}{(\lambda + q)(\lambda + r) - c^2} - \beta \hat{r} \label{rsp} \\ 
    \frac{\partial \mathfrak{g}}{\partial \hat{R}} &= 0 = -\beta r + \beta \frac{1}{M} \tr \left( \bm{G}^{-1} \begin{bmatrix}
    \bm{0}  &  \bm{0}  \\ \bm{0}  & \tilde{\bm{\Sigma}}_{r'r'} 
    \end{bmatrix}  \right) \label{rhatsp}\\
    \frac{\partial \mathfrak{g}}{\partial C} &= 0 = -\beta \frac{2 \alpha  c}{(\lambda + q)(\lambda + r) - c^2} - 2 \beta \hat{c} + \beta J \label{csp}\\ 
    \frac{\partial \mathfrak{g}}{\partial \hat{C}} &= 0 = -2 \beta c + \beta \frac{1}{M} \tr \left( \bm{G}^{-1} \begin{bmatrix}
    \bm{0}  &  \tilde{\bm{\Sigma}}_{rr'} \\ \tilde{\bm{\Sigma}}_{r'r}  & \bm{0} 
    \end{bmatrix}  \right) \label{chatsp}
\end{align}

Note that when $J = 0$, the saddle point equations \ref{csp}, \ref{chatsp} are solved by setting $c = \hat{c} = 0$, and in this case the remaining saddle-point equations decouple over the readouts (as expected for independently trained ensemble members) giving:
For readout $r$:

\begin{align}
    0 &= \frac{\alpha}{(\lambda + q)} - \hat{q} \label{Qsp_decoupled}\\ 
    0 &= -q + \frac{1}{M} \tr \left( \bm{G}_{rr}^{-1}\tilde{\bm{\Sigma}}_{rr} \right) \label{Qhatsp_decoupled}
\end{align}

and for readout $r'$:

\begin{align}
    0 &= \frac{\alpha}{(\lambda + r) } - \hat{r} \label{Rsp_decopuled}\\ 
    0 &= -r + \frac{1}{M} \tr \left( \bm{G}_{r'r'}^{-1}\tilde{\bm{\Sigma}}_{r'r'} \right) \label{Rhatsp_decoupled}
\end{align}

These are equivalent to the saddle-point equations for a single readout given in equation \ref{qhatsp}, \ref{qsp} as expected for independently trained readouts.  It is physically sensible that $c = 0$ when $J = 0$, because at zero source, there is no term in the replicated system energy function which would distinguish the overlap between two readouts from the same replica from the overlap between two readouts in separate replicas (we expect that the total overlap between readouts is non-zero, as we may still have $C>0$).

The saddle-point equations obtained by setting the derivatives $\frac{\partial \mathfrak{g}}{\partial q}= \frac{\partial \mathfrak{g}}{\partial \hat{q}} = \frac{\partial \mathfrak{g}}{\partial r} = \frac{\partial \mathfrak{g}}{\partial \hat{r}} = 0$ will similarly decouple to recover two copies of the diagonal case \ref{q0sp} \ref{qhat0sp}.  We will not re-write the expressions here as they are not necessary to determine the off-diagonal error term $E_{rr'}$.

The remaining saddle-point equations are obtained by setting $\frac{\partial \mathfrak{g}}{\partial c}= \frac{\partial \mathfrak{g}}{\partial \hat{c}} = 0$

\begin{equation}
     \left. \frac{\partial \mathfrak{g}}{\partial c} \right|_{c=\hat{c} = J = 0} = - \frac{2 \alpha \beta C}{(\lambda+q)(\lambda+r)} - 2 \beta \hat{C} \quad \Rightarrow \quad \hat{C} = -\frac{\alpha C}{(\lambda+q)(\lambda+r)} \label{Chatsp}
\end{equation}

\begin{equation}
    \begin{split}
            \left. \frac{\partial \mathfrak{g}}{\partial \hat{c}} \right|_{c=\hat{c} = J = 0} = 0 =&   - 2 \beta C + 2 \beta \left( \frac{1}{M} \bm{w}^{* \top} \bm{\Sigma}_s \bm{w}^* + \zeta^2 \right)\\
            & -  \frac{2\beta}{M} \bm{w}^{*\top} \bm{\Sigma}_s   \left[\frac{1}{\nu_{rr}}\hat{q} \bm{A}_r^\top \bm{G}_{rr}^{-1}\bm{A}_r + \frac{1}{\nu_{r'r'}} \hat{r} \bm{A}_{r'}^\top\bm{G}_{r'r'}^{-1}\bm{A}_{r'}\right] \bm{\Sigma}_s \bm{w}^{*}\\
            & + \frac{2 \beta \hat{q} \hat{r}}{M} \frac{1}{\sqrt{\nu_{rr} \nu_{r'r'}}}  \bm{w}^{*\top} \bm{\Sigma}_s \bm{A}_r^\top \bm{G}_{rr}^{-1} \tilde{\bm{\Sigma}}_{rr'}\bm{G}_{r'r'}^{-1} \bm{A}_{r'} \bm{\Sigma}_s \bm{w}^{*}\\
            & - \frac{2 \hat{C} \beta}{M} \tr \left[ \bm{G}_{rr}^{-1} \tilde{\bm{\Sigma}}_{rr'} \bm{G}_{r'r'}^{-1} \tilde{\bm{\Sigma}}_{r'r}\right]
    \end{split} \label{Csp}
\end{equation}

Solving equations \ref{Chatsp} and \ref{Csp} for $C$, we obtain:

\begin{align}
    \begin{split}
        C & = \frac{1}{1-\gamma} \zeta^2 +   \frac{1}{1-\gamma} \left( \frac{1}{M} \bm{w}^{* \top} \bm{\Sigma}_s \bm{w}^*  \right)\\
        & - \frac{1}{M(1-\gamma)} \bm{w}^{*\top} \bm{\Sigma}_s   \left[\frac{1}{\nu_{rr}}\hat{q} \bm{A}_r^\top \bm{G}_{rr}^{-1}\bm{A}_r + \frac{1}{\nu_{r'r'}} \hat{r} \bm{A}_{r'}^\top\bm{G}_{r'r'}^{-1}\bm{A}_{r'}\right] \bm{\Sigma}_s \bm{w}^{*}\\
        & + \frac{1}{M(1-\gamma)} \hat{q} \hat{r} \frac{1}{\sqrt{\nu_{rr} \nu_{r'r'}}}  \bm{w}^{*\top} \bm{\Sigma}_s \bm{A}_r^\top \bm{G}_{rr}^{-1} \tilde{\bm{\Sigma}}_{rr'}\bm{G}_{r'r'}^{-1} \bm{A}_{r'} \bm{\Sigma}_s \bm{w}^{*}
    \end{split}
\end{align}

\begin{equation}
    \text{where} \quad \gamma \equiv \frac{\alpha}{(\lambda + q)(\lambda + r)}\tr \left[ \bm{G}_{rr}^{-1} \tilde{\bm{\Sigma}}_{rr'} \bm{G}_{r'r'}^{-1} \tilde{\bm{\Sigma}}_{r'r}\right]
\end{equation}

We can obtain the generalization error as:

\begin{equation}
    E_{rr'} = \lim_{\beta \rightarrow \infty} \frac{1}{\beta} \frac{\partial}{\partial J} \mathfrak{g} \left[q, Q, r, R, c, C, \hat{q}, \hat{Q}, \hat{r}, \hat{R}, \hat{c}, \hat{C} \right] = \lim_{\beta \rightarrow \infty} \frac{1}{\beta} (c + \beta C - \beta \zeta^2) = C - \zeta^2
\end{equation}

We may then simplify the expression for the $E_{rr'}$ error as follows:

\begin{equation} \label{GeneralizationOffDiagTerm}
    \begin{split}
        E_{rr'} & = \frac{\gamma}{1-\gamma} \zeta^2 +   \frac{1}{1-\gamma} \left( \frac{1}{M} \bm{w}^{* \top} \bm{\Sigma}_s \bm{w}^*  \right)\\
        & - \frac{1}{M(1-\gamma)} \bm{w}^{*\top} \bm{\Sigma}_s   \left[\frac{1}{\nu_{rr}}\hat{q} \bm{A}_r^\top \bm{G}_{rr}^{-1}\bm{A}_r + \frac{1}{\nu_{r'r'}} \hat{r} \bm{A}_{r'}^\top\bm{G}_{r'r'}^{-1}\bm{A}_{r'}\right] \bm{\Sigma}_s \bm{w}^{*}\\
        & + \frac{1}{M(1-\gamma)} \hat{q} \hat{r} \frac{1}{\sqrt{\nu_{rr} \nu_{r'r'}}}  \bm{w}^{*\top} \bm{\Sigma}_s \bm{A}_r^\top \bm{G}_{rr}^{-1} \tilde{\bm{\Sigma}}_{rr'}\bm{G}_{r'r'}^{-1} \bm{A}_{r'} \bm{\Sigma}_s \bm{w}^{*}
    \end{split}
\end{equation}

Re-labeling the order parameters: $\hat{q} \to \hat{q}_r$, $\hat{r} \to \hat{q}_{r'}$, $\gamma \to \gamma_{rr'}$ and $\bm{
G}_{rr} \to \bm{G}_r$, we obtain the result given in the main text.

\section{Derivation of Proposition 1 from \cite{LoureiroEnsembles}} \label{LoureiroDerivationSection}

In the case where the data and noise covariance matrices $\bm{\Sigma}_s$ and $\bm{\Sigma}_0$ have bounded spectra, our main result may be derived using Theorem 4.1 from Loureiro et. al. \cite{LoureiroEnsembles}, with a few additional arguments to incorporate a readout noise which varies across ensemble members, and to allow for the presence of label noise in the training set but not at test time.  Rather than reproducing their very lengthy statements here, we direct the reader to theorem 4.1 and corollary 4.2 in \cite{LoureiroEnsembles}.

\subsection{Altered Expectations at Evaluation}

In \cite{LoureiroEnsembles}, labels are generated at both training and test time through the same statistical process $y \sim P_y^0(y | \nu)$ where $\nu = \frac{\langle \bm{x}| \bm{\theta}\rangle}{\sqrt{d}}$.  However, their results may be easily extended to the case where data are generated through a different statistical process for the training set and at evaluation.  This will allow the application of their results to the case where label noise is present during training but not at evaluation as studied in this work.  We therefore introduce separate distributions of the labels $y$ at training and evaluation.  During training, we still have  $y \sim P_y^0(y | \nu)$.  At evaluation, we put $y \sim P_y^g(y | \nu)$.  This leads to the updated formula:

\begin{equation} \label{LoureiroResultGeneralized}
    \mathbb{E}_{(y, \boldsymbol{x})}\left[\varphi \left(y, \frac{\langle\langle\hat{\boldsymbol{W}} \mid \boldsymbol{U}\rangle\rangle}{\sqrt{p}}\right)\right] \stackrel{\mathrm{P}}{\rightarrow} \mathbb{E}_{(\nu, \boldsymbol{\mu})}\left[ \int dy P_y^g (y|\nu) \varphi (y, \boldsymbol{\mu})\right]
\end{equation}
when the expectation is over data-label pairs $(y, \bm{x})$ at \textit{evaluation}.

\subsection{Rigorous Proof of Proposition 1}

We now restate the the problem setup of our main theorem using notation consistent with \cite{LoureiroEnsembles}. We study generalization error in an ensemble of estimators $\{\hat{\bm{w}}_k\}$, $k = 1, \dots, K$.  We say $\hat{\bm{w}}_k \in \mathbb{R}^p$ for all $k = 1, \dots, K$.  The weights are trained independently such that each minimizes a ridge loss function:

\begin{equation} \label{RidgeLoss}
\begin{split}
    \hat{\bm{w}}_k &= \argmin_{\bm{w}} \sum_{\mu = 1}^n \left( y^\mu - \frac{1}{\sqrt{N_k}} \bm{u}_{k}^\top \bm{w} - \xi_k^\mu\right)^2 + \lambda_k |\bm{w}|^2 \qquad k = 1, \dots, K \\
    & = \argmin_{\bm{w}}  \sum_{\mu = 1}^n \left( y^\mu - \frac{1}{\sqrt{p}} \bm{u}_{k}^\top \left( \frac{1}{\sqrt{\nu_{kk}}} \bm{w} \right) - \xi_k^\mu\right)^2 + \nu_{kk} \lambda_k |\frac{1}{\sqrt{\nu_{kk}}} \bm{w}|^2
\end{split} 
\end{equation}

So that our results will correspond to the results of \cite{LoureiroEnsembles} after re-scaling the regularizations $\lambda_k \to \nu_{kk} \lambda_k$.  The training labels are drawn as $y^\mu \sim P_y^0(y| \frac{1}{\sqrt{p}} \bm{\theta}^\top \bm{x}^{\mu} )$  where $P_y^0(y|x) = \mathcal{N}(x, \zeta^2)$, $\xi_k^\mu \sim \mathcal{N}(0, \eta_r^2)$ and $\theta$ represent the ground truth weights.  For $k = 1, \dots, k$ we have a ``measurement matrix'' $\bm{A}_k \in \mathbb{R}^{N_k \times d}$, and we may set $d = p\geq \max_k{N_k}$ (so that $\gamma = \frac{d}{p} = 1$).  The feature vectors $\bm{u}_k(x)$ are then drawn as

$$ \bm{u}_k (\bm{x}) = \begin{bmatrix}
    \bm{A}_k (\bm{x} + \bm{\sigma}) \\
    \bm{0}_{(p-N_k)} 
\end{bmatrix}=  \bar{\bm{A}}_k (\bm{x} + \bm{\sigma})$$
where $\bm{x} \sim \mathcal{N}(0, \bm{\Sigma}_s)$ and $\bm{\sigma} \sim \mathcal{N}(0, \bm{\Sigma}_0)$.  For convenience, we have defined the auxilary matrices
$$\bar{\bm{A}}_k \equiv \begin{bmatrix}
    \bm{A}_k \\
    \bm{0}_{(p-N_k) \times p} 
\end{bmatrix}\in \mathbb{R}^{p \times p}$$
By constructing the feature vectors as p-dimensional vectors with only $N_k$ non-zero components, we may apply the results of \cite{LoureiroEnsembles} while preserving structural heterogeneity (we may have $N_k \neq N_{k'}$ for $k \neq k'$) as is present in our main result.  Because $[\bm{u}_k]_{i} = 0$ for all $i>N_k$, these auxiliary dimensions will not affect model predictions or generalization error.  We then have $\mu_k = \frac{1}{\sqrt{p}} \bm{u}_{k}^\top \hat{\bm{w}}_k$, and labels are generated at evaluation according to :
\begin{equation}
    y \sim P_y^g(y | \frac{1}{K} \sum_{k}\mu_k) \text{ where } P_y^g(y | x) = \mathcal{N}(x, \frac{1}{K^2}\sum_k \eta_k^2)
\end{equation}

The generalization error may then be decomposed as $E_g = \frac{1}{K^2}\sum_{k,k'} E_{kk'}$ where  $E_{kk'} = \mathbb{E}_{(y, \bm{x})}\left[(y-\mu_k)(y-\mu_{k'})\right]$.  We will apply eq. \ref{LoureiroResultGeneralized} separately to calculate $E_{rr'}$ in the cases where $r \neq r'$ and $r = r'$.

\subsubsection{Off-Diagonal Terms}

Eq. \ref{LoureiroResultGeneralized} cannot be used to calculate $E_{rr'}$ directly in the case where $r \neq r'$ due to the presence of noises $\xi_k^\mu$ which vary over elements of the ensemble (indexed by $k$) in the loss function (eq. \ref{RidgeLoss}).  We may argue, however, that the presence of readout noise noise has no effect on the expected value of $E_{rr'}$ when $r \neq r'$, then apply eq. \ref{LoureiroResultGeneralized} with $\eta_k = \eta_{k'} = 0$.  To see this, we examine the analytical form of the minimizer of the loss function (eq, \ref{RidgeLoss}):
\begin{equation}
    \hat{\bm{w}}_k = \bm{U}_k^\top \left(\bm{U}_k \bm{U}_k^\top + \lambda_k\bm{I} \right)^{-1} \left(\bm{y} - \bm{\xi}_k\right)
\end{equation}
where we have defined the design matrices $[\bm{U}_k]_{i \mu} = \frac{1}{\sqrt{p}}[\bm{u}_k(\bm{x}^\mu)]_i$ and the vectors $[\bm{y}]_\mu = y^\mu$ and $[\bm{\xi}_k]_\mu = \xi_k^\mu$.  We then have
\begin{align}
    E_{kk'} &= \left(y - \frac{1}{\sqrt{p}}\bm{y}^\top \left(\bm{U}_k \bm{U}_k^\top + \lambda_k\bm{I} \right)^{-1} \bm{U}_k \right) \left(y - \frac{1}{\sqrt{p}}\bm{y}^\top \left(\bm{U}_{k'} \bm{U}_{k'}^\top + \lambda_{k'}\bm{I} \right)^{-1} \bm{U}_{k'} \right) \nonumber\\
    & + \left(y - \frac{1}{\sqrt{p}}\bm{y}^\top \left(\bm{U}_k \bm{U}_k^\top + \lambda_{k}\bm{I} \right)^{-1} \bm{U}_k \right) \left(\frac{1}{\sqrt{p}}\bm{\xi}_{k'}^\top \left(\bm{U}_{k'} \bm{U}_{k'}^\top + \lambda_{k'}\bm{I} \right)^{-1}\bm{U}_{k'}  \right) \nonumber\\
    & + \left(y - \frac{1}{\sqrt{p}}\bm{y}^\top \left(\bm{U}_{k'} \bm{U}_{k'}^\top + \lambda_{k'}\bm{I} \right)^{-1} \bm{U}_{k'} \right) \left(\frac{1}{\sqrt{p}}\bm{\xi}_{k}^\top \left(\bm{U}_{k} \bm{U}_{k}^\top + \lambda_k\bm{I} \right)^{-1}\bm{U}_{k}  \right) \nonumber\\
    & + \left(\frac{1}{\sqrt{p}}\bm{\xi}_{k}^\top \left(\bm{U}_{k} \bm{U}_{k}^\top + \lambda_k\bm{I} \right)^{-1}\bm{U}_{k}  \right)\left(\frac{1}{\sqrt{p}}\bm{\xi}_{k'}^\top \left(\bm{U}_{k'} \bm{U}_{k'}^\top + \lambda_{k'} \bm{I} \right)^{-1}\bm{U}_{k'}  \right)
\end{align}

Taking the expectation value over the readout noise in the training set we get:
\begin{equation}
    \mathbb{E}_{\{\bm{\xi}_1, \dots, \bm{\xi}_K\}} E_{kk'} = \left(y - \frac{1}{\sqrt{p}}\bm{y}^\top \left(\bm{U}_k \bm{U}_k^\top + \lambda_k\bm{I} \right)^{-1} \bm{U}_k \right) \left(y - \frac{1}{\sqrt{p}}\bm{y}^\top \left(\bm{U}_{k'} \bm{U}_{k'}^\top + \lambda_{k'} \bm{I} \right)^{-1} \bm{U}_{k'} \right) \quad (k \neq k')
\end{equation}
This is identical to $E_{kk'}$ when $\bm{\xi}_k=0$ for all $k$.  We may therefore calculate the off-diagonal error terms by setting $\bm{\xi}_k=0$ in eq. \ref{RidgeLoss}, which gives a problem compatible with the theorem 4 of \cite{LoureiroEnsembles}.  To calculate $E_{rr'}$, we appeal to theorem 4.1 and corollary 4.2 of \cite{LoureiroEnsembles}.
The following objects defined in \ref{LoureiroResultGeneralized} are given the following definitions:

\begin{align}
    r(\{\bm{\hat{w}}_1, \dots, \bm{\hat{w}}_K\}) &= \frac{1}{2} \sum_{k = 1}^K \nu_{kk} \lambda_k |\hat{\bm{w}}_k|^2\\
    \Delta(y, \hat{y}(\bm{x})) &= (y-\hat{y}(\bm{x}))^2\\
    \hat{\ell}(y, \bm{\mu}) &= \frac{1}{2} |\bm{\mu} - y \bm{1}|^2
\end{align}

\begin{align}
    \mathcal{Z}^0(y, \mu, \sigma) & :=\int \frac{P_y^0(y \mid x) \mathrm{d} x}{\sqrt{2 \pi \sigma}} \mathrm{e}^{-\frac{(x-\mu)^2}{2 \sigma}} = \frac{1}{\sqrt{2 \pi (\zeta^2 + \sigma)}} \exp \left(-\frac{(y-\mu)^2}{2 (\zeta^2 + \sigma)}\right)
\end{align}

\begin{align}
    E_{kk'} &= \mathbb{E}_{(y, \bm{x})}\left[(y-\mu_k)(y-\mu_{k'})\right] \to \mathbb{E}_{(\nu, \bm{x})} \left[\int dy P_y^g(y|\nu) (y-\mu_k)(y-\mu_{k'}) \right]  \\
    & = \mathbb{E}_{(\nu, \bm{x})} \left[ (\nu-\mu_k)(\nu-\mu_{k'}) \right] + \frac{1}{K^2}\sum_{k=1}^K \eta_k^2 \\
    & =  \rho - [\bm{m}]_k - [\bm{m}]_{k'} + [\bm{Q}]_{kk'} + \frac{1}{K^2}\sum_{k=1}^K \eta_k^2
\end{align}
Where $\rho$, $\bm{m}$ and $\bm{Q}$ are defined as in \cite{LoureiroEnsembles}:
\begin{equation}
    (\nu, \boldsymbol{\mu}) \sim \mathcal{N}\left(\mathbf{0}_{1+K},\left(\begin{array}{cc}
\rho & \boldsymbol{m}^{\top} \\
\boldsymbol{m} & \boldsymbol{Q}
\end{array}\right)\right)
\end{equation}
What remains is to determine the values of the order parameters $\rho$, $\bm{m}$ and $\bm{Q}$.  We next define the covariance matrices which characterize the feature maps.
The feature-feature covariance is:
\begin{align}
    \bm{\Omega}_{kk'}^{ij} &= \mathbb{E}_{\bm{x}} \left[ (\bm{u}_k(\bm{x}))_i (\bm{u}_{k'}(\bm{x}))_j \right] = \mathbbm{1}_{\{1 \leq i\leq N_k, 1 \leq j \leq N_{k'}\}} \left[ \bm{A}_k \left( \bm{\Sigma}_s + \bm{\Sigma}_0 \right) \bm{A}_{k'}^\top \right]_{ij}\\
    & = \sqrt{\nu_{kk} \nu_{k'k'}} \mathbbm{1}_{\{1 \leq i\leq N_k, 1 \leq j \leq N_{k'}\}} \left[ \tilde{\bm{\Sigma}}_{kk'} \right]_{ij} = \left[\bar{\bm{\Sigma}}_{kk'}\right]_{ij}
\end{align}

where we have defined $\nu_{kk} = \frac{N_k}{p}$, $\tilde{\bm{\Sigma}}_{kk'} = \frac{1}{\sqrt{\nu_{kk} \nu_{k'k'}}}\bm{A}_k \left( \bm{\Sigma}_s + \bm{\Sigma}_0 \right) \bm{A}_{k'}^\top$, and $\bar{\bm{\Sigma}}_{kk'} = \sqrt{\nu_{kk} \nu_{k'k'}} \begin{bmatrix} \tilde{\bm{\Sigma}}_{kk'} && \bm{0} \\ \bm{0} && \bm{0} \end{bmatrix} \in \mathbb{R}^{p \times p} $.  Note that while in \cite{LoureiroEnsembles}, all $\bm{\Omega}_{kk}$ must have strictly positive eigenvalues, their result can be easily extended to cover the case where some eigenvalues are zero by a continuity argument. 

The feature-label covariance is given by:
\begin{align}
    \left[ \hat{\bm{\Phi}} \right]_k^i = \mathbb{E}_{\bm{x}} \left[ \bm{u}_k(\bm{x}) \bm{x}^\top \bm{\theta} \right]_i = \mathbbm{1}_{\{1 \leq i\leq N_k\}} \left[\bm{A}_k \bm{\Sigma}_s \bm{\theta}\right]_i = \left[\bar{\bm{A}}_k \bm{\Sigma}_s \bm{\theta}\right]_i
\end{align}

\begin{align}
    \left[\bm{\Theta} \right]_{kk'}^{ij} = \left[ \hat{\bm{\Phi}} \right]_k^i \left[ \hat{\bm{\Phi}} \right]_{k'}^j = \left[\bar{\bm{A}}_k \bm{\Sigma}_s \bm{\theta}\right]_i \left[\bar{\bm{A}}_{k'} \bm{\Sigma}_s \bm{\theta}\right]_j
\end{align}

Recalling  $\boldsymbol{\omega}:=\boldsymbol{Q}^{1 / 2} \boldsymbol{\xi}$, we can now obtain an explicit form for the proximal $\bm{h}$:

\begin{align}
    \boldsymbol{h} & :=\underset{\boldsymbol{u}}{\operatorname{argmin}}\left[\frac{(\boldsymbol{u}-\boldsymbol{\omega}) \boldsymbol{V}^{-1}(\boldsymbol{u}-\boldsymbol{\omega})}{2}+\hat{\ell}(y, \boldsymbol{u})\right] = \bm{V}(\bm{I} + \bm{V})^{-1} (\bm{V}^{-1}\bm{Q}^{1/2}\bm{\xi} + y \bm{1})
\end{align}

The proximal $\bm{G}$ should not arise in this special case.

Next, we will simplify the saddle-point equations.  Simplifying where possible, we may write:

\begin{align}
    \bm{f} &= \bm{V}^{-1} (\bm{h} - \bm{w}) = (\bm{I}+\bm{V})^{-1} \left( y\bm{1} - \bm{\omega} \right)\\
    \partial_{\bm{\omega}}\bm{f} &= - (\bm{I} + \bm{V})^{-1}\\
    \rho &= \mathbb{E}_{\bm{x}}\left[ \left(\frac{1}{\sqrt{d}} \bm{\theta}^\top \bm{x}\right)^2 \right] = \frac{1}{d} \bm{\theta}^\top \bm{\Sigma}_s \bm{\theta}\\
    \omega_0 & \equiv \bm{m}^\top \bm{Q}^{-1/2} \bm{\xi}\\
    \sigma_0 &= \rho - \bm{m}^\top \bm{Q}^{-1} \bm{m} = \frac{1}{d} \bm{\theta}^\top \bm{\Sigma}_s \bm{\theta} - \bm{m}^\top \bm{Q}^{-1} \bm{m}
\end{align}

\begin{align}
    \hat{\bm{V}} &= - \alpha \int dy \mathbb{E}_{\bm{\xi}} \left[ Z^0(y, \omega_0, \sigma_0) (- (\bm{I}+\bm{V})^{-1}) \right] = \alpha (\bm{I}+\bm{V})^{-1} \mathbb{E}_{\bm{\xi}} \left[ \underbrace{\int dy Z^0 (y, \omega_0, \sigma_0)}_{1}\right] \\
    \Rightarrow \hat{\bm{V}} &= \alpha (\bm{I}+\bm{V})^{-1} \label{VHatEqn}
\end{align}

We can simplify the prior equation for $\hat{\bm{V}}$ to the following set of equations:

\begin{align}
    \bm{V}_{kk} &= \frac{1}{p} \tr\left[\bar{\bm{\Sigma}}_{kk} \left[\nu_{kk} \lambda_k \bm{I}_p + \hat{\bm{V}}_{kk} \bar{\bm{\Sigma}}_{kk}\right]^{-1}\right] \qquad &k = 1,\dots, K\\
    &\bm{V}_{kk'} = \frac{1}{p} \tr\left[\bar{\bm{\Sigma}}_{kk'} \left[ \hat{\bm{V}}_{k'k} \bar{\bm{\Sigma}}_{k'k}\right]^{-1}\right] \qquad &k' \neq k \label{VOffDiag}
\end{align}

Equations \ref{VHatEqn} and \ref{VOffDiag} are solved by setting $\hat{\bm{V}}_{kk'} = \bm{V}_{kk'} = 0$ for all $k \neq k'$ so that $\hat{\bm{V}}_{kk'} = \hat{V}_{k}\delta_{kk'}$ and $\bm{V}_{kk'} = V_{k}\delta_{kk'}$.  The diagonal components then satisfy

\begin{align}
    \hat{V}_{k} &= \frac{\alpha}{1+V_k} \label{Vhatsp}\\
    V_{k} &= \frac{1}{p} \tr\left[\bar{\bm{\Sigma}}_{kk} \left[\nu_{kk} \lambda_k \bm{I}_p + \hat{V}_{k} \bar{\bm{\Sigma}}_{kk}\right]^{-1}\right] \label{Vsp}
\end{align}

Which may be solved separately for each $k = 1, \dots, K$.  Simplifying the remaining channel equations we have:

\begin{align}
    \hat{\bm{Q}} &= \alpha (\bm{I+\bm{V}})^{-1} \left[\left(\zeta^2 + \rho \right) \bm{1}\bm{1}^\top - \bm{1}\bm{m}^\top -\bm{m}\bm{1}^\top + \bm{Q} \right] (\bm{I+\bm{V}})^{-1}\\
    \Rightarrow \hat{Q}_{kk'} &= \frac{1}{\alpha}\hat{V}_k \hat{V}_{k'} \left[\zeta^2 + \rho - m_k - m_{k'} + Q_{kk'} \right] \label{QhatofQ}\\
    \hat{\bm{m}} &= \alpha (\bm{I} + \bm{V})^{-1} \bm{1} \quad \Rightarrow \quad \hat{m}_k = \frac{\alpha}{(1+V_k)} = \hat{V}_k
\end{align}

Simplifying the prior equations we obtain (through some tedious but straightforward algebra):

\begin{align}
    Q_{kk'} &= \hat{Q}_{kk'} J_{kk'} + \hat{V}_k \hat{V}_{k'}\Lambda_{kk'} \label{QofQhat}\\
    m_k &= \hat{V}_k R_k
\end{align}
Where we have defined:
\begin{align}
    \bm{G}_k  & \equiv \nu_{kk} \lambda_k \bm{I}_p + \hat{V}_{k} \bar{\bm{\Sigma}}_{kk} \\
    J_{kk'} &\equiv \frac{1}{p} \tr \left[ \bar{\bm{\Sigma}}_{kk'} \bm{G}_{k'}^{-1} \bar{\bm{\Sigma}}_{k'k} \bm{G}_{k}^{-1} \right]\\
    \Lambda_{kk'} &\equiv  \frac{1}{p} \bm{\theta}^\top \bm{\Sigma}_s \bar{\bm{A}}_k^\top \bm{G}_k^{-1} \bar{\bm{\Sigma}}_{kk'} \bm{G}_{k'}^{-1} \bar{\bm{A}}_{k'} \bm{\Sigma}_s \bm{\theta}\\
    R_k &\equiv \bm{\theta}^\top \bm{\Sigma}_s \bar{\bm{A}}_k^\top \bm{G}_k^{-1}\bar{\bm{A}}_{k} \bm{\Sigma}_s \bm{\theta}
\end{align}

Solving eq's \ref{QofQhat},\ref{QhatofQ} for $\bm{Q}$, we obtain:

\begin{equation}
    Q_{kk'} = \frac{\gamma_{kk'}}{1-\gamma_{kk'}} \left(\zeta^2 + \rho - m_k - m_{k'}\right) + \frac{\hat{V}_k \hat{V}_{k'}}{1-\gamma_{kk'}} \Lambda_{kk'} \qquad (k \neq k')
\end{equation}

Combining these results, we arrive at a formula for $E_{kk'}$:

\begin{align}
    E_{kk'} &= \frac{1}{1-\gamma_{kk'}} \left(\gamma_{kk'}\zeta^2 + \rho - \hat{V}_k R_k - \hat{V}_{k'}R_{k'} + \hat{V}_k\hat{V}_{k'} \Lambda_{kk'}\right) + \frac{1}{K^2}\sum_k \eta_k^2 \quad &(k \neq k') \\
    \text{ where }\gamma_{kk'} &\equiv \frac{1}{\alpha} \hat{V}_{k}\hat{V}_{k'}J_{kk'}
\end{align}

Where the order parameters $\hat{V}_k$, $k = 1, \dots, K$ satisfy the fixed-point equations given by eq's \ref{Vhatsp}, \ref{Vsp}.

\subsubsection{Diagonal Terms}

To calculate $E_{rr}$, we cannot ignore the presence of readout noise in the loss function.  However, as the loss function is separable over the readouts $k$, we may calculate $E_{rr}$ using the results of \cite{LoureiroEnsembles} in the special case where $K = 1$, incorporating the readout noise into the label noise.  Concretely, we may calculate $E_{rr}$ as the error of a single linear predictor under the same setup as the off-diagonal terms except that in the training set $y^\mu \sim P_y^0(y| \frac{1}{\sqrt{p}} \bm{\theta}^\top \bm{x}^{\mu} )$  where $P_y^0(y|x) = \mathcal{N}(x, \zeta^2 + \eta_k^2)$.  This may be recovered from the calculation for the off-diagonal terms by setting $k = k'$ and re-scaling $\zeta^2 \to \zeta^2 + \eta_k^2$, giving:
\begin{align}
    E_{kk} &= \frac{1}{1-\gamma_{kk}} \left(\gamma_{kk}(\zeta^2 + \eta_r^2) + \rho - 2 \hat{V}_k R_k + \hat{V}_k^2 \Lambda_{kk}\right) + \frac{1}{K^2}\sum_k \eta_k^2 \quad &(k \neq k') 
\end{align}
where the definitions of $\gamma_{kk}$, $\rho$, $R_k$, and $\Lambda_{kk}$ can be inherited from the off-diagonal case, as well as the saddle-point equations \ref{Vhatsp}, \ref{Vsp}.

\subsubsection{Full Error}
The results obtained here are equivalent to the results of our main theorem, up to a reshuffling of additive constants $\eta_k^2$ among the error terms, and a trivial re-scaling of the order parameters as follows:  $V_k \to \frac{1}{\lambda_k} V_k $, $ \hat{V}_k \to  \lambda_k \hat{V}_k$

\section{Equicorrelated Data Model}

To gain an intuition for the joint effects of correlated data, subsampling, ensembling, feature noise, and readout noise, we simplify the formulas for the generalization error in the following special case:

\begin{align}
    \bm{\Sigma}_s &= s \left[(1-c) \bm{I}_M + c \bm{1}_M \bm{1}_M^\top \right] \\
     \bm{\Sigma}_0 &= \omega^2 \bm{I}_M
\end{align}

Here $s$ is a parameter which sets the overall scale of the data and $c \in [0,1]$ tunes the correlation structure in the data and $\omega^2$ sets the scale of an isotropic feature noise.  We consider an ensemble of $k$ readouts, each of which sees a subset of the features.  Due to the isotropic nature of the equicorrelated data model and the pairwise decomposition of the generalization error, we expect that the generalization error will depend on the partition of features among the readout neurons through only:
\begin{itemize}
    \item The number of features sampled by each readout: $N_r \equiv \nu_{rr} M$, for $r = 1,\dots,k$
    \item The number of features jointly sampled by each pair of readouts $n_{rr'} \equiv \nu_{rr'} M$ for $r, r' \in \{1,\dots,k\}$
\end{itemize}

Here, we have introduced the subsampling fractions $\nu_{rr} = \frac{N_r}{M}$ and the overlap fractions $\nu_{rr'} = \frac{n_{rr'}}{M}$

We will average the generalization error over readout weights drawn randomly from the space perpendicular to $\bm{1}_M$, with an added spike along the direction of $\bm{1}_M$:  
    \begin{align}
        \bm{w}^* &= \sqrt{1-\rho^2} \mathbb{P}_{\perp} \bm{w}^*_0 + \rho\bm{1}_M\\
        \bm{w}^*_0 &\sim \mathcal{N}(0, \bm{I}_M)
    \end{align}
The projection matrix may be written $\mathbb{P}_\perp = \bm{I}_M - \frac{1}{N} \bm{1}_M \bm{1}_M^\top$.  The two components of the ground truth weights will yield independent contributions to the generalization error in the sense that
\begin{align}
    \langle E_{rr'} \rangle =  (1-\rho^2) E_{rr'} (\rho = 0) + \rho^2 E_{rr'} (\rho = 1)
\end{align}

Calculating $E_{rr}$ and $E_{rr'}$ is an exercise in linear algebra which is straightforward but tedious.  To assist with the tedious algebra, we wrote a Mathematica package which can handle multiplication, addition, and inversion of matrices of symbolic dimension of the specific form encountered in this problem.  This form consists of block matrices, where the blocks may be written as $a \delta_{MN} \bm{I}_M + b \bm{1}_M \bm{1}_N^\top$, where $a, b$ are scalars and $\delta_{MN}$ ensures that there is only a diagonal component for square blocks (when $M = N$).  This package is included as supplemental material to this publication.

\subsection{Diagonal Terms and Saddle-Point Equations}

Here, we solve for the dominant values of $q_r$ and $\hat{q}_r$  and simplify the expressions for $E_{rr}$ in the case of equicorrelated features described above.  In this isotropic setting, $E_{rr}, q_r, \hat{q}_r$ will depend on the subsampling only through $N_r = \nu_{rr} M$.  We may then write, without loss of generality $\bm{A}_r = \begin{pmatrix} \bm{I}_{N_r} & \bm{0}\end{pmatrix} \in \mathbb{R}^{N_r \times M}$ where $\bm{0}$ denotes a matrix of all zeros, of the appropriate dimensionality.

We start by simplifying the saddle-point equations \ref{qhatsp},\ref{qsp}.
Expanding $\frac{1}{M} \tr \left(\bm{G}_{r}^{-1} \tilde{\bm{\Sigma}}_{rr} \right)$ and keeping only leading order terms, the saddle-point equations for $q_r$ and $\hat{q}_r$ reduce to:
\begin{align}
    q_r &= \frac{ \nu_{rr} \left(s (1-c) + \omega^2 \right) }{\hat{q}_r ( s (1-c) + \omega^2 )+\nu _r}\\
    \hat{q}_r &= \frac{\alpha}{\lambda+q_r}
\end{align}

Defining $a \equiv s(1-c) + \omega^2$ and solving this system of equations, we find:

\begin{align}
    q_r &= \frac{\sqrt{a^2 \alpha ^2+2 a \alpha  (\lambda -a) \nu _r+(a+\lambda )^2 \nu _r^2}-a \alpha +(a-\lambda )
   \nu _r}{2 \nu _r} \label{q_equicorr}\\
    \hat{q}_r &=\frac{\sqrt{a^2 \alpha ^2+2 a \alpha  (\lambda -a) \nu _r+(a+\lambda )^2 \nu _r^2}+a \alpha -(a+\lambda )
   \nu _r}{2 a \lambda } \label{qhat_equicorr}
\end{align}
We have selected the solution with $q_r>0$ because self-overlaps must be at least as large as overlaps between different replicas.  This solution to the saddle-point equatios can be applied to each of the $k$ readouts.

Next, we calculate $E_{rr}$.  Expanding $\gamma_{rr} \equiv \frac{\alpha}{M \kappa^2} \tr \left[ (\bm{G}^{-1} \tilde{\bm{\Sigma}})^2\right] $ to leading order in $M$, we find:

\begin{equation}
     \gamma_{rr} = \frac{a^2 \alpha  \nu _r}{\left(\lambda +q_r\right)^2 \left(a \hat{q}_r+\nu _r\right)^2}
\end{equation}

\begin{align}
     \begin{split}\langle E_{rr} \rangle_{\mathcal{D}, \bm{w}^*} (\rho = 0) =   &\frac{1}{1-\gamma_{rr}} \frac{1}{M} \tr \left[ \mathbb{P}_\perp \left( \bm{\Sigma}_s - \frac{2}{\nu_{rr}} \hat{q}_r \bm{\Sigma}_s \bm{A}_r^\top \bm{G}_{r}^{-1} \bm{A}_r \bm{\Sigma}_s + \frac{1}{\nu_{rr}} \hat{q}_r^2 \bm{\Sigma}_s \bm{A}_r^\top \bm{G}_{r}^{-1} \tilde{\bm{\Sigma}} \bm{G}_{r}^{-1} \bm{A}_r \bm{\Sigma}_s  \right) \mathbb{P}_\perp  \right]\\
        & + \frac{\gamma_{rr}}{1-\gamma_{rr}} \zeta^2  + \eta_r^2,
    \end{split} \label{DiagErrGlobCorr_Trace_unsimplified}\\
    \begin{split}\langle E_{rr} \rangle_{\mathcal{D}, \bm{w}^*} (\rho = 1) =   &\frac{1}{1-\gamma_{rr}} \frac{1}{M} \bm{1}_M^\top \left[\bm{\Sigma}_s - \frac{2}{\nu_{rr}} \hat{q}_r \bm{\Sigma}_s \bm{A}_r^\top \bm{G}_{r}^{-1} \bm{A}_r \bm{\Sigma}_s + \frac{1}{\nu_{rr}} \hat{q}_r^2 \bm{\Sigma}_s \bm{A}_r^\top \bm{G}_{r}^{-1} \tilde{\bm{\Sigma}} \bm{G}_{r}^{-1} \bm{A}_r \bm{\Sigma}_s    \right] \bm{1}_m\\
        & + \frac{\gamma_{rr}}{1-\gamma_{rr}} \zeta^2  + \eta_r^2,
    \end{split} \label{DiagErrGlobCorr_Ones_unsimplified}
\end{align}

With the aid of our custom Mathematica package, we calculate the traces and contractions in these expressions and expand them to leading order in $M$, finding:

\begin{align}
    \langle E_{rr} \rangle_{\mathcal{D}, \bm{w}^*} (\rho = 0) =& \frac{1}{1-\gamma_{rr}} \left(s (1-c) \left( 1 -\frac{(1-c) s \hat{q}_r \nu _r \left(\hat{q}_r (s(1-c) + \omega^2 )+2 \nu _r\right)}{\left(\hat{q}_r (s (1-c)+\omega^2 ) + \nu _r\right){}^2}\right) \right)+ \frac{\gamma_{rr} \zeta^2  + \eta_r^2}{1-\gamma_{rr}}  \\
   \langle E_{rr} \rangle_{\mathcal{D}, \bm{w}^*} (\rho = 1) =& \frac{1}{1-\gamma_{rr}} \left(\frac{s(1-c)(1-\nu_{rr})+\omega^2}{\nu_{rr}} \right) + \frac{\gamma_{rr}\zeta^2  + \eta_r^2}{1-\gamma_{rr}} 
\end{align}

In the ``ridgeless'' limit where $\lambda \to 0$, we obtain:

\begin{equation}
    \gamma_{rr} = \frac{4 \alpha \nu_{rr}}{(\alpha + \nu_{rr} + |\alpha - \nu_{rr}|)^2}
\end{equation}

\begin{align}
    \langle E_{rr} (\rho = 0) \rangle_{\mathcal{D}, \bm{w}^*} &=\left\{ \begin{array}{lr}
        \frac{s (1-c) \nu_{rr}}{\nu_{rr}-\alpha} \left( 1 + \frac{s \alpha (1-c) (\alpha-2 \nu_{rr})}{\nu_{rr} \left[s(1-c) + \omega^2 \right]} \right) + \frac{\alpha \zeta^2 + \nu_{rr} \eta_r^2}{\nu_{rr}-\alpha} , & \text{if } \alpha < \nu_{rr}\\
        \frac{s (1-c) \alpha}{\alpha-\nu_{rr}}\left( 1 - \frac{s(1-c)\nu_{rr}}{s(1-c) + \omega^2} \right) + \frac{\nu_{rr} \zeta^2+ \alpha \eta_r^2}{\alpha-\nu_{rr}} , & \text{if } \alpha > \nu_{rr}
    \end{array}\right\} \quad (\lambda \to 0) \label{rho0DiagEquiCorrError}\\
    \langle E_{rr} (\rho = 1) \rangle_{\mathcal{D}, \bm{w}^*} &=\left\{ \begin{array}{lr}
        \frac{\nu_{rr}}{\nu_{rr}-\alpha} \left(\frac{s(1-c)(1-\nu_{rr})+\omega^2}{\nu_{rr}} \right) + \frac{\alpha \zeta^2 + \nu_{rr} \eta_r^2}{\nu_{rr}-\alpha}, & \text{if } \alpha < \nu_{rr}\\
        \frac{\alpha}{\alpha-\nu_{rr}} \left(\frac{s(1-c)(1-\nu_{rr})+\omega^2}{\nu_{rr}} \right) + \frac{\nu_{rr} \zeta^2+ \alpha \eta_r^2}{\alpha-\nu_{rr}}, & \text{if } \alpha > \nu_{rr}
    \end{array}\right\} \quad (\lambda \to 0) \label{rho1DiagEquiCorrError}
\end{align}

\subsection{Off-Diagonal Terms}

In this section, we calculate the off-diagonal error terms $E_{rr'}$ for $r \neq r'$, again making use of our custom Mathematica package to simplify contractions of block matrices of the prescribed form.  By the isotropic nature of the equicorrelated data model, $E_{rr'}$ can only depend on the subsampling scheme through $\nu_{rr}$, $\nu_{r'r'}$, and $\nu_{rr'}$.  We can thus, without loss of generality, write:

\begin{equation}
    \bm{A}_r = \begin{pmatrix}
    \bm{I}_{n_r \times n_r} & \bm{0}_{n_r \times n_{r'} } & \bm{0}_{n_r \times n_s} & \bm{0}_{n_r \times l}\\
    \bm{0}_{n_s \times n_r} & \bm{0}_{n_s \times n_{r'} } & \bm{I}_{n_s \times n_s} & \bm{0}_{n_s \times l}
    \end{pmatrix} \in \mathbb{R}^{N_r \times M}
\end{equation}

\begin{equation}
    \bm{A}_{r'} = \begin{pmatrix}
    \bm{0}_{n_{r'} \times n_{r}} & \bm{I}_{n_{r'} \times n_{r'}} & \bm{0}_{n_{r'} \times n_s} & \bm{0}_{n_{r'} \times l}\\
    \bm{0}_{n_s \times n_r} & \bm{0}_{n_s \times n_{r'} } & \bm{I}_{n_s \times n_s} & \bm{0}_{n_s \times l}
    \end{pmatrix} \in \mathbb{R}^{N_{r'} \times M}
\end{equation}

where we have defined $n_s$ to be the number of features shared between the readouts, $n_r = N_r - n_s$ and $n_{r'} = N_{r'} - n_s$ and the count of remaining features $l = M - n_r - n_{r'} - n_s$.

Then, to leading order in $M$, we find:

\begin{align}
    \gamma_{rr'} = \frac{\alpha \nu_{rr'} (s(1-c) + \omega^2)^2}{(\lambda+q_r)(\lambda+q_{r'})\left(\nu_{rr} + (s(1-c)+\omega^2)\hat{q}_r\right)\left(\nu_{r'r'} + (s(1-c)+\omega^2)\hat{q}_{r'}\right)} 
\end{align}

Averaging $E_{rr'}$ over $\bm{w}_0^* \sim \mathcal{N}(0, \bm{I}_M)$, we get:

\begin{align}
    \begin{split}
        \langle E_{rr'}(\mathcal{D}) \rangle_{\mathcal{D}, \bm{w}^*}(\rho = 0) = & \frac{\gamma_{rr'}}{1-\gamma_{rr'}} \zeta^2 +   \frac{1}{1-\gamma_{rr'}} \left( \frac{1}{M} \tr \left[ \mathbb{P}_\perp \bm{\Sigma}_s \mathbb{P}_\perp \right]  \right)\\
        & - \frac{1}{M(1-\gamma_{rr'})}  \tr \left[ \mathbb{P}_\perp \bm{\Sigma}_s   \left(\frac{1}{\nu_{rr}}\hat{q}_r \bm{A}_r^\top \bm{G}_{r}^{-1}\bm{A}_r + \frac{1}{\nu_{r'r'}} \hat{q}_{r'} \bm{A}_{r'}^\top\bm{G}_{r'}^{-1}\bm{A}_{r'}\right) \bm{\Sigma}_s \mathbb{P}_\perp \right]\\
        & + \frac{\hat{q}_r \hat{q}_{r'}}{M(1-\gamma_{rr'})}  \frac{1}{\sqrt{\nu_{rr} \nu_{r'r'}}}   \tr \left[ \mathbb{P}_\perp \bm{\Sigma}_s \bm{A}_r^\top \bm{G}_{r}^{-1} \tilde{\bm{\Sigma}}_{rr'}\bm{G}_{r'}^{-1} \bm{A}_{r'} \bm{\Sigma}_s \mathbb{P}_\perp \right],
    \end{split}
\end{align}

\begin{align}
    \begin{split}
        \langle E_{rr'}(\mathcal{D}) \rangle_{\mathcal{D}, \bm{w}^*}(\rho = 1) = & \frac{\gamma_{rr'}}{1-\gamma_{rr'}} \zeta^2 +   \frac{1}{M(1-\gamma_{rr'})} \left(\bm{1}_M^\top \bm{\Sigma}_s \bm{1}_M \right)\\
        & - \frac{1}{M(1-\gamma_{rr'})}  \bm{1}_M^\top \bm{\Sigma}_s   \left(\frac{1}{\nu_{rr}}\hat{q}_r \bm{A}_r^\top \bm{G}_{r}^{-1}\bm{A}_r + \frac{1}{\nu_{r'r'}} \hat{q}_{r'} \bm{A}_{r'}^\top\bm{G}_{r'}^{-1}\bm{A}_{r'}\right) \bm{\Sigma}_s \bm{1}_M^\top\\
        & + \frac{\hat{q}_r \hat{q}_{r'}}{M(1-\gamma_{rr'})}  \frac{1}{\sqrt{\nu_{rr} \nu_{r'r'}}} \bm{1}_M^\top \bm{\Sigma}_s \bm{A}_r^\top \bm{G}_{r}^{-1} \tilde{\bm{\Sigma}}_{rr'}\bm{G}_{r'}^{-1} \bm{A}_{r'} \bm{\Sigma}_s \bm{1}_M
    \end{split}
\end{align}

Calculating these contractions and traces and expanding to leading order in $M$, we get:

\begin{equation}\label{rho0OffDiagEquiCorrError}
\begin{split}
     \langle E_{rr'}(\mathcal{D}) \rangle_{\mathcal{D}, \bm{w}^*}(\rho = 0) = \frac{s (1-c)}{1-\gamma_{rr'}} &\left( 1 - \frac{s(1-c) \nu_{rr} \hat{q}_r}{\nu_{rr} + (s(1-c)+\omega^2)\hat{q}_r} - \frac{s(1-c) \nu_{r'r'} \hat{q}_{r'}}{\nu_{r'r'} + (s(1-c)+\omega^2)\hat{q}_{r'}}\right.\\
     &\left. + \frac{s(1-c)(s(1-c)+\omega^2) \nu_{rr'} \hat{q}_r \hat{q}_{r'}}{(\nu_{rr} + (s(1-c)+\omega^2)\hat{q}_r)(\nu_{r'r'} + (s(1-c)+\omega^2)\hat{q}_{r'})} \right)\\
     &+ \frac{\gamma_{rr'}}{1-\gamma_{rr'}} \zeta^2
\end{split}
\end{equation}

\begin{equation}\label{rho1OffDiagEquiCorrError}
\begin{split}
     \langle E_{rr'}(\mathcal{D}) \rangle_{\mathcal{D}, \bm{w}^*}(\rho = 1) = \frac{1}{1-\gamma_{rr'}} &\left( \frac{s(1-c)(\nu_{rr'}-\nu_{rr} \nu_{r'r'}) + \omega^2 \nu_{rr'}}{\nu_{rr} \nu_{r'r'}} \right) + \frac{\gamma_{rr'}}{1-\gamma_{rr'}} \zeta^2
\end{split}
\end{equation}

Taking $\lambda \rightarrow 0$ we get the ridgeless limit:

\begin{equation}
    \gamma_{rr'} \to \frac{4 \alpha \nu_{rr'}}{(\alpha + \nu_{rr} + |\alpha - \nu_{rr}|)(\alpha + \nu_{r'r'} + |\alpha - \nu_{r'r'}|)} \qquad (\lambda \to 0)
\end{equation}

\begin{equation}
     \langle E_{rr'}(\mathcal{D}) \rangle_{\mathcal{D}, \bm{w}^*}(\rho = 0) = \frac{1}{1-\gamma_{rr'}} F_0(\alpha) + \frac{\gamma_{rr'}}{1-\gamma_{rr'}} \zeta^2 \quad (r \neq r')
\end{equation}
where
\begin{equation}
     F_0(\alpha) \equiv \left\{ \begin{array}{lr}
        \frac{(c-1) s \left(\nu _r \nu _{r'} ((2 \alpha -1) (c-1) s+\omega^2 )-\alpha ^2 (c-1) s \nu _{r
   r'}\right)}{\nu _r ((c-1) s-\omega^2 ) \nu _{r'}}, & \text{if } \alpha \leq \nu_{rr} \leq \nu_{r'r'}\\
        \frac{(c-1) s \left(\nu _{r'} \left((c-1) s \nu _r+(\alpha -1) (c-1) s+\omega^2 \right)-\alpha  (c-1) s \nu
   _{r r'}\right)}{((c-1) s-\omega^2 ) \nu _{r'}}, & \text{if }  \nu_{rr} \leq \alpha \leq \nu_{r'r'}\\
        \frac{(c-1) s \left((c-1) s \nu _{r'}-c s \nu _{r r'}+(c-1) s \nu _r-c s+s \nu _{r r'}+s+\omega^2
   \right)}{(c-1) s-\omega^2 }, & \text{if }  \nu_{rr} \leq \nu_{r'r'} \leq \alpha
    \end{array}\right\} 
\end{equation}

\begin{equation}
     \langle E_{rr'}(\mathcal{D}) \rangle_{\mathcal{D}, \bm{w}^*}(\rho = 1) = \frac{1}{1-\gamma_{rr'}} \left( \frac{s(1-c)(\nu_{rr'}-\nu_{rr} \nu_{r'r'}) + \omega^2 \nu_{rr'}}{\nu_{rr} \nu_{r'r'}} \right) + \frac{\gamma_{rr'}}{1-\gamma_{rr'}} \zeta^2 \qquad (\lambda \to 0)
\end{equation}

\subsection{Optimal Regularization} \label{OptRegSection}

Here, we derive the ``locally'' optimal regularization which minimizes the prediction error of the ensemble members independently. Equivalently we derive the optimal regularization for a single readout ($k=1$) or the regularization which minimizes the diagonal terms $E_{rr}$ of the generalization errror.  By differentiating $E_{rr}$ with respect to the regularization $\lambda$, one can show that for the optimal regularization, we will have

\begin{align}
    \frac{1}{(1-\gamma_{rr})} \frac{dS_r}{d\lambda} \left[\frac{a^2 \nu_{rr}}{\alpha(1-\gamma_{rr})}S_r \left((1-\rho^2)I_{rr}^0 + \rho^2 I_{rr}^1 + \zeta^2 + \eta_r^2\right) + (1-\rho^2) s^2(1-c^2) \nu_{rr} (aS_r-1)\right] = 0
\end{align}

It is easy to show that $\frac{dS_r}{d\lambda}$ cannot be equal to $0$.  Setting the term in brackets to zero and solving for $\lambda$, we find with the aid of the computer algebra system Mathematica \cite{Mathematica}:

\begin{equation}\label{optreg_local}
    \lambda^\star = a \left(\frac{a \left(\nu_{rr}(\zeta^2+\eta^2) + a \rho^2 + s(1-c) \nu_{rr} (1-2\rho^2) \right)}{(1-c)^2 \nu_{rr}^2 \left(1-\rho ^2\right)
   s^2}-1\right), \qquad (0<c\leq 1)
\end{equation} 

In the limiting case of isotropic data ($c = 0$), the optimal regularization reduces to:

\begin{equation}
    \lambda^\star  = \frac{1}{\nu_{rr}}(\zeta^2 + \eta^2) + s\left(\frac{1}{\nu_{rr}}-1\right), \qquad (c = 0)
\end{equation}

We note that while generalization error is discontinuous at $c=0$, this result can be quickly obtained from eq. \ref{optreg_local} by noticing that the formula for the generalization error with $0 <c \leq 1$ reduces to the formula for generalization error with $c = 0$ when we set $\rho= c =0$ (when $c = 0$, the generalization error does not depend on $\rho$ because there is no special direction in the data).

\subsection{Homogeneous Ensembling with Resource Constraints} \label{HomEnsMathSection}

We make further simplifications in the special case where $\lambda_r = \lambda$, $\eta_r = \eta$, $\nu_{rr} = \frac{1}{k}$ for all $r = 1,\dots k$.  For simplicity, we also set $\nu_{rr'}=0$ for all $r \neq r'$ so that ensemble members sample mutually exclusive subsets of the features.   We will consider both the ridgeless limit $\lambda \to 0$ and the case of ``locally optimal'' regularization $\lambda = \lambda^*$ (see eq. \ref{optreg_local}). Concretely, we show here that under these special cases, generalization error has the forms given in eqs. \ref{kstar_ridgeless}, \ref{kstar_opt}.

We start by analyzing the saddle-point equations \ref{Qsp_decoupled}, \ref{Qhatsp_decoupled}.  Note that in this special case the saddle point equations will be identical for all $r = 1, \dots, k$.  We will therefore suppress the $r$ index.  Solving this quadratic system of equations explicitly, we encounter the following radical, which we assign variable name $x$:

\begin{equation}
    x \equiv \sqrt{(a \alpha -a \nu +\lambda  \nu )^2+4 a \lambda  \nu ^2}
\end{equation}
In order to simplify this radical to begin extracting the factor of $s(1-c)$ that appears in eqs. \ref{kstar_ridgeless}, \ref{kstar_opt}, we define a reduced regularization:
\begin{equation}
    \Lambda \equiv \frac{\lambda}{s(1-c)}
\end{equation}
And substitute $\Lambda$, $H$, $W$, and $Z$ (recall eq. \ref{ReducesNoises}) for $\lambda$, $\eta$, $\omega$, and $\zeta$, giving
\begin{align}
    x &= s(1-c)\mathcal{X}(\alpha, \nu, W, \Lambda)\\
    \text{ where } \mathcal{X}(\alpha, \nu, W, \Lambda) &\equiv  \sqrt{(\alpha  (W+1)+\nu  (\Lambda -W-1))^2+4 \Lambda  \nu ^2 (W+1)} 
\end{align}
making the same substitutions, we can then write 
\begin{align}
    \hat{q} &= \left[s(1-c)\right]^{-1} \mathcal{Q}\\
    \text{where} \quad \mathcal{Q} & \equiv \frac{2 \alpha  \nu }{ (-\alpha  (W+1)+\nu  (\Lambda +W+1)+X)}
\end{align}
Continuing to make substitutions in this manner, we arrve at:
\begin{align}
    S &= [s(1-c)]^{-1} \mathcal{S} \label{SReduced} \\
    \text{where} \quad \mathcal{S}  &\equiv \frac{\mathcal{Q}}{\nu + \mathcal{Q}(1+W)}
\end{align}

Finally, we may express the generalization error in a reasonably compact form.  In the special case at hand, the generalization error is written:

\begin{equation}
    E_k = \frac{1}{k}E_{rr}(\nu_{rr} = \frac{1}{k}) + \frac{k-1}{k} E_{rr'}(\nu_{rr'} = \frac{1}{k} \delta_{rr'})
\end{equation}

Substituting \ref{SReduced} for S in eq. \ref{EquiCorrGeneralizationError}, and making further substitutions as necessary, we arrive at:

\begin{align}
    E_{rr}(\nu_{rr} = \frac{1}{k}) = s(1-c) \mathcal{E}_{rr} (k, \alpha, \rho, \Lambda, H, W, Z) \label{ErrReduced}
\end{align}
where
\begin{align}
     \mathcal{E}_{rr} &= \frac{N_1 + N_2}{D}\\
     N_1 &= \alpha  (H+1) k+\mathcal{S} (\mathcal{S} (W+1) (\alpha +W Z+Z)-2 \alpha )\\
     N_2 &= \alpha  \rho ^2 (k-\mathcal{S}) (W (k+\mathcal{S})+k+\mathcal{S}-2)\\
     D &= \alpha  k-\mathcal{S}^2 (W+1)^2
\end{align}

We also obtain
\begin{equation}
    E_{rr'}(\nu_{rr'} = \frac{1}{k} \delta_{rr'}) = s(1-c) \mathcal{E}_{rr'} (k, \alpha, \rho, \Lambda, H, W, Z) \label{ErrpReduced}
\end{equation}
where
\begin{align}
    \mathcal{E}_{rr'} &= \frac{2 \left(\rho ^2-1\right) \mathcal{S}}{k}-2 \rho ^2+1
\end{align}

Combining these, we obtain the error of the ensemble as:
\begin{align}
     E_k & = s(1-c) \mathcal{E}(k, \alpha, \rho, \Lambda, H, W, Z) \label{EkReduced}
\end{align}
where:
\begin{align}
    \mathcal{E} = \frac{1}{k}\mathcal{E}_{rr} + \frac{k-1}{k}\mathcal{E}_{rr'}
\end{align}

These result are derived in the Mathematica notebook titled ``EquiCorrParameterReduction.nb'' included with the available code.  It follows from eq. \ref{EkReduced} that when $\lambda = 0$ error can be written in the form of eq. \ref{HomEnsErr_Ridgeless}.  It follows from equations \ref{EkReduced}, \ref{ErrpReduced} that at ``locally optimal'' regularization $\lambda^*$, error can be written in the form of eq. \ref{HomEnsErr_LocOpt}.  To see this, note that the reduced regularization $\Lambda^*$ which minimizes $E_{rr}$ will only depend on the other arguments of $\mathcal{E}_{rr}$.  Full expresions for the generalization error in the case $\lambda \to 0$ and $\lambda = \lambda^*$ can be found in the mathematica notebooks ``EquiCorrPhaseDiagram\_ZeroReg.nb'' and ``EquiCorrPhaseDiagram\_LocalReg.nb'' included with the available code.  These equations are long and difficult to interperet -- nor are they directly used in our code -- and so are omitted here.

\subsubsection{The Intermediate to Noise-Dominated Transition} \label{PhaseTransitionAnalytics}

The transition between the intermediate regime where $1<k^*<\infty$ and the noise-dominated regime where $k^* = \infty$ can be studied analytically relatively painlessly in the ridgeless limit $\lambda \to 0$.  The strategy we employ to determine this phase boundary is to examine the large-$k$ asymptotic expansion of $E_k$ to determine whether the error approaches its asymptotic value from below or above.  If $E_k$ approaches $E_{\infty}$ from below, then $k^*$ must be finite.  If $E_k$ approaches $E_\infty$ from above, then $k^*$ may be infinite -- however, there is still the possibility of $k^*<\infty$ if $E_k$ is non-monotonic in $k$.  In practice, we check the values of $E_k$ for $k = 1,2, \dots , 100$ and $k=\infty$.

Setting $\Lambda = 0$ and expanding $E_{k}$ around $k = \infty$, we find:

\begin{equation}
    \frac{E_k}{s(1-c)} = 1-\rho^2 + \rho^2 W + \frac{\left( (1+W)^2 \rho^2 + \alpha (-2 + H + HW + 2 \rho^2) \right)}{(1+W)\alpha k} + \mathcal{O} \left( \frac{1}{k^2}\right)
\end{equation}

Setting the coefficient of $k^{-1}$ to zero, we find the phase boundary as:

\begin{equation}
    \alpha = \frac{(1+W)^2 \rho^2}{2 (1-\rho^2) - H(1+W)} \label{AnalyticalBoundaryZeroReg}
\end{equation}

This equation explains the shapes of the boundaries between the intermediate and noise-dominated regions in the phase diagrams of fig. \ref{fig4} with $\lambda = 0$ (see black dotted lines in panels b, c, d).  

An analytical formula for the boundary between the intermediate and noise-dominated regimes at locally optimal regularization $\lambda = \lambda^*$ cannot be easily obtained. To understand why, we can asses the large-$k$ asymptotic expansion of the generalization error at locally optimal regularization:

 \begin{equation}
     E_k (\lambda = \lambda^*) = s(1-c) \left[ 1-\rho^2 + \rho^2 W + \frac{H}{k} \right] + \mathcal{O}\left(\frac{1}{k^2}\right)
 \end{equation}

 This shows that at large $k$, when $H>0$, error always approaches its asymptotic value from above (recall that when $H=0$ we always have $k^*=1$, so that there is no phase boundary unless $H > 0$). Thus, determining the phase boundary requires determining the value of $k$ which minimizes $E_k$, which is not analytically tractable.

\subsection{Infinite Data Limit} \label{InfiniteDataSection}

In this section we consider the behavior of generalization error in the equicorrelated data model as $\alpha \rightarrow \infty$ while keeping the $\lambda \sim \mathcal{O}(1)$.  For simplicity, we assume $\nu_{rr'} = 0$ for $r \neq r'$, isotropic features ($c=0$), no feature noise ($\omega = 0$) and uniform readout noise $\eta_r = \eta$ as in main text Fig. 3.  This limit corresponds to data-rich learning, where the number of training examples is large relative to the number of model parameters.  In this case, the saddle point equations reduce to:

\begin{align}
    \hat{q}_r & \to \frac{\alpha}{\lambda}\\
    q_r & \to \frac{\nu_{rr} \lambda}{\alpha}
\end{align}
In this limit, we find that $\gamma_{rr'} \to 0$. 
 Using this, we can simplify the generalization error as follows:
\begin{equation}
    E_g = \frac{1}{k^2} \sum_{rr' = 1}^k E_{rr'} = s \left[ 1- \left(2-\frac{1}{k} \right) \left(\frac{1}{k} \sum_{r=1}^k \nu_{rr} \right) \right] + \frac{\eta^2}{k}
\end{equation}
Interestingly, we find that the readout error in this case depends on the subsampling fractions $\nu_{rr}$ only through their mean.  Therefore, with infinite data, there will be no distinction between homogeneous and heterogeneous subsampling.

\section{Theoretical Learning Curves and Optimal Subsampling Phase Diagrams} \label{ExtraPlots}

Here, we provide additional learning curves and phase diagrams of $k^*$ such as those in Fig. \ref{fig4}a,b,c,d, exploring more parameter values for the task-model alignment $\rho$ and the Reduced noises $H$, $W$, and $Z$.  Generalization errors are calculated for a homogeneous ensemble of $k$ linear regressors, as described in sections \ref{ESTradeoffSection}, \ref{HomEnsMathSection}.  We also show diagrams of generalization error $E_{k^*}$.

\begin{figure}
    \centering
    \includegraphics[width = \textwidth]{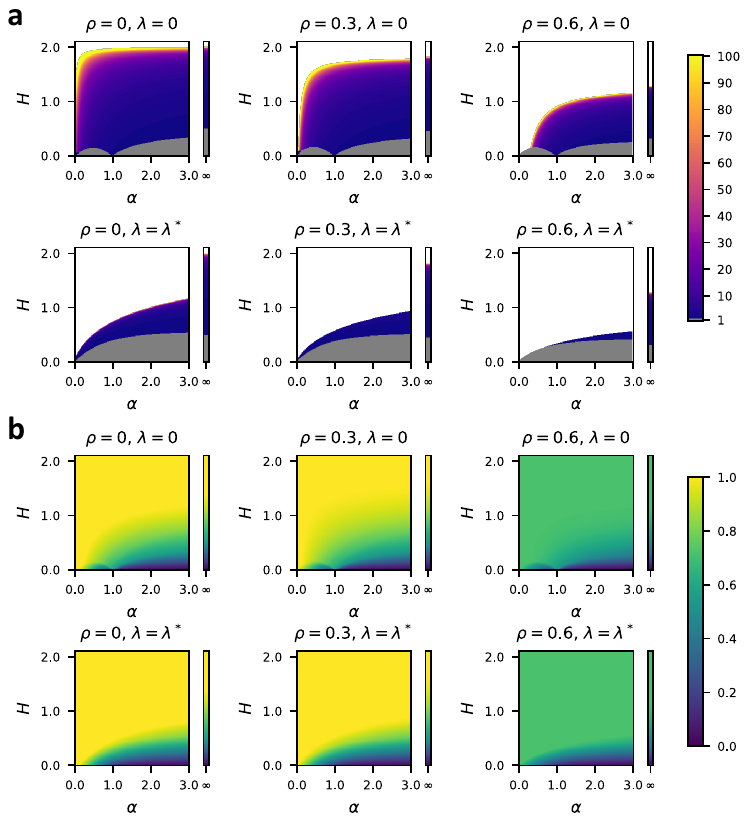}
    \caption{ (a) Optimal ensemble size $k^*$ (eqs. \ref{kstar_ridgeless}, \ref{kstar_opt}) in the parameter space of sample size $\alpha$ and reduced readout noise scale $H$ setting $W=Z=0$.  Grey indicates $k^* = 1$ and white indicates $k^* \to \infty$, with intermediate values given by the colorbar.  Appended vertical bars show $\alpha \to \infty$. $\rho$ and $\lambda$ indicated in panel titles.  $\lambda = \lambda^*$ denotes the `locally optimal'' regularization (see section \ref{OptRegSection}) (b)  Optimal generalization error $E_{k^*}$ for the same parameter values in (a).}
    \label{PhaseDiagramsHFig}
\end{figure}

\begin{figure}
    \centering
    \includegraphics[width = \textwidth]{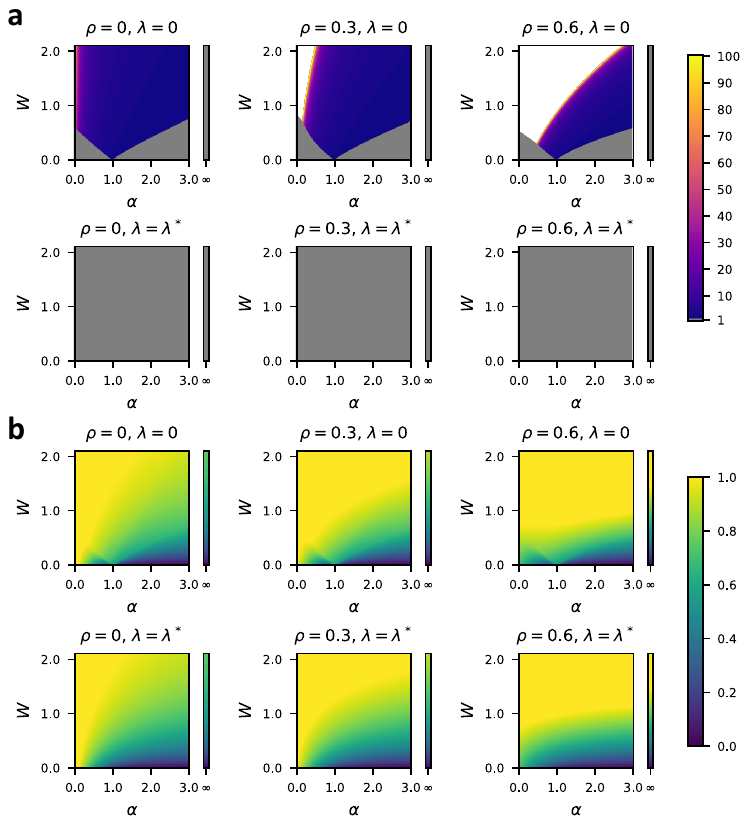}
    \caption{ (a) Optimal ensemble size $k^*$ (eqs. \ref{kstar_ridgeless}, \ref{kstar_opt}) in the parameter space of sample size $\alpha$ and reduced feature noise scale $W$ setting $H=Z=0$.  Grey indicates $k^* = 1$ and white indicates $k^* \to \infty$, with intermediate values given by the colorbar.  Appended vertical bars show $\alpha \to \infty$. $\rho$ and $\lambda$ indicated in panel titles.  $\lambda = \lambda^*$ denotes the `locally optimal'' regularization (see section \ref{OptRegSection}) (b)  Optimal generalization error $E_{k^*}$ for the same parameter values in (a).}
    \label{PhaseDiagramsWFig}
\end{figure}

\begin{figure}
    \centering
    \includegraphics[width = \textwidth]{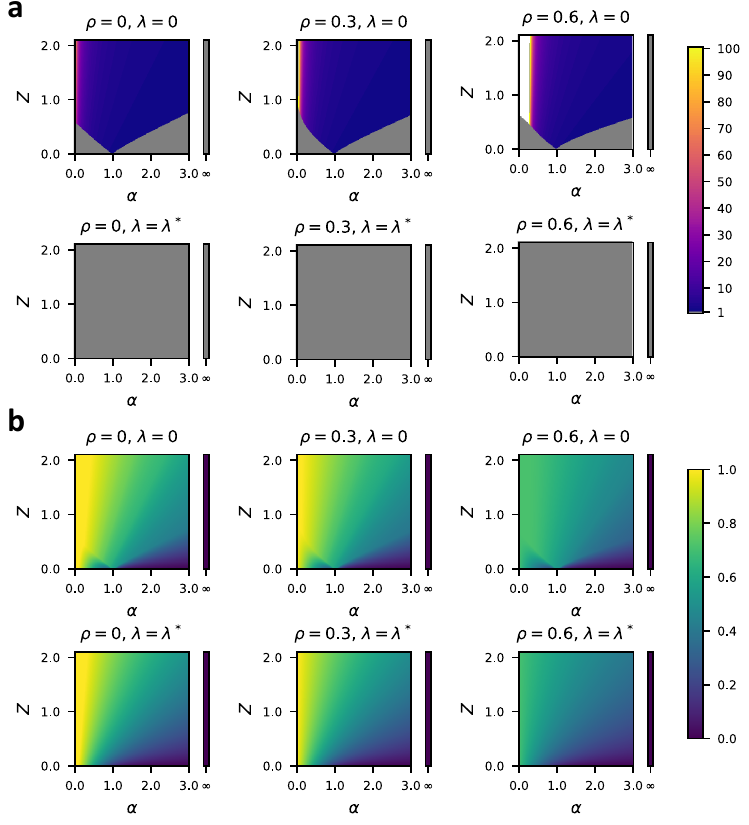}
    \caption{ (a) Optimal ensemble size $k^*$ (eqs. \ref{kstar_ridgeless}, \ref{kstar_opt}) in the parameter space of sample size $\alpha$ and reduced label noise scale $Z$ setting $H=W=0$.  Grey indicates $k^* = 1$ and white indicates $k^* \to \infty$, with intermediate values given by the colorbar.  Appended vertical bars show $\alpha \to \infty$. $\rho$ and $\lambda$ indicated in panel titles.  $\lambda = \lambda^*$ denotes the `locally optimal'' regularization (see section \ref{OptRegSection})  (b)  Optimal generalization error $E_{k^*}$ for the same parameter values in (a).}
    \label{PhaseDiagramsZFig}
\end{figure}

\begin{figure}
    \centering
    \includegraphics[width = \textwidth]{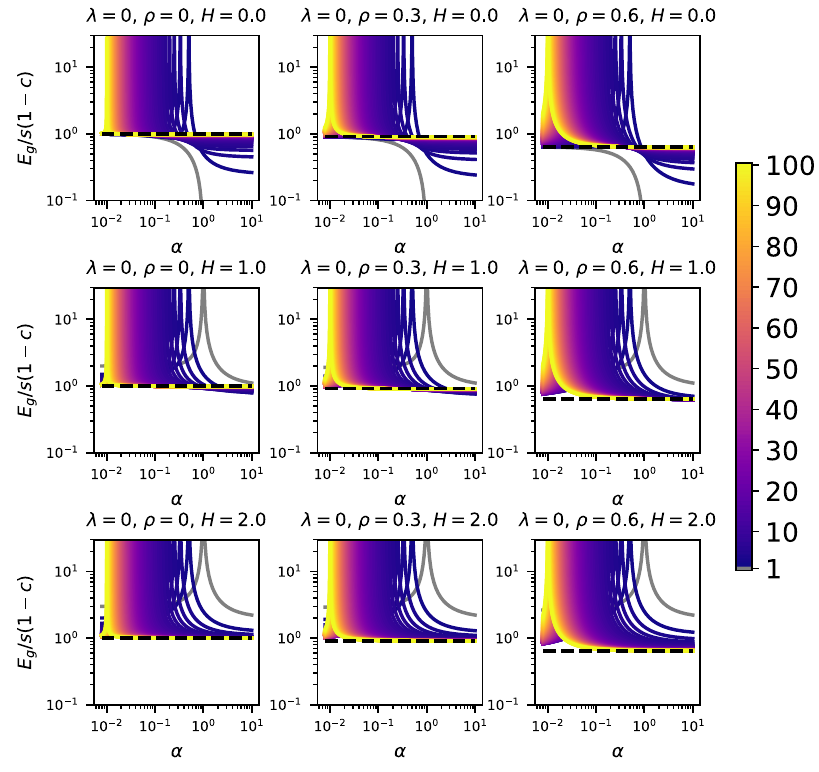}
    \caption{Reduced generalization errors $E_g/s(1-c)$ with $\lambda = 0$ and $W = Z = 0$ (given by eq. \ref{HomEnsErr_Ridgeless}) for linear ridge ensembles of varying size $k$.   $\rho$ and $H $ values indicated above plots. Grey lines indicate $k=1$, dashed black lines $k \to \infty$, and intermediate $k$ values by the colorbar.}
    \label{LearningCurvesHLam0}
\end{figure}

\begin{figure}
    \centering
    \includegraphics[width = \textwidth]{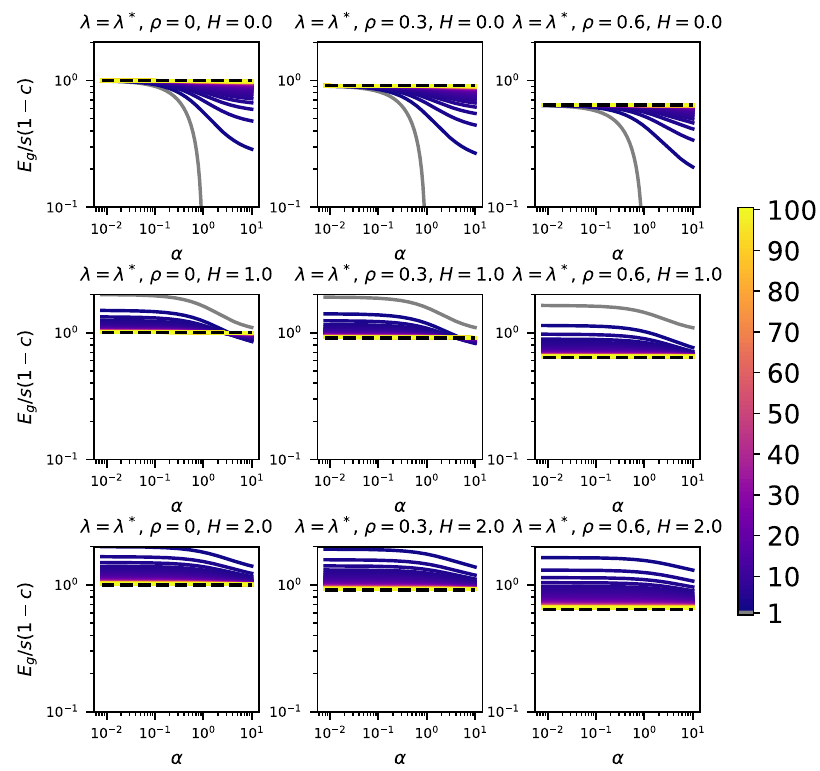}
    \caption{ Reduced generalization errors $E_g/s(1-c)$ with $\lambda = \lambda^*$ and $W = Z = 0$ (given by eq. \ref{HomEnsErr_LocOpt}) for linear ridge ensembles of varying size $k$.   $\rho$ and $H $ values indicated above plots. Grey lines indicate $k=1$, dashed black lines $k \to \infty$, and intermediate $k$ values by the colorbar.}
    \label{LearningCurvesHLamOpt}
\end{figure}

\begin{figure}
    \centering
    \includegraphics[width = \textwidth]{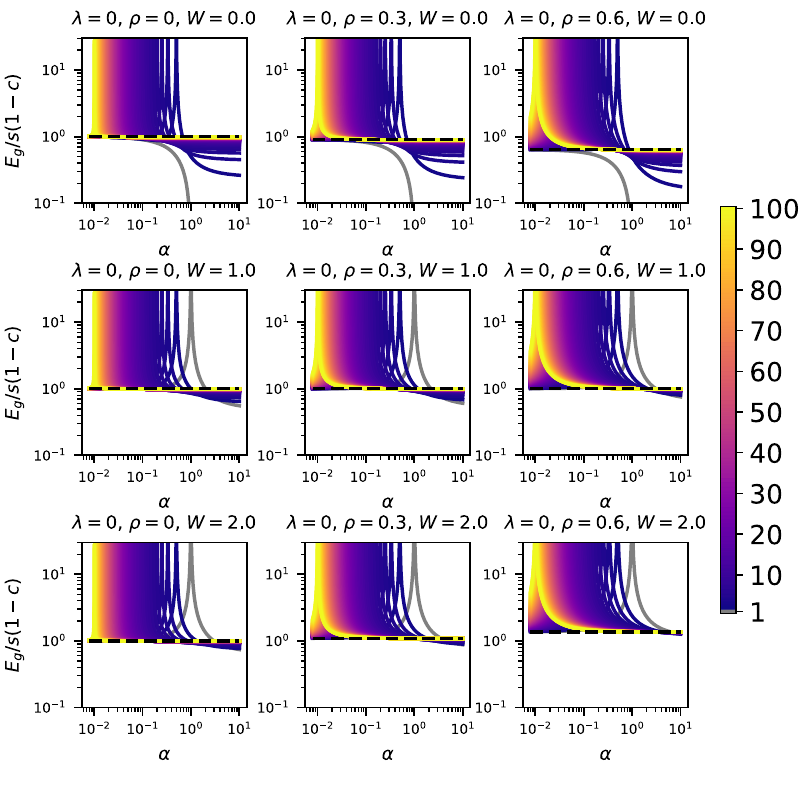}
    \caption{ Reduced generalization errors $E_g/s(1-c)$ with $\lambda = 0$ and $H = Z = 0$ (given by eq. \ref{HomEnsErr_Ridgeless}) for linear ridge ensembles of varying size $k$.   $\rho$ and $W$ values indicated above plots.  Grey lines indicate $k=1$, dashed black lines $k \to \infty$, and intermediate $k$ values by the colorbar.}
    \label{LearningCurvesWLam0}
\end{figure}

\begin{figure}
    \centering
    \includegraphics[width = \textwidth]{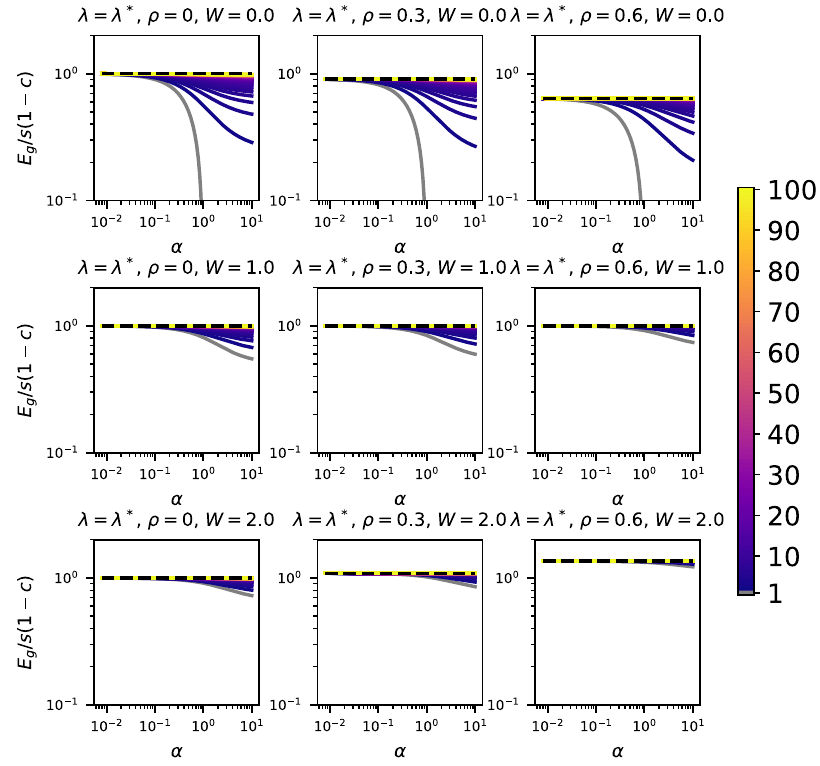}
    \caption{Reduced generalization errors $E_g/s(1-c)$ with $\lambda = \lambda^*$ and $H = Z = 0$ (given by eq. \ref{HomEnsErr_LocOpt}) for linear ridge ensembles of varying size $k$.   $\rho$ and $W$ values indicated above plots. Grey lines indicate $k=1$, dashed black lines $k \to \infty$, and intermediate $k$ values by the colorbar.}
    \label{LearningCurvesWLamOpt}
\end{figure}

\begin{figure}
    \centering
    \includegraphics[width = \textwidth]{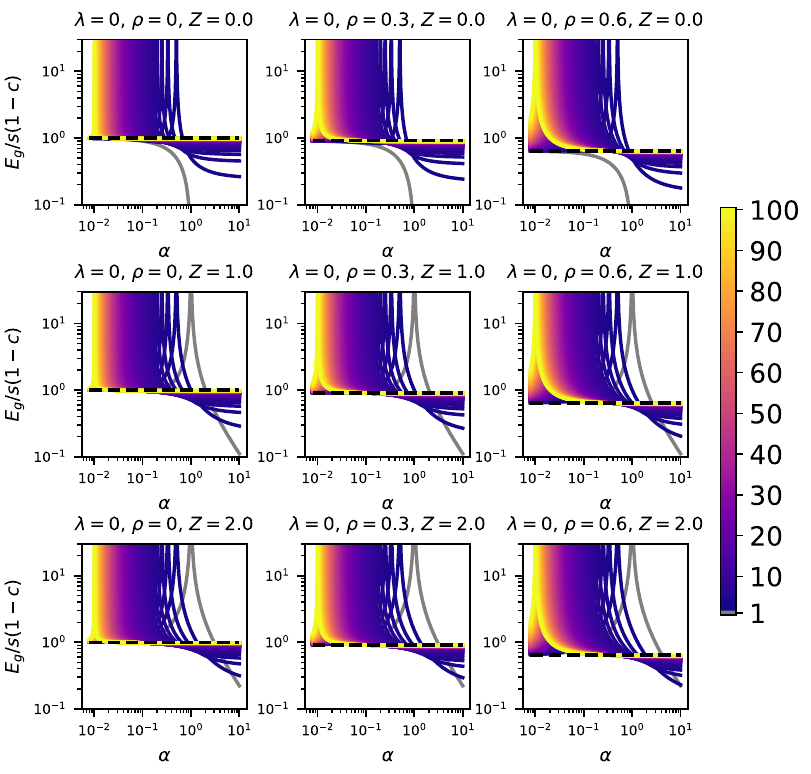}
    \caption{Reduced generalization errors $E_g/s(1-c)$ with $\lambda = 0$ and $H = W = 0$ (given by eq. \ref{HomEnsErr_Ridgeless}) for linear ridge ensembles of varying size $k$.   $\rho$ and $Z$ values indicated above plots.  Grey lines indicate $k=1$, dashed black lines $k \to \infty$, and intermediate $k$ values by the colorbar}
    \label{LearningCurvesZLam0}
\end{figure}

\begin{figure}
    \centering
    \includegraphics[width = \textwidth]{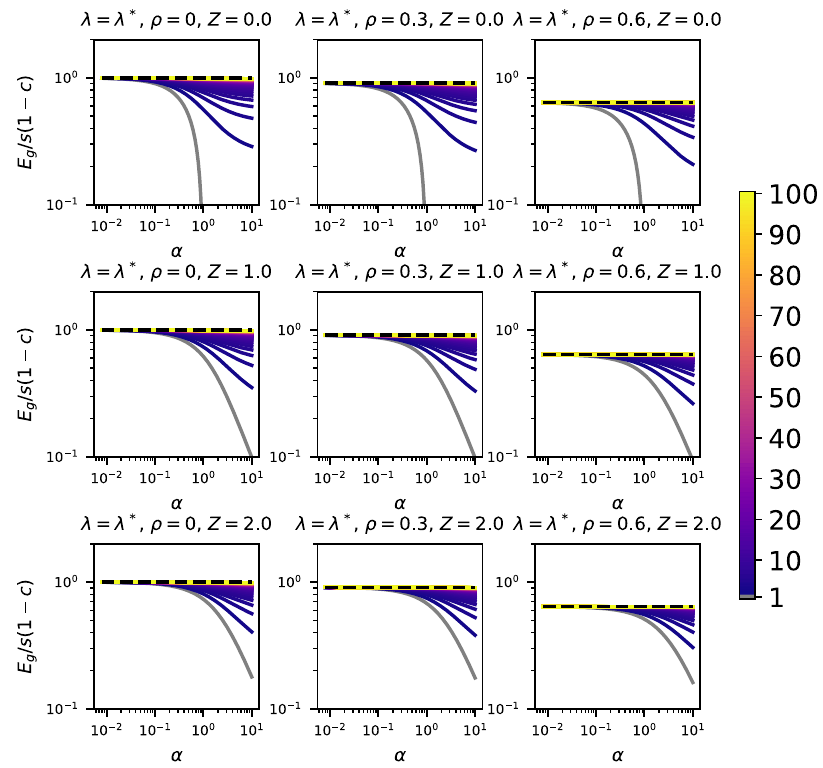}
    \caption{Reduced generalization errors $E_g/s(1-c)$ with $\lambda = \lambda^*$ and $H = W = 0$ (given by eq. \ref{HomEnsErr_LocOpt}) for linear ridge ensembles of varying size $k$.   $\rho$ and $Z$ values indicated above plots. Grey lines indicate $k=1$, dashed black lines $k \to \infty$, and intermediate $k$ values by the colorbar.}
    \label{LearningCurvesZLamOpt}
\end{figure}


\end{document}